  \setlist{leftmargin=*}
\def\Ex{\bE}
\def\hbeta{\widehat{\beta}}
\def\hSigma{\widehat{\Sigma}}
\def\OLS{\text{OLS}}
\def\reals{\bR}
\def\Ex{\bE}
\def\hbeta{\widehat{\beta}}
\def\hSigma{\widehat{\Sigma}}
\def\reals{\bR}
\newcommand{\hSigmap}{{\hSigma^\dagger}}
\def\ccB{\mathscr{B}}
\def\ccC{\mathscr{C}}
\def\ccE{\mathscr{E}}
\def\ccM{\mathscr{M}}
\def\ccV{\mathscr{V}}
\def\tr{\text{Tr}}
\def\deq{\overset{d}{=}}
\def\ridge{\rm ridge}
\newcommand{\kibitz}[2]{\ifnum\Comments=1\textcolor{#1}{#2}\fi}
\definecolor{darkgreen}{rgb}{0,0.5,0}
\definecolor{purple}{rgb}{1,0,1}
\title{How does overparametrization affect performance on minority groups?}
\author{
Subha Maity \\
	Department of Statistics\\
	University of Michigan\\
	\texttt{smaity@umich.edu}\\
	\And
	Saptarshi Roy \\
	Department of Statistics\\
	University of Michigan\\
	\texttt{roysapta@umich.edu}\\
	\And
	Songkai Xue \\
	Department of Statistics\\
	University of Michigan\\
	\texttt{sxue@umich.edu}\\
	\And
	Mikhail Yurochkin \\
	IBM Research\\
	MIT-IBM Watson AI lab\\
	\texttt{mikhail.yurochkin@ibm.com}\\
	\And
	Yuekai Sun \\
	Department of Statistics\\
	University of Michigan\\
	\texttt{yuekai@umich.edu}
%
}
\begin{document}

\maketitle

\begin{abstract}

The benefits of overparameterization for the overall performance of modern machine learning (ML) models are well known. However, the effect of overparameterization at a more granular level of data subgroups is less understood. Recent empirical studies demonstrate encouraging results: (i) when groups are not known, overparameterized models trained with empirical risk minimization (ERM) perform better on minority groups; (ii) when groups are known, ERM on data subsampled to equalize group sizes yields state-of-the-art worst-group-accuracy in the overparameterized regime. In this paper, we complement these empirical studies with a theoretical investigation of the risk of overparameterized random feature models on minority groups. In a setting in which the regression functions for the majority and minority groups are different, we show that overparameterization always improves minority group performance. \footnote{Codes: \href{https://github.com/smaityumich/overparameterization}{https://github.com/smaityumich/overparameterization}}
\end{abstract}

\section{Introduction}

Traditionally, the goal of machine learning (ML) is to optimize the \emph{average} or \emph{overall} performance of ML models. The relentless pursuit of this goal eventually led to the development of deep neural networks, which achieve state of the art performance in many application areas. A prominent trend in the development of such modern ML models is overparameterization: the models are so complex that they are capable of perfectly interpolating the data. There is a large body of work showing overparameterization improves the performance of ML models in a variety of settings (\eg\ ridgeless least squares \cite{hastie2019Surprises}, random feature models \cite{belkin2019Two,mei2019generalization} and deep neural networks \cite{nakkiran2019Deep}).

However, as ML models find their way into high-stakes decision-making processes, other aspects of their performance (besides average performance) are coming under scrutiny. One aspect that is particularly relevant to the fairness and safety of ML models is their performance on traditionally disadvantaged demographic groups. There is a troubling line of work showing ML models that perform well on average may perform poorly on minority groups of training examples. For example, \citet{buolamwini2018Gender} show that commercial gender classification systems, despite achieving low classification error on average, tend to misclassify dark-skinned people. In the same spirit, \citet{wilson2019Predictive} show that pedestrian detection models, despite performing admirably on average, have trouble recognizing dark-skinned pedestrians.

The literature examines the effect of model size on the generalization error of the worst group. \citet{sagawa2020Investigation} find that increasing model size beyond the threshold of zero training error can have a negative impact on test error for minority groups because the model learns spurious correlations. They show that subsampling the majority groups is far more successful than upweighting the minority groups in reducing worst-group error. \citet{pham2021effect} conduct more extensive experiments to investigate the influence of model size on worst-group error under various neural network architecture and model parameter initialization configurations. They discover that increasing model size either improves or does not harm the worst-group test performance across all settings.
\citet{idrissi2021simple} recommend using simple methods, \ie\ subsampling and reweighting for balanced classes or balanced groups, before venturing into more complicated procedures. They suggest that newly developed robust optimization approaches for worst-group error control \citep{sagawa2019Distributionally, liu2021just} could be computationally demanding, and that there is no strong (statistically significant) evidence of advantage over those simple methods.

In this paper, we investigate how overparameterization affects the performance of ML models on minority groups in a regression setup. As we shall see, \emph{overparamaterization generally improves or stabilizes the performance of ML models on minority groups}. 
Our main contributions are:
\begin{enumerate}
\item we develop a simple two-group model for studying the effects of overparameterization on (sub)groups. This model has parameters controlling signal strength, majority group fraction, overparameterization ratio, discrepancy between the two groups, and error term variance that display a rich set of possible effects.
\item we develop a fairly complete picture of the limiting risk of empirical risk minimization in a high-dimensional asymptotic setting (see Sections \ref{sec:random-feature}). 
\item we show that majority group subsampling provably improves minority group performance in the overparameterized regime.
\end{enumerate}
Some of the technical tools that we develop in the proofs may be of independent interest.

\section{Problem setup}
\label{sec:setup}

\subsection{Data generating process}

Let $\cX\subset\reals^d$ be the feature space and $\cY\subset\reals$ be the output space. To keep things simple, we consider a two group setup. Let $P_0$ and $P_1$ be probability distributions on $\cX\times\cY$. We consider $P_0$ and $P_1$ as the distribution of samples from the minority and majority groups respectively. In the minority group, the samples $(x,y)\in\cX\times\cY$ are distributed as
\begin{equation} \textstyle
x \sim P_X, ~~ 
y\mid x = \beta_0^\top x + \varepsilon, ~~ \varepsilon \sim N(0,\tau^2),
\label{eq:minority-distribution}
\end{equation}
where $P_X$ is the marginal distribution of the features, $\beta_0\in\reals^d$ is a vector of regression coefficients, and $\tau^2 > 0$ is the noise level. The normality of the error term in \eqref{eq:minority-distribution} is not important; our theoretical results remain valid even if the error term is non-Gaussian.

In the majority group, the marginal distribution of features is identical, but the conditional distribution of the output is different:
\begin{equation} \textstyle
y\mid x = \beta_1^\top x + \varepsilon, ~~ \varepsilon \sim N(0,\tau^2),
\label{eq:majority-distribution}
\end{equation}
where $\beta_1\in\reals^d$ is a vector of regression coefficients \emph{for the majority group}. We note that this difference between the majority and minority groups is a form of \emph{concept drift} or \emph{posterior drift}: the marginal distribution of the feature is identical, but the conditional distribution of the output is different. 
We focus on this setting because it not only simplifies our derivations, but also isolates the effects of concept drift between subpopulations through the difference $\delta \triangleq \beta_1 - \beta_0$. If the covariate distributions between the two groups are different, then an overparameterized model may be able to distinguish between the two groups, thus effectively modeling the groups separately.
In that sense, by assuming that the covariates are equally distributed, we consider the worst case.

Let $g_i\in\{0,1\}$ denote the group membership of the $i$-th training sample. The training data $\{(x_i,y_i,g_i)\}_{i=1}^n$ consists of a mixture of samples from the majority and minority groups: 
\begin{equation}
\begin{aligned}
g_i \sim \Ber(\pi) \\
(x_i,y_i)\mid g_i \sim P_{g_i}
\end{aligned},
\label{eq:training-distribution}
\end{equation}
where  $\pi\in[\frac12,1]$ is the (expected) proportion of samples from the majority group in the training data. We denote  $n_1$ as the sample size for majority group in the training data.

\subsection{Random feature models}
\label{sec:model}

Here, we consider a random feature regression model \cite{rahimi2007Random, montanari2020generalization} 
\begin{equation}\textstyle \label{eq:random-feature}
    f(x, a, \Theta) = \sum_{j = 1}^N a_j \sigma(\theta_j ^\top x/\sqrt{d})
\end{equation} where $\sigma(\cdot)$ is a non-linear activation function and $N$ is the number of random features considered in the model. The random feature model is similar to a two-layer neural network, but the weights $\theta_j$'s of the hidden layer are set at some (usually random) initial values. In other words, a random feature  model fits a linear regression model on the response $y$ using the non-linear random features $\sigma(\theta_j ^\top x/\sqrt{d})$'s instead of the original features $x_j$'s. 
This type of models has recently been used \cite{montanari2020generalization} to provide theoretical explanation for  some of the behaviors seen in large neural network models. 

We note that at first we analyzed the usual linear regression model hoping to understand the effect of overparameterization. 
Specifically, we considered the model 
\begin{equation}
\label{eq: linear-model}
    f(x, a)  = a^\top x.
\end{equation}
However, we found that the linear model suffers in terms of prediction accuracy for minority group in overparameterized regime. In particular, in the case of high signal-to-noise ratio, the  prediction accuracy for the minority group worsens with the overparameterization of the model. This is rather unsatisfactory as the theoretical findings do not echo the empirical findings in \cite{le2021effect}, suggesting that a linear model is not adequate for understanding the deep learning phenomena in this case. We point the readers to Appendix 
\ref{sec:ridgeless}
for the theoretical analysis of overparameterezied linear models.

\subsection{Training of the random feature model}
\label{sec:training}

We consider two ways in which a learner may fit the random feature model: empirical risk minimization (ERM) that does not require group annotations; subsampling that does require group annotations. The availability of group annotations is highly application dependant and there is a plethora of prior works considering either of the scenarios \citep{hashimoto2018Fairness,zhai2021doro,pham2021effect,sagawa2019Distributionally,sagawa2020Investigation,idrissi2021simple}.

\subsubsection{Empirical risk minimization (ERM)}
The most common way to fit a predictive model is to minimize the empirical risk on the training data: 
\begin{equation} \textstyle \label{eq:ERM-random-feature}
    \hat a \in \underset{a \in \reals^N}{\argmin} \frac 1n \sum_{i = 1}^n \frac 12 \{y_i - f(x_i, a, \Theta)\}^2 = \underset{a \in \reals^N}{\argmin} \frac 1n \sum_{i = 1}^n \frac 12 \big\{y_i - \sum_{j = 1}^N a_j \sigma(\theta_j^\top x_i/\sqrt{d})\big\}^2 \,,
\end{equation} where we recall that the weights $\theta_j$'s of the first layer are randomly assigned. The above optimization \eqref{eq:ERM-random-feature} has a unique solution when the number of random features or neurons are less than the sample size ($N<n$) and we call this regime as \emph{underparameterized}. When the number of neurons is greater than the sample size ($N> n$), which we call the \emph{overparameterized} regime, $\hat a$ is not unique in \eqref{eq:ERM-random-feature}, as there are multiple $a \in \reals^N$ that interpolate the training data, \ie, $y_i = \sum_{j = 1}^N a_j \sigma(\theta_j^\top x_i), ~ i \in [n]$, resulting in a  zero training error. In such a situation we set $\hat a $ at a specific $a \in \reals^N$ which (1) interpolates the training data and (2) has minimum $\ell_2$-norm. This particular solution  is known as the \emph{minimum norm interpolant solution} \cite{hastie2019Surprises,montanari2020generalization} and is formally defined as 
\begin{equation} \textstyle \label{eq:ERM-RF-min-norm}
    \hat a \in \argmin\{\|a\|_2 : y_i = \sum_{j = 1}^N a_j \sigma(\theta_j^\top x_i/\sqrt{d}), ~ i \in [n]\} = (Z ^\top Z)^\dagger Z^\top y\,,
\end{equation} where $Z\in \reals^{n\times N}$ with $Z_{i, j} = \sigma(\theta_j^\top x_i/\sqrt{d})$ and $A^\dagger$ is the Moore-Penrose inverse of $A$. 
The minimum norm interpolant solution  \eqref{eq:ERM-RF-min-norm} has an alternative interpretation: it is the limiting solution to the ridge regression problem at vanishing regularization strength,
\begin{equation} \label{eq:ridgeless} \textstyle
    \hat a = \underset{\lambda \to 0+}{\lim} \hat a_\lambda, ~ \hat a_\lambda \in \underset{a \in \reals^N}{\argmin} \Big[\frac 1n \sum_{i = 1}^n \frac 12 \big\{y_i - \sum_{j = 1}^N a_j \sigma(\theta_j^\top x_i/\sqrt{d})\big\}^2 + \frac{N\lambda}{d}\Big]\,.
\end{equation} In fact the conclusion in \eqref{eq:ridgeless} continues to hold in the underparameterized ($N<n$) regime. Hence, we combine the two regimes ($N< n$ and $N>n$) and obtain the ERM solution as \eqref{eq:ridgeless}. In the literature \cite{hastie2019Surprises} it is known as the \emph{ridgeless} solution.  
Finally, having an estimator $\hat a$ of the parameter $a$ in the random feature model \eqref{eq:random-feature}, the response of an individual with feature vector $x$ is predicted to be:
\begin{equation} \textstyle \label{eq:predictor-RF-model}
    \hat y (x) = f(x, \hat a, \Theta) = \sum_{j = 1}^N \hat a_j \sigma(\theta_j^\top x/\sqrt{d})\,.
\end{equation}

\subsubsection{Majority group subsampling} \label{sec:subsampling}
It is known that a model trained by ERM often exhibits poor performance on  minority groups \cite{sagawa2019Distributionally}. One promising way to circumvent the issue is \emph{majority group subsampling}: a randomly drawn subset of training sample points are discarded from majority groups to match the sample sizes for the majority and minority groups in the remaining training data. This emulates the effect of reweighted-ERM, where the samples from minority groups are upweighted in the ERM. 

In underparameterized case the reweighting is preferred over the subsampling due to its superior statistical efficiency. On contrary, in the overparameterized case, reweighing may not have the intended effect on the performance of minority groups \cite{byrd2019What}, but subsampling does \citep{sagawa2020Investigation,idrissi2021simple}. Thus we consider subsampling as a way of improving performance in minority groups in overparameterized regime. 
We note that subsampling requires the knowledge of the groups/sensitive attributes, so its applicability is limited to problems in which the group identities are observed in the training data.

\section{Random feature regression model}
\label{sec:random-feature}

In this paper we  mainly focus on  the minority group performance, which we measure as the mean squared prediction error for the minority samples in the test data: 
\begin{equation}\textstyle\label{eq:minority-mspe}
    R_0(\hat a) = \bbE_{x\sim P_X} [\{f (x, \hat a, \Theta) - \beta_0 ^\top x\}^2] = \bbE_{x\sim P_X} [\{\sum_{j = 1}^N \hat a_j \sigma(\theta_j^\top x/\sqrt{d}) - \beta_0 ^\top x\}^2]\,,
\end{equation} where $x$ is independent of the training data, and is identically distributed as $x_i$'s and for a generic $a$ by the notation $\bbE_{a} $ we mean that the expectation is taken over the randomness in $a$. One may alternatively study $\bbE_{(x, y)\sim P_0} [\{f (x, \hat a, \Theta) - y \}^2]$ as the mean square prediction error for the minority group, but for a fixed noise level $\tau$ it has the same behavior as \eqref{eq:minority-mspe}, as suggested by the following: $\bbE_{(x, y)\sim P_0} [\{f (x, \hat a, \Theta) - y \}^2] = \bbE_{x\sim P_X} [\{f (x, \hat a, \Theta) - \beta_0 ^\top x\}^2] + \tau^2$.

\begin{wraptable}{r}{0.48\textwidth}
\centering
\vspace{-0.4cm}

\caption{Table of notations}
\begin{tabular}{cc}
\toprule
Notations &   Descriptions \\\midrule
    $n$ & Total sample size \\[0.1cm]
      $n_0$, $n_1$ & Sample-sizes in minority \\
      & and majority groups \\[0.15cm]
      $N$ & Number of neurons \\[0.1cm]
      $d$ & Covariate dimension\\[0.1cm]
      $\psi_1$ & $ N/d$ \\[0.1cm]
      $\psi_2$ & $ n/d$ \\
      $\gamma$ & Overparameterization $N/n$\\[0.1cm]
      $\pi$ & Majority group proportion\\[0.1cm]
      $\beta_0, \beta_1$ & Regression coefficients for \\
      & minority and majority groups\\[0.1cm]
      $\delta$ &  $\beta_1 - \beta_0$ \\[0.1cm]
      $\tau^2$  & Noise variance\\
      \bottomrule
\end{tabular}
\label{tab: notation table}
\end{wraptable}

A finite sample study of $R_0(\hat a)$ \eqref{eq:minority-mspe} is difficult, in this paper we  study it's limiting behavior in high-dimensional asymptotic setup. Specifically, recalling the notations from Table \ref{tab: notation table} we  consider $N/d \to \psi_1 > 0$, $n/d \to \psi_2 > 0$ and $n_1/n \to \pi \in [1/2, 1]$. The problem is underparameterized if $\gamma = \psi_1 / \psi_2 = \lim_{d\to \infty} \{N/n\} < 1$ and overparameterized if $\gamma > 1$. 

To study the minority risk we first decompose it into several parts and study each of them separately. The first decomposition follows: 

\begin{lemma}
We define  the bias $\ccB(\beta_0, \delta, \tau) = \bbE_{x\sim P_X} [\{\bbE_{\eps}[f (x, \hat a, \Theta)] - \beta_0 ^\top x\}^2]$ and the variance $\ccV(\beta_0, \delta, \tau) = \bbE_{x\sim P_X} [\operatorname{Var}_{\eps}\{f (x, \hat a, \Theta)\}]$, where $\eps = (\eps_1, \dots, \eps_n)^\top$ are the errors in the data generating model, \ie\ $\eps_i = y_i - x_i^\top\beta_{g_i}$ and $\operatorname{Var}_{\eps}$ denotes the variance with respect to $\eps$. The minority risk  decomposes as 
\begin{equation}\textstyle
\label{eq:bias-variance-decomposition}
   \begin{aligned}
       & \bbE_{x\sim P_X, \eps} [\{f (x, \hat a, \Theta) - \beta_0 ^\top x\}^2]\\ & = \ccB(\beta_0, \delta, \tau) + \ccV(\beta_0, \delta, \tau)\,.
   \end{aligned}
\end{equation}
\end{lemma}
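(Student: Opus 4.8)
The plan is to reduce the statement to the elementary scalar bias--variance identity $\bbE[(W-c)^2]=\operatorname{Var}(W)+(\bbE[W]-c)^2$, applied pointwise in the test feature $x$ and then integrated. Throughout I treat the random weights $\Theta$, the training covariates, and the group memberships $g_i$ as fixed, so that the only sources of randomness are the training noise $\eps$ and the fresh test point $x\sim P_X$. The crucial structural fact is that $x$ is drawn independently of the training data, hence independently of $\eps$, and that it enters the objective only through $f(x,\hat a,\Theta)-\beta_0^\top x$ (the test-noise does not appear because we measure error against $\beta_0^\top x$, not against a noisy test label).

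First I would invoke this independence together with Fubini's theorem to rewrite the joint expectation on the left-hand side of \eqref{eq:bias-variance-decomposition} as the iterated expectation $\bbE_{x}\big[\bbE_{\eps}[\{f(x,\hat a,\Theta)-\beta_0^\top x\}^2]\big]$. For this interchange to be legitimate it suffices that $f(x,\hat a,\Theta)-\beta_0^\top x$ be square-integrable under the product measure. This holds because, for fixed $\Theta$ and training covariates, $\hat a=(Z^\top Z)^\dagger Z^\top y$ with $y=\mu+\eps$ (where $\mu_i=\beta_{g_i}^\top x_i$ is fixed), so $\hat a$ is an affine function of $\eps$; consequently $W:=f(x,\hat a,\Theta)$ is affine in the Gaussian vector $\eps$ for each $x$, and outer integrability in $x$ follows from the standing moment assumptions on $P_X$ and the activation $\sigma$.

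Next, for each fixed $x$, the quantity $c:=\beta_0^\top x$ does not depend on $\eps$, while $W=f(x,\hat a,\Theta)$ depends on $\eps$ only through $\hat a$. Applying the scalar identity with this $W$ and $c$ gives, for every fixed $x$,
\[
\bbE_{\eps}[\{f(x,\hat a,\Theta)-\beta_0^\top x\}^2]=\operatorname{Var}_{\eps}\{f(x,\hat a,\Theta)\}+\{\bbE_{\eps}[f(x,\hat a,\Theta)]-\beta_0^\top x\}^2 .
\]
Taking $\bbE_{x\sim P_X}$ of both sides and recognizing the two resulting terms as exactly the definitions of $\ccV(\beta_0,\delta,\tau)$ and $\ccB(\beta_0,\delta,\tau)$ completes the argument.

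There is no genuine obstacle here: the result is the standard conditional bias--variance decomposition, and the only points requiring care are the justification of the expectation interchange and the bookkeeping that $\beta_0^\top x$ is constant with respect to $\eps$ (so that it plays the role of the deterministic target $c$). The substantive content of the paper lies not in this decomposition but in the subsequent high-dimensional asymptotic evaluation of $\ccB$ and $\ccV$ separately, which this lemma merely sets up.
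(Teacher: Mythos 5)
Your proof is correct and follows essentially the same route as the paper, which simply expands the square around $\bbE_{\eps}[f(x,\hat a,\Theta)]$ so that the cross term vanishes; your added remarks on Fubini and integrability are sound but not a different argument.
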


The proof is realized by decomposing the square in  $[f (x, \hat a, \Theta) - \beta_0 ^\top x]^2=[\{f (x, \hat a, \Theta)] - \bbE_{\eps}[f (x, \hat a, \Theta)]\} + \{\bbE_{\eps}[f (x, \hat a, \Theta)] -  \beta_0 ^\top x\}]^2$. We now provide an interpretation for the variance term. 

\paragraph{The variance term} Firstly we notice that the variance $\ccV(\beta_0, \delta, \tau)$ captures the average variance of the prediction rule $\hat f(x, \hat a, \Theta)$ in terms of the errors $\eps$ in training data. To look closely we define
 $z = \sigma(\Theta x/\sqrt{d})$ as the random feature for a new covariates $x$ and realize that the variance of the predictor with respect to $\eps$, and thus $\ccV(\beta_0, \delta, \tau)$, do not depend on the values of $\beta_0$ or $\delta$, as shown in the following display: 
 \[ \textstyle
 \begin{aligned}
  \operatorname{Var}_{\eps}\{f (x, \hat a, \Theta)\} = \operatorname{Var}_{\eps}\{z^\top \hat a\} = \operatorname{Var}_{\eps}\{z^\top (Z ^\top Z)^\dagger Z^\top y\} = \operatorname{Var}_{\eps}\{z^\top (Z ^\top Z)^\dagger Z^\top \eps\}\,.
 \end{aligned}
 \] Hence we drop $\beta_0$ and $\delta$ from the notation and write the variance term as $\ccV( \tau)$. Since $\eps_i$'s have variance $\tau^2$ we realize that the exact dependence of the variance term with respect to $\tau$ is the following:  $\ccV(\tau) = \tau^2 \ccV(\tau = 1)$. This also implies that  the variance term \emph{$\ccV(\tau)$ teases out the contribution of data noise level $\tau$} in the minority risk. 

\paragraph{The bias term}
We notice that the bias term  $\ccB(\beta_0, \delta, \tau)$ in decomposition \eqref{eq:bias-variance-decomposition} is the average bias in prediction for the minority group. We further notice that it does not depend on the noise level $\tau$ (both $ \bbE_{\eps}[f (x, \hat a, \Theta)]$ and $  \beta_0 ^\top x$ do not depend on $\tau$) and denote it as $\ccB(\beta_0, \delta)$ by dropping $\tau$. 
Next, we separate out the contributions of  $\beta_0$ and $\delta$ in the bias term via a decomposition. For the purpose
we recall, $Z = \{z_{i, j}\}_{i \in [n], j \in [N]} \in \reals^{n \times N}$ is the matrix of random features and define the following (1) $X = [x_{1}^\top , \dots x_n^\top]^\top$ is the covariate matrix in training data, (2) $X_1$ is the covariate matrix consisting only the samples from majority group, and (3) $Z_1$ is the random feature matrix for the majority group. The decomposition for the bias term follows:
\begin{equation} \label{eq:bias-decomposition}
    \begin{aligned}
 \ccB(\beta_0, \delta) =  \bbE_{x \sim P_X}[\{(z^\top (Z ^\top Z)^\dagger Z^\top X - x^\top)\beta_0\}^2 + \{z^\top (Z ^\top Z)^\dagger Z_1^\top X_1 \delta\}^2\\ + 2 (z^\top (Z ^\top Z)^\dagger Z^\top X - x^\top)\beta_0 \delta^\top X_1^\top Z_1 (Z ^\top Z)^\dagger z]\,.
\end{aligned}
\end{equation}
Next, we make some assumptions on $\beta_0$ and $\delta$ vectors:
\begin{assumption} \label{assmp:parameter-distribution}
There exists some 
 $F_\beta, F_\delta>0$ and $F_{\beta, \delta} \in \reals$ such that the following holds: $\|\beta_0\|_2^2 = F_\beta^2$, $\|\delta\|_2^2 = F_\delta^2$ and $\langle\beta_0,  \delta\rangle = F_{\beta, \delta}$.  
\end{assumption} Here, $F_\beta$ and $F_\delta$ are the $\ell_2$ signal strengths of $\beta_0$ and $\delta$. 
The decomposition in the next lemma separates out the contributions of $\beta_0$ and $\delta$ in $\ccB(\beta_0, \delta)$.

\begin{lemma} \label{lemma:bias-decomposition}
Define the followings: $\ccB_{\beta} = \bbE_{x \sim P_X}[\|z^\top (Z ^\top Z)^\dagger Z^\top X - x^\top\|_2^2/d]$,  $\ccB_\delta = \bbE_{x \sim P_X}[\|z^\top (Z ^\top Z)^\dagger Z_1^\top X_1\|_2^2/d]$ and $\ccC_{\beta, \delta} = 2 \bbE_{x \sim P_X}[  (z^\top (Z ^\top Z)^\dagger Z^\top X - x^\top) X_1^\top Z_1 (Z ^\top Z)^\dagger z]$. Then we have
$\ccB(\beta_0, \delta) = F_\beta^2 \ccB_{\beta} + F_\delta^2 \ccB_{\delta} + F_{\beta, \delta} \ccC_{\beta, \delta}$. 
\end{lemma}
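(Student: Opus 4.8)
The plan is to turn the quadratic-in-$(\beta_0,\delta)$ expression \eqref{eq:bias-decomposition} into scalar multiples of $F_\beta^2$, $F_\delta^2$ and $F_{\beta,\delta}$ by exploiting the rotational symmetry of the model. First I would abbreviate $u^\top = z^\top(Z^\top Z)^\dagger Z^\top X - x^\top$ and $w = X_1^\top Z_1(Z^\top Z)^\dagger z$, both vectors in $\reals^d$, so that the three summands inside the expectation in \eqref{eq:bias-decomposition} are exactly the quadratic forms $(\beta_0^\top u)^2 = \beta_0^\top(uu^\top)\beta_0$, $(\delta^\top w)^2 = \delta^\top(ww^\top)\delta$ and $2(\beta_0^\top u)(\delta^\top w) = 2\,\beta_0^\top(uw^\top)\delta$. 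Because $\beta_0$ and $\delta$ are fixed while $u,w$ carry all the randomness, the lemma reduces to showing that the three second-moment matrices $\bbE[uu^\top]$, $\bbE[ww^\top]$ and $\bbE[uw^\top]$ --- the expectation now being over the training design, the random weights and the test point --- are each a scalar multiple of $I_d$.

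The core step is an orthogonal-invariance argument. For any orthogonal $Q\in\reals^{d\times d}$, apply the simultaneous substitution $x_i\mapsto Qx_i$, $x\mapsto Qx$, $\theta_j\mapsto Q\theta_j$. The isotropy of $P_X$ and of the weight distribution makes the joint law of $(X,X_1,\Theta,x)$ invariant, and, crucially, every entry $\sigma(\theta_j^\top x_i/\sqrt d)$ is left unchanged because the inner product $\theta_j^\top x_i$ is preserved. Hence $Z$, $Z_1$, $z$ and therefore $(Z^\top Z)^\dagger$ are all invariant, whereas $X$ and $X_1$ are right-multiplied by $Q^\top$ and $x\mapsto Qx$. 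A line of bookkeeping then gives $u\mapsto Qu$ and $w\mapsto Qw$. Equivariance of $(u,w)$ together with invariance of its law yields $\bbE[uu^\top]=Q\,\bbE[uu^\top]\,Q^\top$ for every orthogonal $Q$, and the only matrices fixed by all such conjugations are scalar multiples of the identity; the identical argument applies to $\bbE[ww^\top]$ and $\bbE[uw^\top]$. Write the three scalars as $c_\beta,c_\delta,c_{\beta\delta}$.

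Substituting and invoking Assumption \ref{assmp:parameter-distribution} gives $\beta_0^\top(c_\beta I_d)\beta_0 = c_\beta\|\beta_0\|_2^2 = c_\beta F_\beta^2$, and likewise the remaining two terms become $c_\delta F_\delta^2$ and $2c_{\beta\delta}F_{\beta,\delta}$. It then remains only to identify the constants with the named quantities by taking normalized traces: $c_\beta = \frac{1}{d}\tr\bbE[uu^\top] = \bbE[\|u\|_2^2/d] = \ccB_\beta$, analogously $c_\delta = \ccB_\delta$, and the cross constant matches $\ccC_{\beta,\delta}$ once the latter is read with the same $1/d$ normalization as $\ccB_\beta$ and $\ccB_\delta$. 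Collecting the three contributions delivers $\ccB(\beta_0,\delta) = F_\beta^2\ccB_\beta + F_\delta^2\ccB_\delta + F_{\beta,\delta}\ccC_{\beta,\delta}$.

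The step I expect to require the most care is the invariance bookkeeping rather than any hard estimate. One must check that every factor transforms as claimed --- in particular that the pseudoinverse $(Z^\top Z)^\dagger$ is genuinely invariant (it is, precisely because $Z$ is invariant, not merely equivariant) --- and one must be explicit that the averaging runs over the design and the weights, not only the test point, since for a fixed design $\bbE_{x}[uu^\top]$ is not proportional to $I_d$ and the exact identity would fail. Once the three Gram-type matrices are known to be isotropic, the rest is routine linear algebra.
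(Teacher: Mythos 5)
Your proposal is correct and is essentially the paper's own argument: the paper also exploits rotational invariance of the covariate and weight distributions (noting that $Z$, $Z_1$, $z$ are unchanged under a simultaneous rotation) and reduces the quadratic forms in $\beta_0$ and $\delta$ to normalized traces, the only cosmetic difference being that the paper applies the Haar rotation to the parameter vectors and uses $\bbE_{O}[O\beta_0\delta^\top O^\top]=\tfrac{F_{\beta,\delta}}{d}I_d$, whereas you rotate the data and conclude that $\bbE[uu^\top]$, $\bbE[ww^\top]$, $\bbE[uw^\top]$ are isotropic --- two equivalent phrasings of the same invariance. Your two caveats (that the identity requires averaging over the design and weights, not just the test point, and that the stated $\ccC_{\beta,\delta}$ needs the same $1/d$ normalization as $\ccB_\beta$ and $\ccB_\delta$) are both accurate and in fact tighten up points the paper leaves implicit.
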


\begin{figure}
     \centering
         \includegraphics[width = 0.3\linewidth]{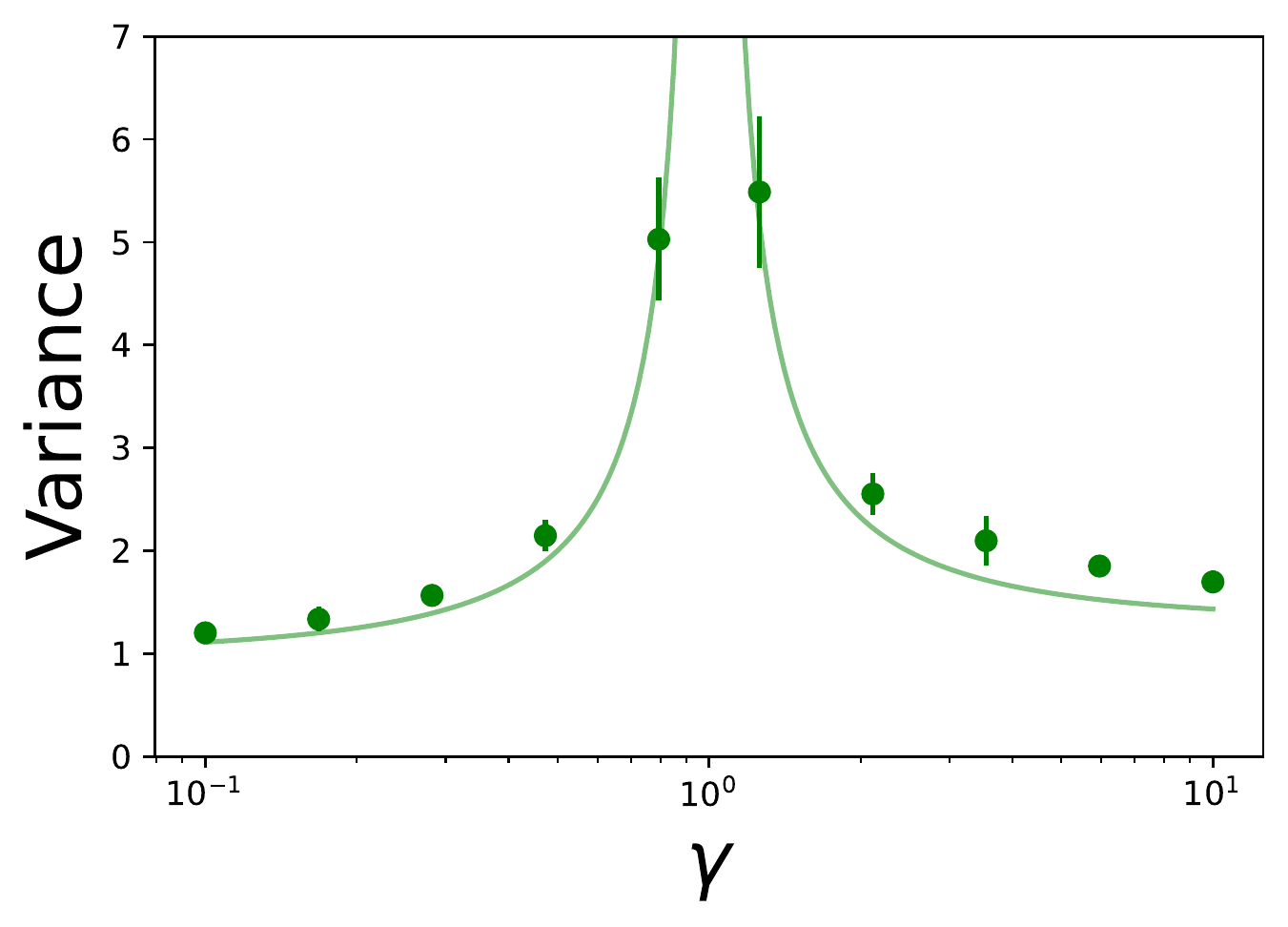}
         \includegraphics[width=0.3\linewidth]{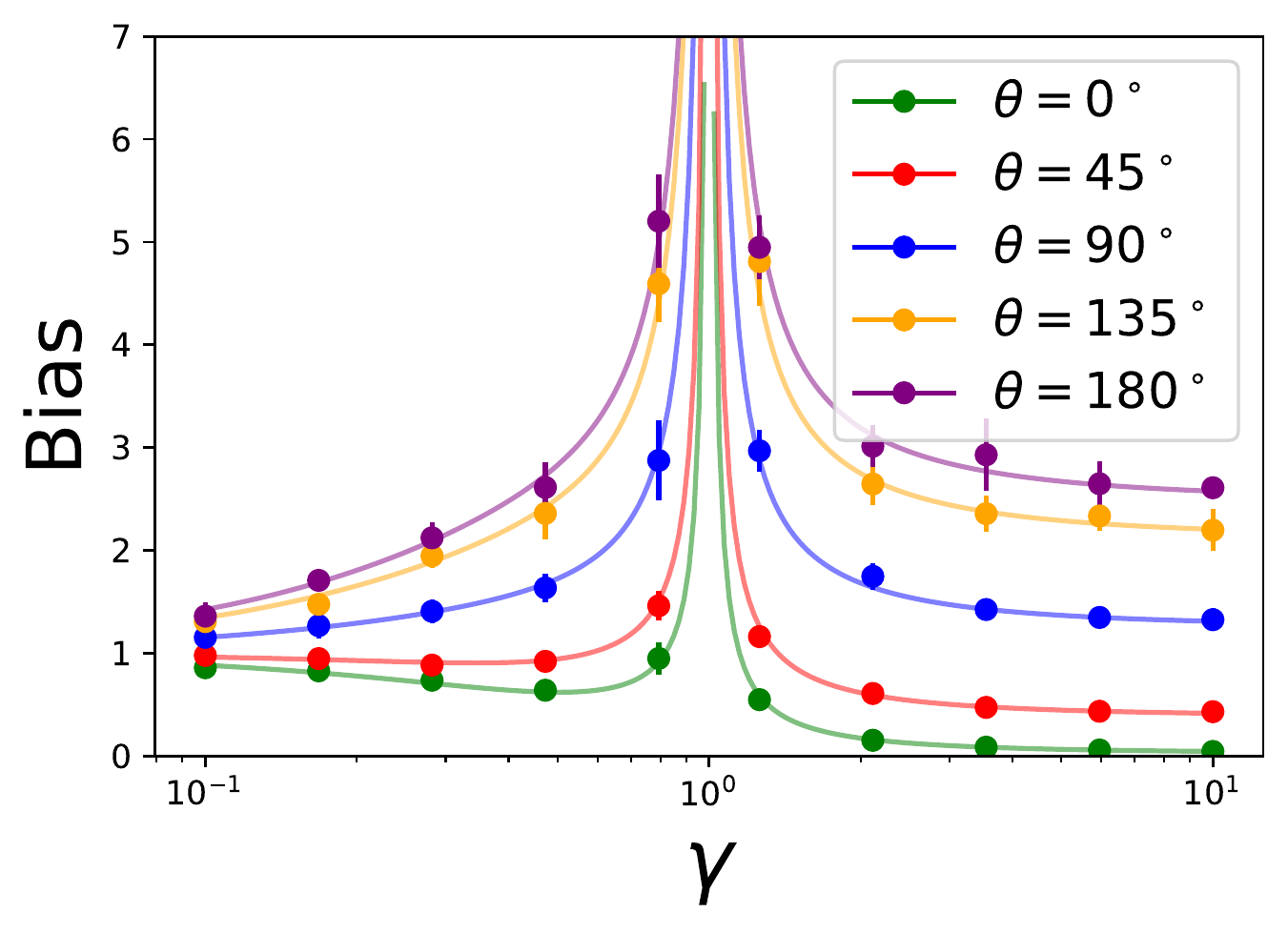}~
         \includegraphics[width = 0.3\linewidth]{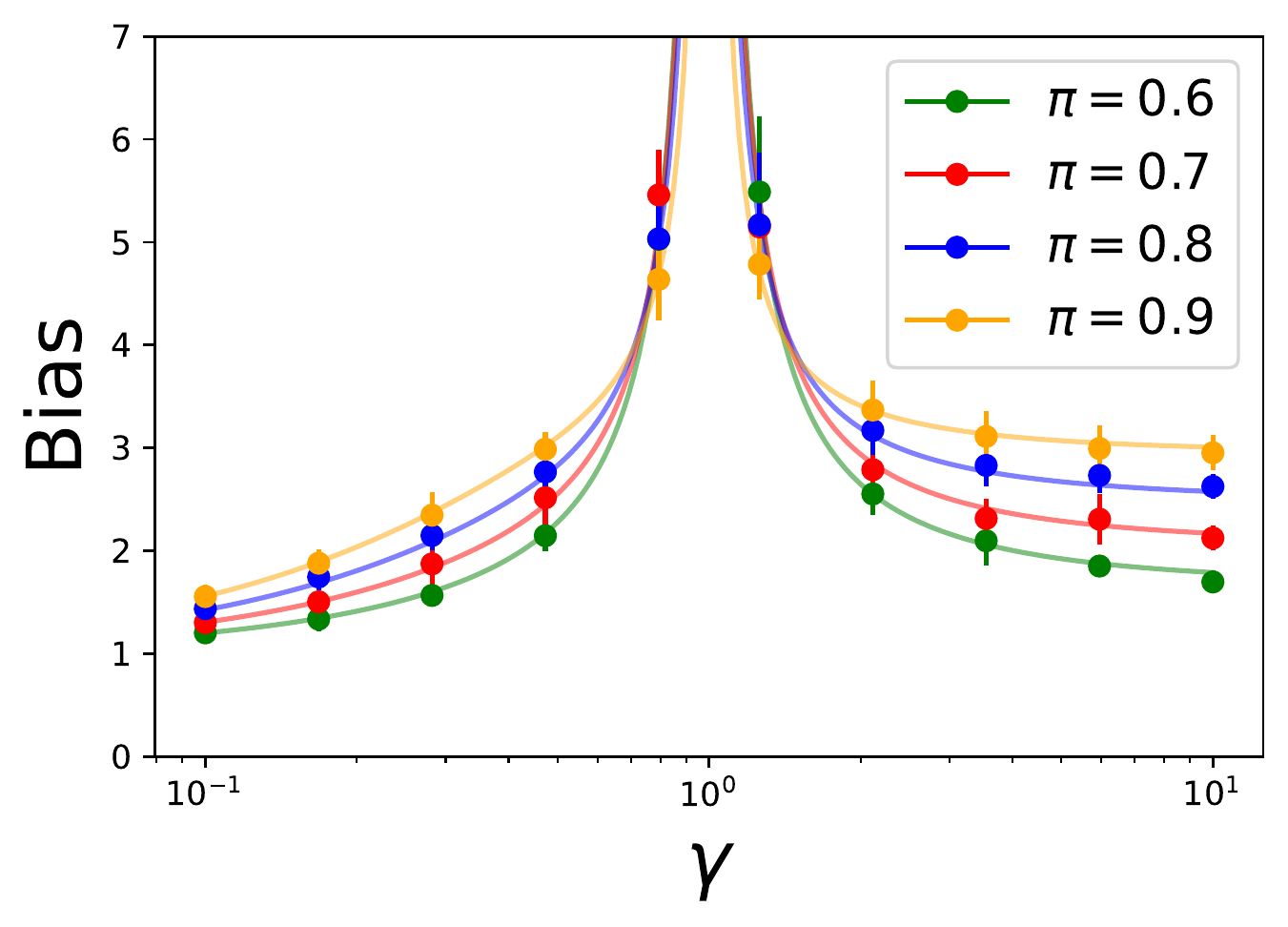}~
        \caption{Trends for bias and variance in the minority ERM risk for varying overparameterization $\gamma$. \emph{Left:} variances for varying $\gamma$. \emph{Middle:} biases for different $\gamma$ and $\theta$, where $\theta$ is the angle between $\beta_0$ and $\beta_1$. Here the majority group proportion $\pi$ is set at $0.8$. \emph{Right:} biases for different $\pi$'s and $\gamma$'s when $\theta$ is set at $180^\circ$. The solid lines are the theoretical predictions and the points with error-bars represent the empirical predictions over 10 random runs.}
        \label{fig:bias-variance}
\end{figure}

In Lemma \ref{lemma:bias-decomposition} we see that $F_\delta^2 \ccB_{\delta}+F_{\beta, \delta} \ccC_{\beta, \delta}$ quantifies the contribution of the model misspecification in the two-group model, and we call this \emph{misspecification error term}. For the other terms in the decomposition we utilize the results of \citet{montanari2020generalization} who studied the effect of overparameterization on the overall performance (\ie, in a single group model), thus the misspecification error term does not appear in their analysis. Our theoretical contribution in this paper is studying the \emph{exact asymptotics of the misspecification error terms}.
Before we present the asymptotic results we introduce  some assumptions and definitions. First we make a distributional assumption on the covariates $x_i$'s and the random weights $\theta_j$'s which allow an easier analysis for the minority risk. 
\begin{assumption} \label{assmp:covariate-and-weight-dist}
We assume that $\{x_i\}_{i = 1}^n$ and $\{\theta_j\}_{j = 1}^N$ are \iid\   $ \operatorname{Unif}\{ \bbS^{d - 1}(\sqrt{d})\}$, \ie,  uniformly over the surface of a $d$-dimensional Euclidean ball of radius $\sqrt{d}$ and centered at the origin. 
\end{assumption} Next we assume that  activation function $\sigma$ has some properties, which are satisfied by any commonly used activation functions, \eg\  ReLU and sigmoid activations.

\begin{assumption}\label{assmp:activation}
The activation function $\sigma : \reals \to \reals$ is weakly differentiable with weak derivative $\sigma'$ and for some constants $c_0, c_1 > 0$ it holds $|\sigma(u)|, |\sigma'(u)| \le c_0 e^{c_1 |u|}$.
\end{assumption}
Both assumptions \ref{assmp:covariate-and-weight-dist} and \ref{assmp:activation} also appear in \citet{montanari2020generalization}. 
Below we define some quantities which we require to describe the asymptotic results. 

\begin{definition} \label{def:some-quantities}
\begin{enumerate}
    \item For the constants  \begin{equation} \textstyle
    \mu_0 = \bbE \{\sigma(G)\},~~~~~ \mu_1 = \bbE \{G\sigma(G)\},~~~~~ \mu_\star^2 = \bbE\{\sigma(G)^2\} - \mu_0^2 - \mu_1^2,
\end{equation} where the expectation is taken with respect to $G \sim \operatorname{N}(0, 1)$ we assume that $0 < \mu_0 , \mu_1, \mu_\star<\infty$ and  define  $
    \xi = {\mu_1}/{\mu_\star}$. 
    \item Recall that $N/d \to \psi_1 > 0$, $n/d \to \psi_2 > 0$ and $n_1/n \to \pi \in [1/2, 1]$. We set $\psi= \min \{\psi_1, \psi_2\}$ and define \[ \textstyle
    \chi = -\frac{[(\psi \xi^2 - \xi^2 -1)^2 + 4 \xi^2 \psi]^{1/2} + (\psi \xi^2 - \xi^2 -1)}{2 \xi^2}\,.
\]
\item We furthermore define the following: 
\[ \textstyle
    \begin{aligned}
      \ccE_{0}^\star &=  - \chi^5 \xi^6 + 3 \chi^4 \xi^4 + (\psi_1 \psi_2 - \psi_1 - \psi_2 +1) \chi^3 \xi^6 - 2 \chi^3 \xi^4 - 3 \chi^3 \xi^2\\
      & ~~~  + (\psi_1 + \psi_2 - 3 \psi_1 \psi_2 +1)\chi^2 \xi^4 + 2\chi^2 \xi^2 + \chi^2 + 3 \psi_1 \psi_2 \chi \xi^2 - \psi_1 \psi_2,\\
      \ccE_1^\star  & = \psi_2 \chi^3 \xi^4 - \psi_2 \chi^2 \xi^2 + \psi_1 \psi_2 \chi \xi^2 - \psi_1 \psi_2,\\
      \ccE_2^\star  & =  \chi^5 \xi^6 - 3 \chi^4 \xi^4 + ( \psi_1 - 1) \chi^3 \xi^6+ 2 \chi^3 \xi^4 + 3 \chi^3 \xi^2 + (-\psi_1 -1)\chi^2 \xi^4 - 2\chi^2 \xi^2 - \chi^2\,.
    \end{aligned} \]
\end{enumerate}
\end{definition}

Equipped with the assumptions and the definitions we're now ready to state the asymptotics for each of the terms in the minority group prediction errors. The following lemma, which states the asymptotic results for $\ccB_\beta$ and $\ccV(\tau)$, has been proven in \citet{montanari2020generalization}. 
\begin{lemma}[Theorem 5.7, \citet{montanari2020generalization}] \label{lemma:asymp-bias-var}
Let the assumptions \ref{assmp:parameter-distribution}, \ref{assmp:covariate-and-weight-dist} and \ref{assmp:activation} hold. Define $\ccB^\star = \ccE_1^\star / \ccE_0^\star$ and $\ccV ^\star = \ccE_2^\star / \ccE_0^\star$ where $\ccE_0^\star$, $\ccE_1^\star$ and $\ccE_2^\star$ are defined in Definition \ref{def:some-quantities}. Then the following hold: 
\[ \textstyle
\lim_{d \to \infty} \bbE [\ccB_\beta] = \ccB^\star, ~ \lim_{d \to \infty} \bbE [\ccV(\tau)] = \tau^2\ccV^\star\,,
\] where the expectation $\bbE$ is taken over $\{x_i\}_{i = 1}^n$, $\{\theta_j\}_{j = 1}^N$, $\beta_0$ and $\delta$. 
\end{lemma}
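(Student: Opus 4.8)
The plan is to recognize that this statement is essentially a restatement rather than a genuinely new computation: the quantities $\ccB_\beta$ and $\ccV(\tau)$ are exactly the squared bias and the variance of the ridgeless random feature predictor in the \emph{single-group, well-specified} problem. Indeed, neither term involves $\delta$ — in $\ccB_\beta$ the target is $\beta_0^\top x$ alone, and we already argued that $\ccV(\tau)$ is insensitive to both $\beta_0$ and $\delta$ — so the expectation over $\beta_0,\delta$ in the statement is vacuous. Consequently, the first step I would take is to build an explicit dictionary between our notation and that of \citet{montanari2020generalization}: our Assumption \ref{assmp:covariate-and-weight-dist} (covariates and weights uniform on the sphere $\bbS^{d-1}(\sqrt{d})$) and Assumption \ref{assmp:activation} (subexponential growth of $\sigma$ and $\sigma'$) are precisely their standing hypotheses, and the constants $\mu_0,\mu_1,\mu_\star$, hence $\xi=\mu_1/\mu_\star$, $\chi$, and the polynomials $\ccE_0^\star,\ccE_1^\star,\ccE_2^\star$ of Definition \ref{def:some-quantities}, are copied from their formulas. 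Once this correspondence is verified term by term, the claims $\lim_d \bbE[\ccB_\beta]=\ccB^\star$ and $\lim_d \bbE[\ccV(\tau)]=\tau^2\ccV^\star$ follow directly from their Theorem 5.7, and nothing further is required.

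For completeness I would indicate the mechanism that generates these formulas, in case a self-contained argument is wanted. The engine is the \emph{Gaussian equivalence} of the random feature map: in the proportional regime $N/d\to\psi_1$, $n/d\to\psi_2$, the entry $\sigma(\theta_j^\top x/\sqrt{d})$ behaves, for the relevant bilinear and trace functionals, like its linearization $\mu_0+\mu_1\,\theta_j^\top x/\sqrt{d}+\mu_\star\, w_{ij}$, where $w_{ij}$ is an independent noise component carrying the nonlinear part of $\sigma$. Substituting this into $\ccB_\beta$ and $\ccV(\tau)$ converts both into normalized traces of resolvents of $Z^\top Z$ (equivalently $ZZ^\top$) evaluated in the ridgeless limit $\lambda\to 0^+$; the parameter $\xi=\mu_1/\mu_\star$ records the ratio between the linear and nonlinear parts of the feature map.

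The deterministic equivalents of these resolvent traces are governed by the limiting spectral distribution of the linearized design, which is characterized by a coupled system of fixed-point (Stieltjes-transform) equations. Solving the scalar fixed-point equation yields a quadratic whose admissible root is exactly the $\chi$ written in Definition \ref{def:some-quantities}, and the rational expressions $\ccE_1^\star/\ccE_0^\star$ and $\ccE_2^\star/\ccE_0^\star$ then emerge by differentiating and combining these Stieltjes identities to isolate the bias and variance functionals respectively. The hard part — and the reason one leans on the cited theorem rather than reproving it — is twofold: first, justifying the Gaussian equivalence rigorously, i.e.\ controlling the linearization error uniformly over the spectrum, which is the technical heart of \citet{montanari2020generalization}; and second, verifying that the ridgeless limit $\lambda\to 0^+$ commutes with the high-dimensional limit $d\to\infty$, so that the minimum-norm interpolant inherits the $\lambda\to 0$ form of the deterministic equivalents. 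Both are settled in the cited work under exactly our assumptions, so invoking Theorem 5.7 is fully justified.
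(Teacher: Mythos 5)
Your proposal is correct and takes essentially the same route as the paper: the lemma is stated as a direct citation of Theorem 5.7 of \citet{montanari2020generalization} (the paper itself notes that Assumptions \ref{assmp:covariate-and-weight-dist} and \ref{assmp:activation} are exactly the hypotheses of that work), and the appendix's derivation of $\ccB_{\ridge}$ and $\ccV_{\ridge}$ proceeds by the same kernel-linearization ($U\to\mu_1^2\bQ+\mu_\star^2\bI_N$), log-determinant/resolvent, and fixed-point machinery you sketch, followed by the $\lambda\to0^+$ limit. Your observation that the expectation over $\beta_0$ and $\delta$ is vacuous for these two terms is also consistent with the paper's discussion.
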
 

\paragraph{Trend in the variance term}
We again recall that the variance term  $\ccV(\tau)$ teases out the contributions of  noise $\eps$ in the minority group prediction error. Here, the trend that we're most interested in is the effect of overparameterization: how does the asymptotic  $\ccV^\star$ behave as a function of $\gamma = \psi_1/\psi_2 = \lim_{d \to \infty} (N/n)$ when  $\psi_2 = \lim_{d \to \infty} (n/d)$ is held fixed, and $\gamma > 1$. Although it is difficult to understand such trends from the mathematical definitions of $\ccV^\star$ we notice in the Figure \ref{fig:bias-variance} (left) that the variance decreases in overparameterized regime ($\gamma > 1$) with increasing model size ($\gamma$).

Next we study the asymptotics of $\ccB_\delta$ and $\ccC_{\beta, \delta}$ which, as discussed in lemma \ref{lemma:bias-decomposition},  quantify the part of the minority group prediction risk that comes from the model misspecification in the two group model \eqref{eq:training-distribution}.  

\begin{lemma}[Misspecification error terms] \label{lemma:misspecification}
Let the assumptions \ref{assmp:parameter-distribution}, \ref{assmp:covariate-and-weight-dist} and \ref{assmp:activation} hold.  Following the definitions of $\ccB^\star$ and $\ccV^\star$ in lemma \ref{lemma:asymp-bias-var} we further define $\Psi_2^\star = \ccB^\star - 1 + 2(\chi + \psi)$, where $\chi$ and $\psi$ are defined in Definition \ref{def:some-quantities}. Then 
\[
\lim_{d \to \infty} \bbE[\ccB_\delta] = \ccM_1^\star \triangleq \pi(1-\pi)\ccV^\star +  \pi^2\Psi_2^\star, ~ ~ \lim_{d \to \infty} \bbE[\ccC_{\beta, \delta}] = \ccM_2^\star \triangleq \pi (\ccB^\star - 1 +  \Psi_2^\star)
\] where as in lemma \ref{lemma:asymp-bias-var}, the expectation $\bbE$ is is taken over $\{x_i\}_{i = 1}^n$, $\{\theta_j\}_{j = 1}^N$, $\beta_0$ and $\delta$. 
\end{lemma}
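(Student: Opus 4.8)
The plan is to reduce both misspecification quantities to objects whose limits are already furnished by Lemma~\ref{lemma:asymp-bias-var}, plus a single genuinely new cross-term. Encode the majority-group selection by the diagonal matrix $D = \mathrm{diag}(g_1,\dots,g_n)\in\reals^{n\times n}$, so that $Z_1^\top X_1 = Z^\top D X$ and $X_1^\top Z_1 = X^\top D Z$. Abbreviating $u = Z(Z^\top Z)^\dagger z$ (so that $u^\top = z^\top(Z^\top Z)^\dagger Z^\top$), the two quantities (both normalized by $d$) become
\[
\ccB_\delta = \bbE_x\big[\|u^\top D X\|_2^2/d\big],\qquad
\ccC_{\beta,\delta} = 2\,\bbE_x\big[(u^\top X - x^\top)X^\top D u/d\big].
\]
The key observation is that the group labels $g_i\sim\Ber(\pi)$ are independent of $\{x_i\}$, $\{\theta_j\}$ and of the test point $x$, hence independent of $u$, $X$ and $z$; all dependence on the grouping is carried by $D$ alone.

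First I would condition on $(\{x_i\},\{\theta_j\},x)$ and average over $g$ by itself. Using $\bbE[g_ig_j] = \pi^2 + \pi(1-\pi)\mathbf{1}\{i=j\}$ together with $\|x_i\|_2^2 = d$ (Assumption~\ref{assmp:covariate-and-weight-dist}), so that $\sum_i u_i^2\|x_i\|_2^2 = d\|u\|_2^2$, a direct expansion yields the exact identities
\[
\bbE_g\big[\|u^\top D X\|_2^2\big] = \pi^2\|u^\top X\|_2^2 + \pi(1-\pi)\,d\,\|u\|_2^2,
\]
\[
\bbE_g\big[(u^\top X - x^\top)X^\top D u\big] = \pi\big(\|u^\top X\|_2^2 - u^\top X x\big).
\]
This averaging is exact and consistent with $n_1/n\to\pi$, and the $\pi(1-\pi)$ factor is precisely the fluctuation of the random selection. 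Taking the remaining expectations reduces everything to the three scalars $\bbE[\|u\|_2^2]$, $\bbE[\|u^\top X\|_2^2/d]$ and $\bbE[u^\top X x/d]$, via
\[
\bbE[\ccB_\delta] = \pi^2\,\bbE[\|u^\top X\|_2^2/d] + \pi(1-\pi)\,\bbE[\|u\|_2^2],\qquad
\tfrac12\bbE[\ccC_{\beta,\delta}] = \pi\,\bbE[\|u^\top X\|_2^2/d] - \pi\,\bbE[u^\top X x/d].
\]

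Next I would identify these scalars. Since $A^\dagger A A^\dagger = A^\dagger$, one has $\|u\|_2^2 = z^\top(Z^\top Z)^\dagger z$, which is exactly the quantity whose mean converges to $\ccV^\star$ in Lemma~\ref{lemma:asymp-bias-var}; thus $\bbE[\|u\|_2^2]\to\ccV^\star$. For the other two, expanding the square in the definition of $\ccB_\beta$ and using $\|x\|_2^2/d = 1$ gives $\ccB_\beta = \bbE[\|u^\top X\|_2^2/d] - 2\,\bbE[u^\top X x/d] + 1$, while $\bbE[\ccB_\beta]\to\ccB^\star$ by the same lemma. Hence it suffices to establish the single new limit $\bbE[u^\top X x/d]\to\chi+\psi$; combined with the previous relation this forces $\bbE[\|u^\top X\|_2^2/d]\to\ccB^\star - 1 + 2(\chi+\psi) = \Psi_2^\star$. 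Substituting into the two reductions and using the algebraic identity $\Psi_2^\star - (\chi+\psi) = \tfrac12(\Psi_2^\star + \ccB^\star - 1)$ then produces $\ccM_1^\star = \pi(1-\pi)\ccV^\star + \pi^2\Psi_2^\star$ and $\ccM_2^\star = \pi(\ccB^\star - 1 + \Psi_2^\star)$ exactly as stated.

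The main obstacle is the cross-term limit $\bbE[u^\top X x/d] = \tfrac1d\bbE\big[z^\top(Z^\top Z)^\dagger Z^\top X x\big]\to\chi+\psi$, which measures the correlation between the ridgeless random-feature fit and the linear target direction. Here I would invoke the random-matrix machinery of \citet{montanari2020generalization}: the Gaussian-equivalence linearization $Z\approx \mu_0\mathbf{1}\mathbf{1}^\top + (\mu_1/\sqrt d)X\Theta^\top + \mu_\star W$ isolates the linear component $(\mu_1/\sqrt d)X\Theta^\top$ of the features, which is the only part that correlates with the entries $x_i^\top x$ of $Xx$. The limit is then computed through resolvent/Stieltjes-transform identities in which $\chi$ arises as the root of the self-consistent equation $\xi^2\chi^2 + (\psi\xi^2 - \xi^2 - 1)\chi - \psi = 0$ (Definition~\ref{def:some-quantities}), with $\psi = \min\{\psi_1,\psi_2\}$. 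Controlling the non-Gaussian corrections of the activation and justifying the interchange of the test-point limit with the training-data resolvent limit is the delicate part; the group-averaging identities above are precisely what reduce the entire lemma to this one computation.
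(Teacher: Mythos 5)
Your reduction is correct, and it takes a genuinely different and in fact tidier route than the paper. Where you encode the group assignment as a diagonal Bernoulli matrix $D$ and average over $g$ exactly (yielding $\pi^2\|u^\top X\|_2^2+\pi(1-\pi)d\|u\|_2^2$ and $\pi(\|u^\top X\|_2^2-u^\top Xx)$), the paper instead fixes $n_1$ and runs an exchangeability argument over sample pairs, writing the same two quantities as $n_1 f(1,1)+n_1(n_1-1)f(1,2)$ with $f(i,j)=\bbE[x_j^\top x_i\,\tr(\Psi U\Psi z_jz_i^\top)]/d^2$; the two bookkeeping schemes agree to leading order, though strictly your version computes $\bbE_g$ of the functional and a one-line exchangeability or concentration remark is needed to match the paper's convention of deterministic $n_1/n\to\pi$. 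More substantively, you correctly observe that $\ccB_\delta$ and $\ccC_{\beta,\delta}$ are already $\beta_0$- and $\delta$-free, so you bypass entirely the paper's orthogonal decomposition $\delta=\mu\beta_0+\delta_2$ and the Haar-rotation argument killing the $\delta_2$ part (that machinery properly belongs to the proof of Lemma~\ref{lemma:bias-decomposition}, not this one). Your identifications $\bbE[\|u\|_2^2]\to\ccV^\star$ and $\ccB_\beta=\bbE[\|u^\top X\|_2^2/d]-2\bbE[u^\top Xx/d]+1$ are both right, and the final algebra reproducing $\ccM_1^\star$ and $\ccM_2^\star$ checks out.

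The one place where you have a sketch rather than a proof is the single new limit $\bbE[u^\top Xx/d]\to\chi+\psi$, and this is the actual mathematical content of the lemma. The paper does not prove it from scratch either: it appears there as the $\lambda\to0^+$ limit of $\tfrac12\partial_p g(i(\psi_1\psi_2\lambda)^{1/2};\boldsymbol{0})$, where $p$ is the coupling parameter multiplying $\bW=\mu_1X\Theta^\top/d$ in the off-diagonal block of the matrix $\bA$ of Definition~\ref{def: some-quantities 2}, evaluated via \cite[Section 9.1 and Proposition 8.5]{mei2019gen}; the value $2(\chi+\psi)$ is what makes the paper's final line $\mu\pi F_\beta^2(\ccB^\star-1+\Psi_2^\star)$ come out. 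Your Gaussian-equivalence heuristic points at the right mechanism (only the linear component $(\mu_1/\sqrt d)X\Theta^\top$ of $Z$ correlates with $Xx$), but to close the argument you would need to either carry out that resolvent computation or cite the $\partial_p g$ formula explicitly, since everything else in your proof is exact algebra hanging on this one asymptotic. Likewise $\bbE[\|u^\top X\|_2^2/d]\to\Psi_2^\star$, which you derive from $\ccB^\star$ plus the cross-term limit, is obtained in the paper independently as the limit of $\tfrac1d\tr(\Psi\bLambda\Psi Z^\top\bH Z)$ via $\partial_{s_1,t_2}g$ and $\partial_{s_2,t_2}g$; the consistency of the two derivations is a useful sanity check but not a substitute for proving the new limit.
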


Below we describe the observed trends in the bias term in the minority risk.

\paragraph{Trends in the bias term}
Here, we study the asymptotic trend for the bias term $\ccB(\beta_0, \delta)$. We see from the bias decomposition in Lemma \ref{lemma:bias-decomposition} and the term by term asymptotics in Lemmas \ref{lemma:asymp-bias-var} and  \ref{lemma:misspecification} that the bias term asymptotically converges to $F_\beta^2 \ccB^\star  + F_\delta^2 \ccM^\star_1 + F_{\beta, \delta} \ccM^\star_2$. We mainly study the trend of this bias term in overparameterized regime with respect to the three parameters: (1) the overparametrization 
parameter $\gamma$, (2) the majority group proportion $\pi$ and (3) the angle between the vectors $\beta_0$ and $\beta_1$, which is denoted by $\theta$. The middle and right panels in Figure \ref{fig:bias-variance} give an overall understanding of the trend of the bias term with respect to the aforementioned parameters. In both of these figures we set $\norm{\beta_1}_2 = \norm{\beta_0}_2 = 1$. In the middle panel, we see that for a fixed angle $\theta$, the bias generally decreases with overparameterization. In contrast, for a fixed level of overparameterization $\gamma$, the bias increases as the angle $\theta$ increases. This is not surprising as intuitively, as the angle $\theta$ grows, the separation between $\beta_0$ and $\beta_1$ becomes more prominent. As a consequence, the accuracy of the model worsens and that is being reflected in the plot of the bias term. 

Similar trend can also be observed when the majority group proportion $\pi$ varies for a fixed angle $\theta$ (right most panel of Figure \ref{fig:bias-variance}). Here we set $\beta_1 = - \beta_0$, i.e., the angle $\theta = 180^\circ$. As expected, the bias increases with parameter $\pi$ for a fixed $\gamma$ because of higher imbalance in the population. Whereas, for a fixed $\pi$, the bias decreases with increasing $\gamma$. Thus, in summary we can conclude that overparameterization generally improves the worst group performance of the model for fixed instances of $\theta$ and $\pi$.

The following theorem summarizes the asymptotic results in Lemmas \ref{lemma:asymp-bias-var} and \ref{lemma:misspecification} and states the asymptotic for minority group prediction error. 
\begin{theorem} \label{th:RF-model-summary}
Let the assumptions \ref{assmp:parameter-distribution}, \ref{assmp:covariate-and-weight-dist} and \ref{assmp:activation} hold. Following the definitions of $\ccB^\star$, $\ccV^\star$, $\ccM^\star_1$ and $\ccM_2^\star$ in Lemmas \ref{lemma:asymp-bias-var} and \ref{lemma:misspecification}, we have the term by term asymptotics:
\[
\textstyle
 \underset{d \to \infty}{\lim} \bbE [\ccB_\beta] = \ccB^\star, ~ \underset{d \to \infty}{\lim} \bbE [\ccV(\tau)] = \tau^2\ccV^\star, ~ \underset{d \to \infty}{\lim} \bbE[\ccB_\delta] =  \ccM_1^\star, ~ \text{and} ~\underset{d \to \infty}{\lim} \bbE[\ccC_{\beta, \delta}] = \ccM^\star_2,
\] and the minority group prediction error has the following asymptotic
\[
\begin{aligned}
  \bbE [R_0(\hat a)] &= F_\beta^2 \bbE [\ccB_\beta] + F_\delta^2 \bbE[\ccB_\delta] + F_{\beta, \delta} \bbE[\ccC_{\beta, \delta}] + \tau^2 \bbE [\ccV(\tau = 1)] \\
 &\to F_\beta^2 \ccB^\star  + F_\delta^2 \ccM^\star_1 + F_{\beta, \delta} \ccM^\star_2 + \tau^2 \ccV^\star\,.
\end{aligned}
\] 
\end{theorem}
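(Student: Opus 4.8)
The plan is to treat this theorem as the assembly of the decomposition lemmas and the term-wise limits already in hand, since it is explicitly a summary statement. The four term-by-term asymptotics --- $\lim \bbE[\ccB_\beta] = \ccB^\star$, $\lim\bbE[\ccV(\tau)] = \tau^2\ccV^\star$, $\lim\bbE[\ccB_\delta] = \ccM_1^\star$, and $\lim\bbE[\ccC_{\beta,\delta}] = \ccM_2^\star$ --- are nothing more than verbatim restatements of Lemma \ref{lemma:asymp-bias-var} (the first two) and Lemma \ref{lemma:misspecification} (the last two), so the opening display demands no new argument and I would simply cite those lemmas.

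First I would establish the finite-$d$ identity for $\bbE[R_0(\hat a)]$. Starting from the bias--variance decomposition \eqref{eq:bias-variance-decomposition}, $R_0(\hat a) = \ccB(\beta_0,\delta,\tau) + \ccV(\beta_0,\delta,\tau)$, I would invoke the two structural observations recorded just after that lemma: the variance term is free of $\beta_0,\delta$ and scales as $\ccV(\tau) = \tau^2\,\ccV(\tau=1)$, while the bias term is free of $\tau$ and hence equals $\ccB(\beta_0,\delta)$. Substituting the bias decomposition of Lemma \ref{lemma:bias-decomposition}, namely $\ccB(\beta_0,\delta) = F_\beta^2\ccB_\beta + F_\delta^2\ccB_\delta + F_{\beta,\delta}\ccC_{\beta,\delta}$, and taking expectations over the training data and weights, linearity of expectation yields exactly
\[
\bbE[R_0(\hat a)] = F_\beta^2\,\bbE[\ccB_\beta] + F_\delta^2\,\bbE[\ccB_\delta] + F_{\beta,\delta}\,\bbE[\ccC_{\beta,\delta}] + \tau^2\,\bbE[\ccV(\tau=1)],
\]
which is the claimed first equality. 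The factorization of expectations is legitimate because $F_\beta^2,F_\delta^2,F_{\beta,\delta},\tau^2$ are fixed constants by Assumption \ref{assmp:parameter-distribution}, while $\ccB_\beta,\ccB_\delta,\ccC_{\beta,\delta},\ccV(\tau=1)$ depend only on $\{x_i\},\{\theta_j\}$ and the test feature and carry no residual dependence on $\beta_0$ or $\delta$.

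Then I would pass to the limit $d\to\infty$. Since the right-hand side is a \emph{finite} linear combination with $d$-independent coefficients, the limit distributes across the four summands, and inserting the four term-wise limits produces $F_\beta^2\ccB^\star + F_\delta^2\ccM_1^\star + F_{\beta,\delta}\ccM_2^\star + \tau^2\ccV^\star$, completing the proof. No delicate interchange of limit and expectation is required, because each convergence is already stated in expectation form in the cited lemmas and only four terms are summed. I do not anticipate a genuine obstacle at the level of this theorem: all the analytic difficulty lives in Lemma \ref{lemma:misspecification}, whose proof must supply the exact asymptotics $\ccM_1^\star$ and $\ccM_2^\star$ of the misspecification terms. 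The one point meriting care here is the bookkeeping above --- confirming that the $\beta_0,\delta$ dependence has been fully extracted into the scalars $F_\beta^2, F_\delta^2, F_{\beta,\delta}$ so that the outer expectation acts trivially on $\ccB_\beta,\ccB_\delta,\ccC_{\beta,\delta}$ and the linear decomposition survives under $\bbE$.
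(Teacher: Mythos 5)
Your proposal is correct and matches the paper's own final assembly step (Appendix, ``Final form of limiting risk''): given Lemmas \ref{lemma:bias-decomposition}, \ref{lemma:asymp-bias-var} and \ref{lemma:misspecification}, the theorem is exactly the linear combination and term-wise passage to the limit that you describe, with no interchange of limits required. The only difference is one of scope --- the paper's appendix proof of this theorem also contains the proofs of Lemmas \ref{lemma:bias-decomposition} and \ref{lemma:misspecification} (via the Haar-rotation and random-matrix arguments), which you correctly identify as the locus of all the analytic work and treat as given; the one point worth flagging is that the decomposition in Lemma \ref{lemma:bias-decomposition} is actually established in expectation over the training data (through the rotational-invariance argument) rather than pointwise in $X,\Theta$, which is harmless for your argument since both sides of the identity you write already carry the outer expectation.
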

A complete proof of the theorem is provided in Appendix
 \ref{sec: proof  of RF-model}.

\paragraph{Trend in minority group prediction error for ERM} 
To understand the trends in minority group prediction error we combine the trends in the bias and the variance terms. We again recall that we're interested in the effect of overparameterization, \ie\ growing $\gamma$ when $\gamma > 1$. In the discussions after Lemmas \ref{lemma:asymp-bias-var} and \ref{lemma:misspecification} (and in Figure \ref{fig:bias-variance}) we notice that both the bias and variance terms decrease or do not increase with growing overparameterization for any choices of the majority group proportion $\pi$ and the angle $\theta$ between $\beta_0$ and $\beta_1$. Since the minority risk decomposes to bias$+$variance terms \eqref{eq:bias-variance-decomposition}, we notice that similar trends hold for the overall minority risk. In other words, \emph{overparameterization improves or does not harm  the  minority risk}. These trends agree with the empirical findings in \cite{pham2021effect}. 

\subsection{Limiting risk for majority group subsampling}

\begin{figure}
     \centering
         \includegraphics[width = 0.47\linewidth]{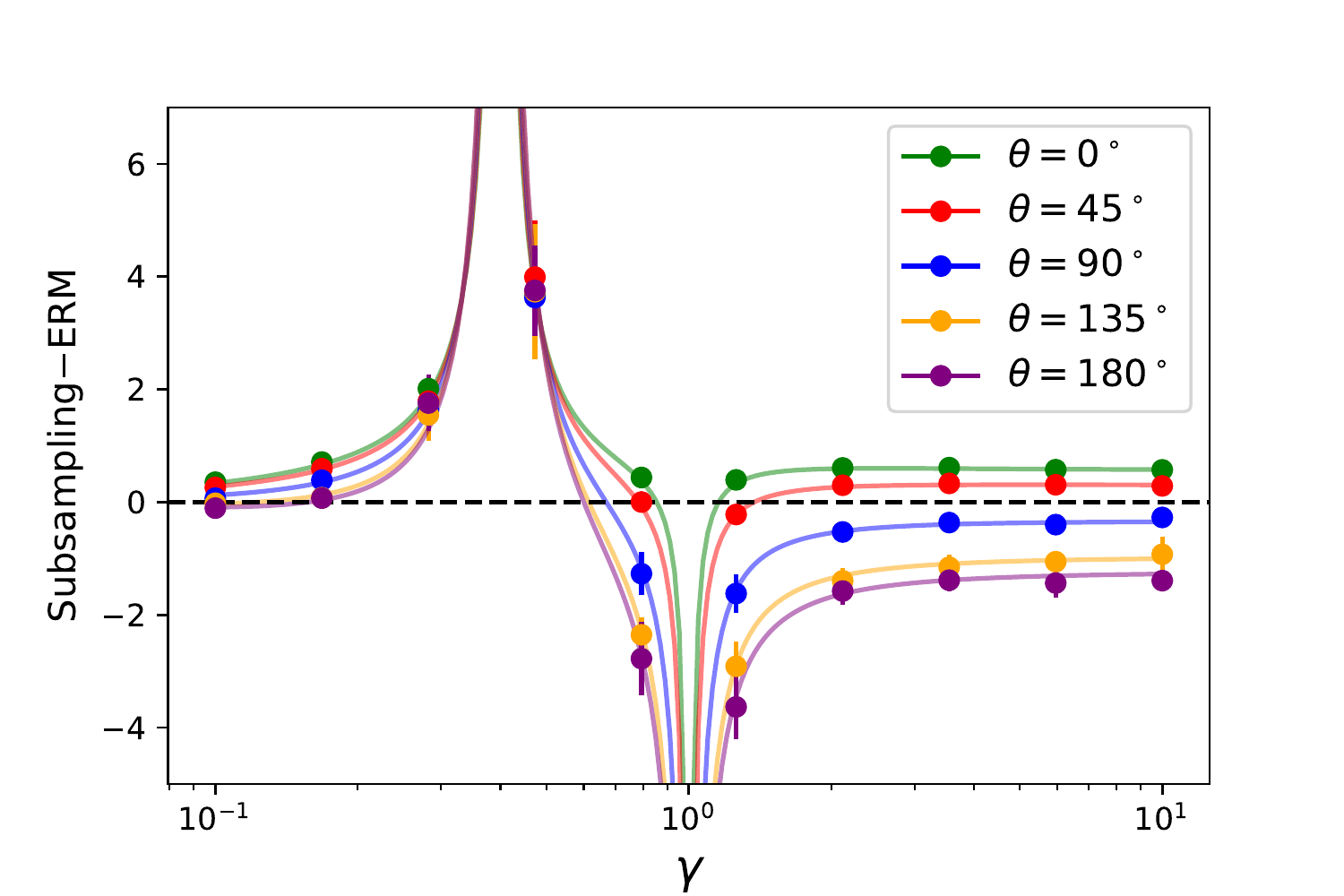}
         \includegraphics[width=0.47\linewidth]{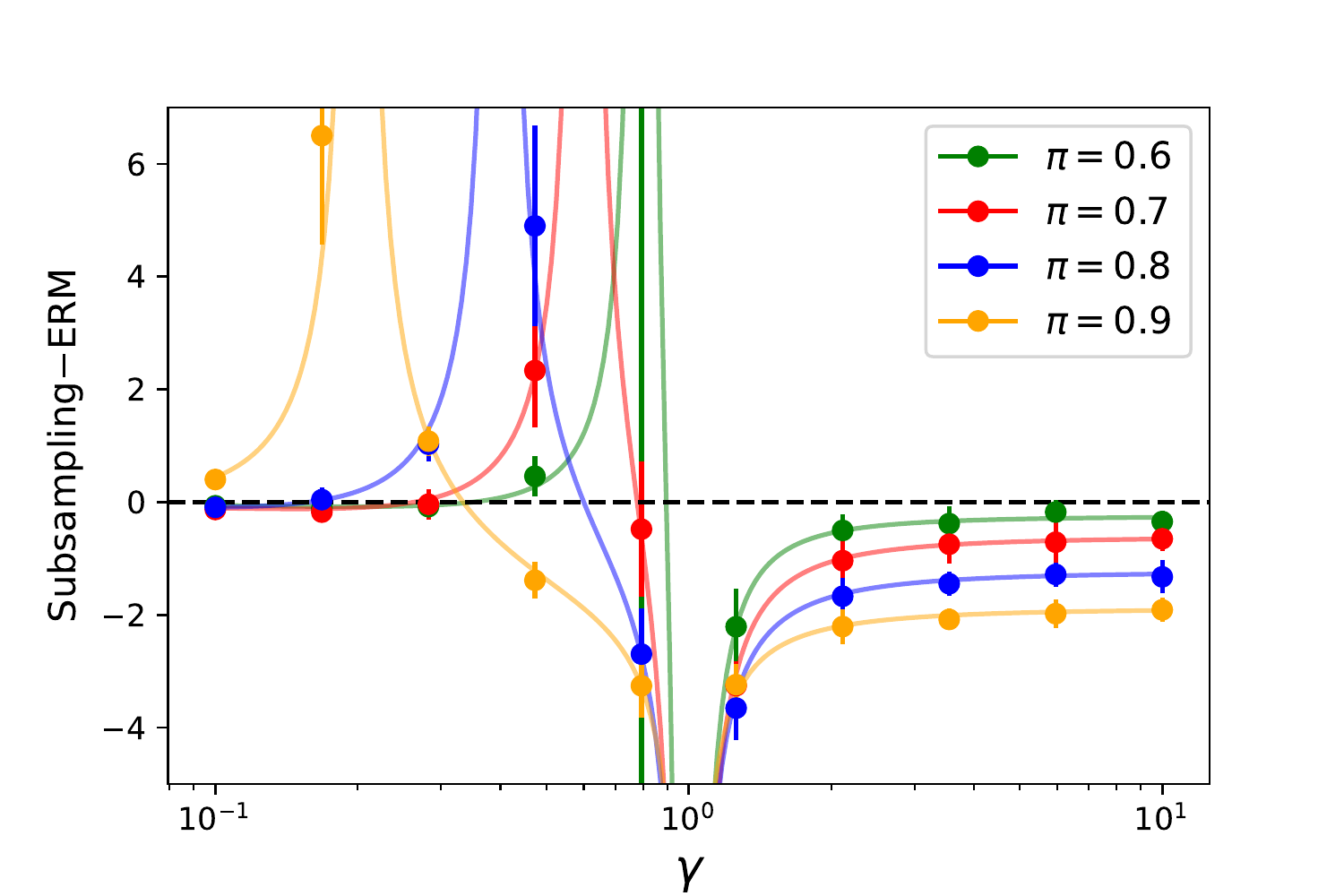}~
        \caption{Difference in minority risk between subsampling and ERM. \emph{Left}: the differences in minority risks for several values of $\theta$ (the angle between $\beta_0$ and $\beta_1$) and overparameterzation $\gamma$, where majority proportion $\pi$ is set at $0.8$. \emph{Right}: the minority risk differences for several values of $\pi$ and $\gamma$ when $\theta$ is set at $180^\circ$. }
        \label{fig:ss-erm}
\end{figure}

Though overparameterization of ML models does not harm the minority risk for ERM, they generally produce large minority risk \cite{pham2021effect}. To improve the risk over minority group
\citet{sagawa2020Investigation,idrissi2021simple} recommend using majority group subsampling before fitting ERM and show that they achieve state-of-the-art minority risks. Here, we complement their empirical finding with a theoretical study on the minority risk for random feature models in the two group regression setup \eqref{eq:training-distribution}. We recall from subsection \ref{sec:subsampling} that subsampling discards a randomly chosen subsample of size $n_1 - n_0$ from the majority group (of size $n_1$) to match the sample size ($n_0$) of the minority group, and then fits an ERM over the remaining sample points. Here, we repurpose the asymptotic results for ERM (Theorem \ref{th:RF-model-summary}) to describe the asymptotic behavior for subsampling by carefully reviewing the changes in the parameters $\psi_1$, $\psi_2$ and $\pi$.
First we notice that the total sample size after subsampling is $n_S = 2 n_0$ (both the majority and minority groups have sample size $n_{0, S} = n_{1, S} = n_0$) which  means the new majority sample proportion is   $\pi_S = n_{0, S}/n_S = 1/2$. We further notice that $\psi_{1}$ remains unchanged, \ie, $\psi_{1, S} = \psi_1$ whereas  $\psi_2 = \lim_{d \to \infty}n/d$ changes to $\psi_{2, S} = \lim_{d \to \infty}n_S/d = \lim_{d \to \infty}2 n_0/d = \lim_{d \to \infty}2 (n_0/n) \times (n/d) = 2(1 - \pi) \psi_2$. The subsampling  setup is   underparameterized when $n_S = 2n_0 > N$ or $\gamma_S = \psi_{1, S}/\psi_{2, S} = \gamma/(2 * (1 - \pi)) < 1$ and overparameterized when $\gamma_S > 1$.  Rest of the setup in subsampling remains exactly the same as  in the ERM setup, and the results in Lemmas \ref{lemma:asymp-bias-var}, \ref{lemma:misspecification} and Theorem \ref{th:RF-model-summary} continue to hold with the new parameters: $\psi_{1, S}$, $\psi_{2, S}$ and $\pi_{S}$.

\paragraph{ERM vs subsampling} 
We compare the asymptotic risks of ERM and majority group subsampling in overparameterized setting. Figure \ref{fig:ss-erm} shows the trend of the error difference between subsampling and ERM. Similar to the previous discussion, we denote by $\theta$ the angle between $\beta_1$ and $\beta_0$, and set $\norm{\beta_1}_2 = \norm{\beta_0}_2 = 1$. In the left panel, we fix the majority group parameter $\pi = 0.8$ and vary $\theta$. It is generally observed that subsampling improves the worst group performance over ERM, which is consistent with the empirical findings in \cite{sagawa2020Investigation} and \cite{idrissi2021simple}. Moreover, the improvement is prominent when the group regression coefficients are well separated, i.e., $\theta$ is large. Intuitively, when $\theta$ is very large, the distinction between majority and minority groups is more noticeable. As a consequence, the effect of under representation on minority group becomes more relevant, due to which the worst group performance of ERM is affected. Whereas, subsampling alleviates this effect of under representation by homogenizing the group sizes. On the other hand, the effect of under representation becomes less severe when $\beta_0$ and $\beta_1$ are close by, i.e., $\theta$ is small. In this case, we see subsampling does not improve the worst group performance significantly. In fact, for larger values of $\gamma$, subsampling performs slightly worse compared to ERM. This is again not surprising, as for smaller separation, the population structure of the two groups becomes more homogeneous. Thus, full sample ERM should deliver better performance compared to subsampling.

Similar trend is also observed in the right panel where we fix $\theta = 180^\circ$ and vary $\pi$. Again, improvement due to subsampling is most prominent for larger values of $\pi$, which corresponds to greater imbalance in the data. Unlike the previous case, the subsampling always helps in terms of worst group performance but the improvement becomes less noticeable with decreasing $\pi$ as overparameterization grows.

 \section{Related work}

\paragraph{Overparameterized ML models} The benefits of overparameterization for \emph{average} or \emph{overall} performance of ML models is well-studied. This phenomenon is known as ``double descent'', and it asserts that the (overall) risk of ML models is decreasing as model complexity increases past a certain point. This behavior has been studied empirically \cite{advani2017Highdimensional,belkin2018Reconciling,nakkiran2019Deep,yang2020Rethinking} and theoretically \cite{hastie2019Surprises,montanari2020generalization,mei2019generalization,deng2020Model}. All these works focus on the average performance of ML models, while we focus on the performance of ML models on minority groups. 
The most closely related paper in this line of work is \citet{pham2021effect}, where they empirically study the effect of model overparameterization on  minority risk and find that the model overparamterization generally helps or does not harm minority group performance.

\paragraph{Improving performance on minority groups} There is a long line of work on improving the performance of models on minority groups. There are methods based on reweighing/subsampling \cite{shimodaira2000Improving,cui2019ClassBalanced} and (group) distributionally robust optimization (DRO) \cite{hashimoto2018Fairness,duchi2020Distributionally,sagawa2019Distributionally}.
In the overparameterized regime, methods based on reweighing are not very effective \citep{byrd2019What}, while
subsampling has been empirically shown to improve performance on the minority groups \citep{sagawa2020Investigation,idrissi2021simple}.
Our theoretical results confirm the efficacy of subsampling for improving performance on the minority groups and demonstrate that it benefits from overparameterization.

\paragraph{Group fairness} literature proposes many definition of fairness \citep{chouldechova2018Frontiers}, some of which require similar accuracy on the minority and majority groups \citep{hardt2016Equality}. Methods for achieving group fairness typically perform ERM subject to some fairness constraints \citep{agarwal2018Reductions}. The interplay between these constraints and overparametrization is beyond the scope of this work.

\section{Summary and discussion}

In this paper, we studied the performance of overparameterized ML models on minority groups. We set up a two-group model and derived the limiting risk of random feature models in the minority group (see Theorem \ref{th:RF-model-summary}). 
In our theoretical finding we generally see that overparameterzation improves or does not harm the minority risk of ERM, which complements the findings in \cite{pham2021effect}. 
We also show theoretically that majority group subsampling is an effective way of improving the performance of overparamterized models in minority groups. This confirms the empirical results on subsampling overparameterized neural networks given in \cite{sagawa2020Investigation} and \cite{idrissi2021simple}.

\begin{wrapfigure}[19]{r}{0.4\textwidth}
\vspace{-0.4cm}
\centering
\includegraphics[width=\linewidth]{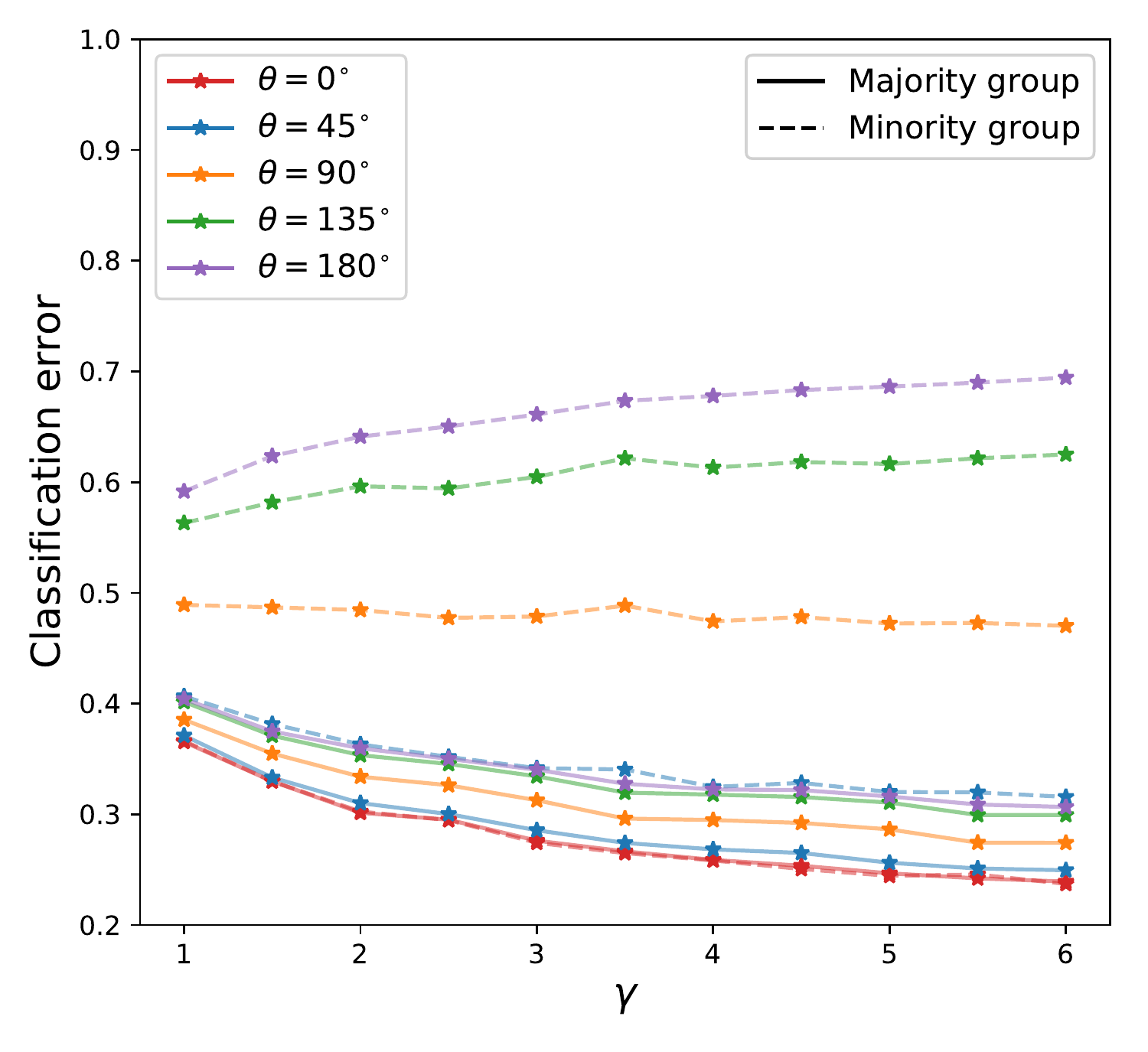}
\vspace{-.7cm}
\caption{Classification error against overparamterization level $\gamma$ of the majority and minority group for various angles $\theta$ between the coefficients of two groups under the random feature classification model.}
\label{fig:random_feature_classification}
\end{wrapfigure}

\paragraph{What about classification?} We observe that the trend of minority group generalization error in the random feature regression model is qualitatively distinct from its classification counterpart (see Appendix
 \ref{sec:random_feature_classification_model}
for the model details). Figure \ref{fig:random_feature_classification} demonstrates that although overparameterization always reduces the majority error, whether or not it benefits the minority group depends on the angle between the coefficients of the two groups: overparameterzation helps the minority group generalization error if the angle is acute, while it harms the minority error if the angle is obtuse. This is different from those trends in Figure \ref{fig:bias-variance}. The disagreement can be explained in part by the fact that the signal-to-noise ratio in classification problems is difficult to tune explicitly because the signal and noise are coupled due to the model's setup. The effect of overparametrization on minority groups under the random feature classification model is open to research by interested readers. Recent developments of convex Gaussian min-max Theorem (CGMT) and related techniques \citep{montanari2020generalization, bosch2022double} could be technically useful.

\paragraph{Assumption implications} In our study an important assumption is that the feature distributions between the two groups are the same. This is the worst case for the minority group, because an overparameterized model can't leverage any difference in the feature distributions between the groups to improve minority group performance. It is encouraging that overparameterization does not hurt the minority group even under this worst-case setting. Extending our results to the two-group setup with a shift in feature distribution is an interesting future work direction.

\paragraph{Practical implications} One of the main conclusion of this paper is that overparamaterization generally helps or does not harm the worst group performance of ERM. In other words, using overparametrized models is unlikely to magnify disparities across groups. However, we warn the practitioners that overparametrization \emph{should not} be confused with a method for improving the minority group performance. Dedicated methods such as group distributionally robust optimization \citep{sagawa2019Distributionally} and subsampling \cite{sagawa2020Investigation,idrissi2021simple} are far more effective. In particular, subsampling improves worst-group performance and benefits from overparametrization as demonstrated in prior empirical studies \cite{sagawa2020Investigation,idrissi2021simple} and in this paper.

\begin{ack}
This note is based upon work supported by the National Science Foundation (NSF) under grants no.\ 1916271, 2027737, and 2113373. Any opinions, findings, and conclusions or recommendations expressed in this note are those of the authors and do not necessarily reflect the views of the NSF.
\end{ack}

\bibliographystyle{plainnat}
\bibliography{YK,sm}

\newpage

\appendix

\section{Ridgeless regression}
\label{sec:ridgeless}

Similar to Section \ref{sec:random-feature}, here we describe analysis the linear regression model in overparameterized regime. 
The learner fits a linear model 
\[f(x) \triangleq \beta^Tx\]
to the training data, where $\beta\in\reals^d$ is a vector of regression coefficients. 
We note that although the linear model is well-specified for each group, it is \emph{misspecified} for the mixture of two groups. The learner fits a linear model to the training data in one of two ways: (1) empirical risk minimization (ERM) and (2) subsampling.

\subsection{Model fitting}

\paragraph{Empirical risk minimization (ERM)} 

The most common way of fitting a linear model is ordinary least squares (OLS):
\[\textstyle
\hbeta_{\OLS} \in \argmin_{\beta\in\reals^d}\frac1n\sum_{i=1}^n\frac12(y_i - \beta^Tx_i)^2 = \argmin_{\beta\in\reals^d}\frac{1}{2n}\|y - X\beta\|_2^2,
\]
where the rows of $X\in\reals^{n\times d}$ are the $x_i$'s and the entries of $y\in\reals^n$ are the $y_i$'s. We note that ERM does not use the sensitive attribute (even during training), so it is suitable for problems in which the training data does not include the sensitive attribute.

In the overparameterized case ($n < d$), $\hbeta_{\OLS}$ is not well-defined because there are many linear models that can interpolate the training data. In this case, we consider the minimum $\ell_2$ norm (min-norm) linear model with zero training error:
\[\textstyle
\hbeta_{\min} \in \argmin\{\|\beta\|_2\mid y = X\beta\} = (X^TX)^\dagger X^Ty.
\]
where $(X^TX)^\dagger$ denotes the Moore-Penrose pseudoinverse of $X^TX$. The min-norm least squares estimator is also known as the ridgeless least squares estimator because $\hbeta_{\min} = \lim_{\lambda\searrow 0}\hbeta_\lambda$, where
\[\textstyle
\hbeta_\lambda \in \argmin_{\beta\in\reals^d}\frac1n\|y-X\beta\|_2^2 + \lambda\|\beta\|_2^2.
\]
We note that if $X$ has full column rank (\ie\ $X^TX$ is non-singular), then $\hbeta_{\min}$ is equivalent to $\hbeta_{\OLS}$. 




\paragraph{Majority group subsampling} 

It is known that models trained by ERM may perform poorly on minority groups \cite{sagawa2019Distributionally}. One promising way of improving performance on minority groups is majority group subsampling--randomly discarding training samples from the majority group until the two groups are equally represented in the remaining training data. This achieves an effect similar to that of upweighing the loss on minority groups in the underparameterized regime.


In the underparameterized case ($n > d$), reweighing is more statistically efficient (it does not discard training samples), so subsampling is rarely used. On the other hand, in the overparameterized case, reweighing may not have the intended effect on the performance of overparameterized models in minority groups \cite{byrd2019What}, but \citet{sagawa2020Investigation} show that subsampling does. Thus we also consider subsampling here as a way of improving performance in minority groups. 
We note that reweighing requires knowledge of the sensitive attribute, so its applicability is limited to problems in which the sensitive attribute is observed during training.


\subsection{Limiting risk of ERM and subsampling}
\label{sec:isotropic-theory}

We are concerned with the performance of the fitted models on the minority group. In this paper, we measure performance on the minority group with the mean squared prediction error on a test sample from the minority group:
\begin{equation}
R_0(\beta) \triangleq \Ex\big[((\beta-\beta_0)^\top x)^2\mid X\big] = \Ex\big[(\beta - \beta_0)^\top\Sigma(\beta - \beta_0)\mid X\big].
\label{eq:minority-risk}
\end{equation}
We note that the definition of \eqref{eq:minority-risk} is conditional on $X$; \ie\ the expectation is with respect to the error terms in the training data and the features of the test sample. Although it is hard to evaluate $R_0$ exactly for finite $n$ and $d$, it is possible to approximate it with its limit in a high-dimensional asymptotic setup. In this section, we consider an asymptotic setup in which $n,d \to\infty$ in a way such that $\frac{d}{n}\to\gamma\in(0,\infty)$. If $\gamma < 1$, the problem is underparameterized; if $\gamma > 1$, it is overparameterized. 

To keep things simple, we first present our results in the special case of isotropic features. We start by formally stating the assumptions on the distribution of the feature vector $P_X$.

\begin{assumption}
\label{asm:isotropic-features}
The feature vector $x\sim P_X$ has independent entries with zero mean, unit variance, and finite $8 + \eps$ moment for some $\eps > 0$. 
\end{assumption}

\subsection{Empirical risk minimization} 

We start by decomposing the minority risk \eqref{eq:minority-risk} of the OLS estimator $\hbeta_{\OLS}$ and the ridgeless least squares estimator $\hbeta_{\min}$. Let $n_0$ and $n_1$ be the number of training examples from the minority and the majority groups respectively. Without loss of generality, we arrange the training samples so that the first $n_0$ examples are those from the minority group:
\[
X = \begin{bmatrix} X_0 \\ X_1\end{bmatrix},\quad y = \begin{bmatrix} y_0 \\ y_1\end{bmatrix}.
\]

\begin{lemma}[ERM minority risk decomposition]
\label{lem:ERM-risk-decomposition} Let $\hbeta$ be either $\hbeta_{\OLS}$ or $\hbeta_{\min}$. We have
\begin{equation}
\begin{aligned}
&R_0(\hbeta) = \Ex\big[(\hbeta - \beta_0)^\top(\hbeta - \beta_0)\big] \\
&\quad= \beta_0^\top(I_d - \Pi_X)\beta_0 + \frac{\tau^2}{n}\tr(\hSigma^\dagger)+ \delta^\top\Big\{\begin{bmatrix} 0_{n_0\times d} \\ X_1\end{bmatrix}^\top X/n\Big\} (\hSigmap)^{2}\Big\{X^\top \begin{bmatrix} 0_{n_0\times d} \\ X_1\end{bmatrix}/n\Big\}\delta \\
& ~~~~ + 2 \delta^\top\Big\{\begin{bmatrix} 0_{n_0\times d} \\ X_1\end{bmatrix}^\top X/n\Big\} (\hSigmap)^{2}\hSigma \beta_0
\end{aligned}
\label{eq:ERM-risk-decomposition}
\end{equation}
where $\Pi_X \triangleq X^\dagger X$ is the projector onto $\ran(X^\top)$, $\delta \triangleq \beta_0-\beta_1$, and $\hSigma \triangleq \frac1nX^\top X$ is the sample covariance matrix of the features in the training data.
\end{lemma}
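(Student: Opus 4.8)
The plan is to reduce the minority risk to an explicit quadratic form in $\hbeta-\beta_0$ and then expand it by separating the three sources of error: the unrecoverable signal, the group misspecification $\delta$, and the training noise $\varepsilon$. By Assumption~\ref{asm:isotropic-features} the test covariance is $\Sigma=I_d$, so that $R_0(\hbeta)=\Ex[(\hbeta-\beta_0)^\top(\hbeta-\beta_0)]=\Ex\|\hbeta-\beta_0\|_2^2$, where the expectation is over $\varepsilon$ with $X$ held fixed; this is the quantity to decompose.

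First I would substitute the two-group generative model into $y$. Writing the minority block as $y_0=X_0\beta_0+\varepsilon_0$ and the majority block as $y_1=X_1\beta_1+\varepsilon_1=X_1\beta_0-X_1\delta+\varepsilon_1$ (using $\delta=\beta_0-\beta_1$), the stacked response is $y=X\beta_0-\begin{bmatrix}0_{n_0\times d}\\ X_1\end{bmatrix}\delta+\varepsilon$. For both $\hbeta_{\OLS}$ and the min-norm interpolant the unified formula $\hbeta=(X^\top X)^\dagger X^\top y=\frac{1}{n}\hSigmap X^\top y$ holds, and since $\frac1n\hSigmap X^\top X=\hSigmap\hSigma=\Pi_X$, substituting gives the three-way split
\[
\hbeta-\beta_0=-(I_d-\Pi_X)\beta_0\;-\;\frac{1}{n}\hSigmap X^\top\begin{bmatrix}0_{n_0\times d}\\ X_1\end{bmatrix}\delta\;+\;\frac{1}{n}\hSigmap X^\top\varepsilon,
\]
whose summands are, respectively, the approximation (bias) term, the misspecification term, and the noise term.

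Next I would expand $\Ex\|\hbeta-\beta_0\|_2^2$ into the three squared norms plus the three pairwise inner products and integrate over $\varepsilon$. Since $\varepsilon$ has mean zero, the two inner products linear in $\varepsilon$ vanish. The noise square becomes $\frac{\tau^2}{n^2}\tr(X(\hSigmap)^2X^\top)=\frac{\tau^2}{n}\tr((\hSigmap)^2\hSigma)=\frac{\tau^2}{n}\tr(\hSigmap)$, using $\Ex[\varepsilon\varepsilon^\top]=\tau^2 I_n$, cyclicity of the trace, $X^\top X=n\hSigma$, and the Moore--Penrose identity $(\hSigmap)^2\hSigma=\hSigmap$. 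The square of the bias term equals $\beta_0^\top(I_d-\Pi_X)\beta_0$ because $I_d-\Pi_X$ is a symmetric idempotent; the square of the misspecification term is exactly the stated $\delta$-quadratic; and evaluating the remaining deterministic cross term between the bias and misspecification pieces completes the decomposition.

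The main obstacle is the pseudoinverse bookkeeping needed to cast each contribution into the stated form, and above all the evaluation of the deterministic bias--misspecification cross term. This step leans on the projection and Moore--Penrose identities $\Pi_X=\hSigmap\hSigma$, $\hSigmap\hSigma\hSigmap=\hSigmap$, $(\hSigmap)^2\hSigma=\hSigmap$, together with the range relation $\ran(\hSigmap)=\ran(X^\top)$; the bias piece lives in $\ker(X)$ whereas the misspecification and noise pieces live in $\ran(X^\top)$, and keeping careful track of these subspaces is what drives the trace simplification and the cancellation of the $\varepsilon$-linear terms. Once these identities are assembled, collecting the surviving terms yields the decomposition \eqref{eq:ERM-risk-decomposition}.
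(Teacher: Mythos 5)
Your route is the same as the paper's own proof: it too writes $\hbeta-\beta_0=(\Pi_X-I_d)\beta_0+\bU M\delta+\bU\eps$, where $\bU=\frac1n\hSigmap X^\top$ and $M$ is the zero-padded majority design matrix, expands the square into five terms, kills the two $\eps$-linear cross terms using $\Ex[\eps]=0$, and reads off the squared bias $\beta_0^\top(I_d-\Pi_X)\beta_0$, the variance $\frac{\tau^2}{n}\tr(\hSigmap)$, and the $\delta$-quadratic exactly as you do. Up to that point your computations are correct and match the paper's terms I, II, III and V.

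The gap is that the one term whose final form is not immediate from the expansion --- the deterministic cross term between the bias piece and the misspecification piece, i.e.\ the paper's term IV and the fourth term of \eqref{eq:ERM-risk-decomposition} --- is exactly the term you never evaluate, and the tools you propose for evaluating it do not lead to the stated expression. As you yourself observe, the bias piece $-(I_d-\Pi_X)\beta_0$ lies in $\ker(X)=\ran(X^\top)^{\perp}$ while the misspecification piece $\bU M\delta$ lies in $\ran(\hSigmap)=\ran(X^\top)$; these subspaces are orthogonal, so that cross term is identically zero (equivalently, $\hSigmap\Pi_X=\hSigmap$ forces $\frac1nX\hSigmap(\Pi_X-I_d)=0$). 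By contrast, the fourth term of \eqref{eq:ERM-risk-decomposition} simplifies, via $(\hSigmap)^2\hSigma=\hSigmap$ and $M^\top X=X_1^\top X_1$, to $\frac2n\delta^\top X_1^\top X_1\hSigmap\beta_0$, which is generically nonzero and is assigned the nonzero limit $c\pi(\{1/\gamma\}\wedge 1)$ in Lemma~\ref{lem:ERM-approx-err}. Carried out faithfully, your plan therefore yields \eqref{eq:ERM-risk-decomposition} with the last term replaced by zero rather than the identity as stated: you must either supply the computation that turns your cross term into $2\delta^\top\{M^\top X/n\}(\hSigmap)^{2}\hSigma\beta_0$ --- which your own subspace argument precludes --- or reconcile the discrepancy with the statement (the paper's proof performs this identification ``by properties of the Moore--Penrose pseudoinverse,'' and also tacitly uses the opposite sign convention $\delta=\beta_1-\beta_0$ when substituting the model for $y$, which only affects this linear-in-$\delta$ term). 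Deferring the step with ``evaluating the remaining cross term completes the decomposition'' hides the entire content of the lemma beyond the single-group case.
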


\paragraph{Inductive bias} We recognize the first term on the right side of \eqref{eq:ERM-risk-decomposition} as a squared bias term. This term reflects the inductive bias of ridgeless least squares: it is orthogonal to $\ker(X)$, so it cannot capture the part of $\beta_0$ in $\ker(X)$. We note that this term is only non-zero in the overparameterized regime: $\hbeta_{\OLS}$ has no inductive bias. 

\begin{lemma}[\citet{hastie2019Surprises}, Lemma 2]
\label{lem:ERM-bias}
In addition to Assumption \ref{asm:isotropic-features}, assume $\|\beta_0\|_2^2 = s_0$ for all $n$, $d$. We have 
\[
\beta_0^\top(I_d - \Pi_X)\beta_0 \stackrel{p}{\to} \textstyle \big\{s_0\big(1-\frac{1}{\gamma}\big)\big\} \vee 0
\text{ as $n,d\to\infty$, $\frac{d}{n}\to\gamma$.}
\]
\end{lemma}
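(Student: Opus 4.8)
The plan is to read $I_d - \Pi_X$ as the orthogonal projector onto $\ker(X)$, so that $\beta_0^\top(I_d - \Pi_X)\beta_0$ is the squared length of the component of $\beta_0$ lying in the null space of $X$. I would first dispose of the underparameterized regime $\gamma < 1$: under Assumption \ref{asm:isotropic-features} the smallest singular value of the $n\times d$ matrix $X$ is bounded away from zero with probability tending to one once $n > d$, so $X$ has full column rank, $\Pi_X = I_d$, and the quantity equals $0$ with high probability. This matches the claimed limit because $s_0(1 - 1/\gamma) < 0$ forces $\{s_0(1-1/\gamma)\}\vee 0 = 0$ in this regime. All the work is therefore in the overparameterized regime $\gamma > 1$, where $X$ has full row rank $n$ with high probability and $\Pi_X = X^\top(XX^\top)^{-1}X$, so that $\beta_0^\top(I_d - \Pi_X)\beta_0 = s_0 - \beta_0^\top X^\top(XX^\top)^{-1}X\beta_0$.

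For intuition and for the Gaussian special case I would exploit rotational invariance: when the entries of $X$ are i.i.d.\ standard normal, the law of $\Pi_X$ is invariant under $X \mapsto XO$ for orthogonal $O$, so the distribution of $\beta_0^\top(I_d - \Pi_X)\beta_0$ depends on $\beta_0$ only through $\|\beta_0\|_2 = \sqrt{s_0}$; taking $\beta_0 = \sqrt{s_0}\,e_1$ reduces the quantity to $s_0\,[I_d - \Pi_X]_{11}$. By exchangeability of the coordinates, $\Ex[I_d - \Pi_X]_{11} = \tfrac1d\tr(I_d - \Pi_X) = \tfrac{d-n}{d} \to 1 - 1/\gamma$, and a variance bound on a diagonal entry of the projector onto a uniformly random $(d-n)$-dimensional subspace delivers the concentration, giving $s_0(1 - 1/\gamma)$.

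For general isotropic features I would instead pass through the resolvent. With $\hSigma = \tfrac1n X^\top X$, the operator $\lim_{\lambda\searrow 0}\lambda(\hSigma + \lambda I_d)^{-1}$ is exactly the projector onto $\ker(\hSigma) = \ker(X)$, which yields the representation
\[
\beta_0^\top(I_d - \Pi_X)\beta_0 = \lim_{\lambda\searrow 0}\lambda\,\beta_0^\top(\hSigma + \lambda I_d)^{-1}\beta_0.
\]
The two ingredients are then: (i) a deterministic-equivalent/trace-concentration statement for isotropic designs, namely $\beta_0^\top(\hSigma + \lambda I_d)^{-1}\beta_0 = s_0\cdot\tfrac1d\tr\{(\hSigma + \lambda I_d)^{-1}\} + o_P(1)$ for each fixed $\lambda > 0$, which follows from standard quadratic-form concentration of Bai--Silverstein type using the $8+\eps$ moment bound; and (ii) the Marchenko--Pastur computation $\lim_{\lambda\searrow 0}\lambda\cdot\tfrac1d\tr\{(\hSigma + \lambda I_d)^{-1}\} = (1 - 1/\gamma)_+$, reflecting that the $d - n$ exact zero eigenvalues of $\hSigma$ contribute $\tfrac{d-n}{d\lambda}$ to the normalized trace while the positive bulk contributes $O(1)$.

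The main obstacle is the interchange of the two limits $\lambda\searrow 0$ and $n,d\to\infty$: the deterministic equivalent in (i) is stated for fixed $\lambda$, whereas the target is the $\lambda\to 0$ limit, and near $\lambda = 0$ the resolvent blows up along $\ker(X)$. I would control this by observing that, eigenvalue-wise, $\lambda\mapsto \lambda(s+\lambda)^{-1}$ is increasing, so both $\lambda\,\beta_0^\top(\hSigma+\lambda I_d)^{-1}\beta_0$ and its trace counterpart are monotone in $\lambda$ for every realization; combining this monotonicity with the fixed-$\lambda$ convergence in a Moore--Osgood/Dini-type argument sandwiches $\beta_0^\top(I_d-\Pi_X)\beta_0$ between quantities that both tend to $s_0(1-1/\gamma)_+$, legitimizing the exchange. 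This is precisely the delicate bias analysis carried out in \citet{hastie2019Surprises}.
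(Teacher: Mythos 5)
The paper does not prove this statement at all: it is imported verbatim as Lemma 2 of \citet{hastie2019Surprises}, and the appendix only supplies proofs for Lemmas \ref{lem:ERM-risk-decomposition} and \ref{lem:ERM-approx-err}. So the comparison is really between your argument and the cited source, and your route --- identify $I_d-\Pi_X$ with the projector onto $\ker(X)$, represent the quadratic form as $\lim_{\lambda\searrow 0}\lambda\,\beta_0^\top(\hSigma+\lambda I_d)^{-1}\beta_0$, replace $(\hSigma+\lambda I_d)^{-1}$ by $m(-\lambda)I_d$ via a deterministic equivalent, and evaluate $\lambda m(-\lambda)\to(1-1/\gamma)_+$ from the mass of the Marchenko--Pastur atom at zero --- is essentially the strategy of the original proof (the $8+\eps$ moment condition in Assumption \ref{asm:isotropic-features} is there precisely to invoke a Rubio--Mestre-type deterministic equivalent for a \emph{deterministic} direction $\beta_0$; calling this ``Bai--Silverstein quadratic-form concentration'' is a slight mislabel, since that lemma concerns random vectors with independent entries hit against an independent matrix, but the substance is right). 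The Gaussian rotational-invariance warm-up and the $\gamma<1$ case are both fine.

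The one place where your write-up is genuinely incomplete is the interchange of $\lambda\searrow 0$ with $n,d\to\infty$. Monotonicity of $\lambda\mapsto\lambda(s+\lambda)^{-1}$ gives you only \emph{half} of the sandwich: since $\lambda\beta_0^\top(\hSigma+\lambda I_d)^{-1}\beta_0$ decreases to $\beta_0^\top(I_d-\Pi_X)\beta_0$ as $\lambda\searrow 0$, you get the upper bound $\beta_0^\top(I_d-\Pi_X)\beta_0\le\lambda\beta_0^\top(\hSigma+\lambda I_d)^{-1}\beta_0$ for free, but a Dini-type argument does not by itself produce the matching lower bound. You need to control the gap
\[
\lambda\,\beta_0^\top(\hSigma+\lambda I_d)^{-1}\beta_0-\beta_0^\top(I_d-\Pi_X)\beta_0
=\sum_{i:\,s_i>0}\frac{\lambda}{s_i+\lambda}\,(v_i^\top\beta_0)^2
\le\frac{\lambda}{s_{\min}^{+}+\lambda}\,s_0,
\]
where $s_{\min}^{+}$ is the smallest nonzero eigenvalue of $\hSigma$. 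For $\gamma>1$ this is handled by a Bai--Yin-type bound showing $s_{\min}^{+}$ is bounded away from zero with probability tending to one (available under the stated moment assumption), after which the gap is $O(\lambda)$ uniformly and both halves of the sandwich close; at $\gamma=1$ the claimed limit is $0$ and the upper bound alone suffices by nonnegativity. Adding that eigenvalue control is the only missing ingredient; with it, your proof is complete and matches the cited source's.
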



\paragraph{Variance} The second term on the right side of \eqref{eq:ERM-risk-decomposition} is a variance term. The limit of this term in the high-dimensional asymptotic setting is known. 

\begin{lemma}[\citet{hastie2019Surprises}, Theorem 1, Lemma 3]
\label{lem:ERM-variance}
Under Assumption \ref{asm:isotropic-features}, we have 
\[
\frac{\tau^2}{n}\tr(\hSigma^\dagger)\stackrel{p}{\to} \begin{cases}\tau^2\frac{\gamma}{1-\gamma} & \gamma < 1 \\ \frac{\tau^2}{\gamma - 1} & \gamma > 1\end{cases}
\text{ as $n,d\to\infty$, $\frac{d}{n}\to\gamma$.}
\]
\end{lemma}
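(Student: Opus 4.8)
The plan is to identify $\tfrac{1}{n}\tr(\hSigmap)$ as a linear spectral statistic of the sample covariance matrix $\hSigma=\tfrac1n X^\top X$ and to evaluate its limit via the Marchenko--Pastur (MP) law. Under Assumption~\ref{asm:isotropic-features} the matrix $X$ has (nearly) i.i.d.\ isotropic entries, and since $d/n\to\gamma$ the empirical spectral distribution of $\hSigma$ converges weakly to the MP law with aspect ratio $\gamma$. Writing $\tfrac1n\tr(\hSigmap)=\tfrac1n\sum_{\lambda_i(\hSigma)>0}\lambda_i(\hSigma)^{-1}$, the quantity is the integral of $x\mapsto 1/x$ against the nonzero part of the limiting spectrum. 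The proof thus reduces to three steps: invoke the MP limit, compute $\int x^{-1}$, and justify passing to the limit for this unbounded integrand. I would handle $\gamma<1$ and $\gamma>1$ separately.

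For $\gamma<1$, $\hSigma$ is invertible almost surely for large $n$, so $\hSigmap=\hSigma^{-1}$ and
\[
\tfrac1n\tr(\hSigma^{-1})=\tfrac{d}{n}\cdot\tfrac1d\tr(\hSigma^{-1}).
\]
By the MP theorem $\tfrac1d\tr(\hSigma^{-1})\to\mathbb{E}_{\mathrm{MP}(\gamma)}[1/X]=\tfrac{1}{1-\gamma}$ --- a moment one can read off from the inverse-Wishart mean or, equivalently, from the self-consistent equation for the Stieltjes transform of the MP law. Multiplying by $d/n\to\gamma$ gives $\tfrac{\gamma}{1-\gamma}$, and the prefactor $\tau^2$ produces the stated limit.

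For $\gamma>1$ I would use the transpose trick: the nonzero eigenvalues of $\hSigma=\tfrac1nX^\top X$ coincide with those of the $n\times n$ companion matrix $\widetilde\Sigma=\tfrac1nXX^\top$, which is invertible almost surely when $d>n$. Hence $\tfrac1n\tr(\hSigmap)=\tfrac1n\tr(\widetilde\Sigma^{-1})$. Rescaling to $W=\tfrac1dXX^\top$, whose spectrum follows the MP law with ratio $n/d\to1/\gamma<1$, gives $\tfrac1n\tr(\widetilde\Sigma^{-1})=\tfrac{n}{d}\cdot\tfrac1n\tr(W^{-1})\to\tfrac1\gamma\cdot\tfrac{1}{1-1/\gamma}=\tfrac{1}{\gamma-1}$, again up to the factor $\tau^2$.

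The main obstacle is the last step. Because $1/x$ blows up at the origin, weak convergence of the empirical spectral distribution alone does not transfer to the trace-of-inverse statistic; one must rule out eigenvalues accumulating near $0$. The crux is therefore to show that the smallest nonzero eigenvalue of $\hSigma$ (resp.\ of $\widetilde\Sigma$) converges to the strictly positive left edge $(1-\sqrt\gamma)^2$ (resp.\ $(1-\sqrt{1/\gamma})^2$). This is exactly where the finite $8+\eps$ moment in Assumption~\ref{asm:isotropic-features} is used, via Bai--Yin-type control of the extreme eigenvalues: it both legitimizes integrating the unbounded $1/x$ against the limiting law and upgrades almost-sure convergence of the statistic to the claimed convergence in probability. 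The boundary case $\gamma=1$ is excluded precisely because there the smallest eigenvalue tends to $0$ and the statistic diverges.
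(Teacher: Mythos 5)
Your proposal is correct and follows the standard Marchenko--Pastur argument (including the companion-matrix trick for $\gamma>1$ and the need for extreme-eigenvalue control to integrate $1/x$), which is exactly the route taken by the cited source \citet{hastie2019Surprises}; the paper itself does not reprove this lemma but uses the same Stieltjes-transform machinery for the closely related quantity $\tfrac1n\tr\{(\hSigmap)^2\}$ in its Lemma~\ref{lemma:tech-reg-1}. One trivial slip: almost-sure convergence already implies convergence in probability, so no ``upgrade'' is needed there.
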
  

\paragraph{Approximation error} The third and forth terms in \eqref{eq:ERM-risk-decomposition} reflects the approximation error of $\hbeta_{\OLS}$ and $\hbeta_{\min}$ because the linear model is misspecified for the mixture of two groups. Unlike the inductive bias and variance terms, this term does not appear in prior studies of the average/overall risk of the ridgeless least squares estimator \cite{hastie2019Surprises}.

\begin{lemma}
\label{lem:ERM-approx-err}
In addition to Assumption \ref{asm:isotropic-features}, assume $\|\delta\|_2^2 = r$ and $\delta ^\top \beta_0 = c$ for all $n$, $d$. As $d \to \infty$ we have 
\[
\delta^\top\Big\{\begin{bmatrix} 0_{n_0\times d} \\ X_1\end{bmatrix}^\top X/n\Big\} (\hSigmap)^{2}\Big\{X^\top \begin{bmatrix} 0_{n_0\times d} \\ X_1\end{bmatrix}/n\Big\}\delta \stackrel{p}{\to} \begin{cases}r\frac{\pi \gamma}{ 1- \gamma} + r\frac{\pi^2 (1-2\gamma)}{1-\gamma} & \gamma < 1 \\ r\frac{\pi}{\gamma - 1} + r\frac{\pi^2(\gamma - 2)}{\gamma(\gamma - 1)} & \gamma > 1\end{cases}
\] and 
\[
\delta^\top\Big\{\begin{bmatrix} 0_{n_0\times d} \\ X_1\end{bmatrix}^\top X/n\Big\} (\hSigmap)^{2}\hSigma \beta_0 \stackrel{p}{\to} c\pi (\{1/\gamma\} \wedge 1)\,.
\]
\end{lemma}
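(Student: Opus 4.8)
The plan is to first remove the normalisations and expose a clean algebraic form. Let $P\in\reals^{n\times n}$ be the diagonal $0/1$ matrix selecting the majority rows, set $\hSigma_{\mathrm{maj}}\triangleq\tfrac1n X^\top P X=\tfrac1n X_1^\top X_1$, and note the block identity $\begin{bmatrix}0_{n_0\times d}\\ X_1\end{bmatrix}^\top X=X_1^\top X_1$. Substituting these, the two quantities in the lemma become exactly $\delta^\top\hSigma_{\mathrm{maj}}(\hSigmap)^2\hSigma_{\mathrm{maj}}\delta$ and $\delta^\top\hSigma_{\mathrm{maj}}(\hSigmap)^2\hSigma\beta_0$; since $(\hSigmap)^2\hSigma=\hSigmap$, the second collapses to $\delta^\top\hSigma_{\mathrm{maj}}\hSigmap\beta_0$. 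Because the rows of $X_1$ lie in $\mathrm{row}(X)=\mathrm{im}(\hSigma)$, both $\hSigma_{\mathrm{maj}}\delta$ and $\hSigma\beta_0$ lie in $\mathrm{im}(\hSigma)$, so writing $R_\lambda\triangleq(\hSigma+\lambda I)^{-1}$ I may legitimately replace $\hSigmap$ by $\lim_{\lambda\to0^+}R_\lambda$ with no kernel blow-up.

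The second step is to scalarise the bilinear forms. Assumption~\ref{asm:isotropic-features} makes the population covariance isotropic, so the deterministic equivalents of the matrix functions above are scalar multiples of $I_d$, and the $8+\eps$ moment bound is precisely the regularity required by the relevant anisotropic Marchenko--Pastur / local-law estimates. For fixed $\delta,\beta_0$ this gives $\delta^\top\hSigma_{\mathrm{maj}}(\hSigmap)^2\hSigma_{\mathrm{maj}}\delta\to r\,\tau_1$ and $\delta^\top\hSigma_{\mathrm{maj}}\hSigmap\beta_0\to c\,\tau_2$, where $\tau_1=\lim_d\tfrac1d\tr[\hSigma_{\mathrm{maj}}(\hSigmap)^2\hSigma_{\mathrm{maj}}]$ and $\tau_2=\lim_d\tfrac1d\tr[\hSigma_{\mathrm{maj}}\hSigmap]$. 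The cross term is then immediate: $\tfrac1d\tr[\hSigma_{\mathrm{maj}}\hSigmap]=\tfrac1d\tr[P\,\Pi_{\mathrm{col}(X)}]=\tfrac1d\sum_{i\in\mathrm{maj}}h_{ii}$ with leverage scores $h_{ii}=x_i^\top(X^\top X)^\dagger x_i$, and since $\sum_{i=1}^n h_{ii}=\mathrm{rank}(X)=d\wedge n$ while the samples are exchangeable, $\bbE h_{ii}=(d\wedge n)/n$; hence $\tau_2\to\pi(1\wedge\gamma^{-1})$, matching $c\pi(\{1/\gamma\}\wedge1)$.

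The substance of the proof is $\tau_1$, where the obstacle is that $\hSigma_{\mathrm{maj}}$ and $\hSigma$ are strongly dependent, the former being built from a subset of the samples defining the latter. To treat the squared inverse I would use $R_\lambda^2=-\partial_\lambda R_\lambda$, so that $\tfrac1d\tr[\hSigma_{\mathrm{maj}}R_\lambda^2\hSigma_{\mathrm{maj}}]=-\partial_\lambda\,\tfrac1d\tr[\hSigma_{\mathrm{maj}}^2 R_\lambda]$, reducing the task to a single-resolvent trace, a $\lambda$-derivative, and the limit $\lambda\to0^+$. To evaluate $\tfrac1d\tr[\hSigma_{\mathrm{maj}}^2 R_\lambda]$ I would split $\hSigma=\tfrac{n_0}{n}\hSigma_0+\tfrac{n_1}{n}\hSigma_1$ into its two \emph{independent} group blocks and use the self-consistent (fixed-point) equations for the Stieltjes transform of an isotropic sample covariance, isolating the majority block through the weighted-leverage identity $\tfrac1n\tr[P\,XR_\lambda X^\top]$ at regularisation level $\lambda$. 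Here the contribution that is linear in $\pi$ comes from the ``self'' part of $\hSigma_{\mathrm{maj}}^2$ (the diagonal of the majority Gram matrix) and reduces to known single-population resolvent traces in the spirit of Lemma~\ref{lem:ERM-variance}, whereas the $\pi^2$ contribution is the genuinely new two-population term; splitting into $\gamma<1$ (where $\hSigma$ is invertible and the $\lambda\to0$ limit is benign) and $\gamma>1$ (where the rank-$n$ null space of $\hSigma$ is tracked through a Schur-complement reduction of $XX^\top$) should produce the stated rational functions of $\gamma$ and $\pi$.

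I expect $\tau_1$ to be the main obstacle: unlike the cross term it resists a one-line exchangeability argument, and the dependence between $\hSigma_{\mathrm{maj}}$ and $\hSigma$ forces a genuine two-population deterministic-equivalent (equivalently, block Schur-complement) computation, the most delicate point being the interchange of the $d\to\infty$ and $\lambda\to0^+$ limits near the interpolation threshold $\gamma=1$. Finally, upgrading each convergence from expectation to $\stackrel{p}{\to}$ would follow from standard concentration for quadratic forms and resolvents under the $8+\eps$ moment assumption, and I would cross-check the resulting constants against degenerate limits of $\pi$ and $\gamma$ as a safeguard on the bookkeeping.
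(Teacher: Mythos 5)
Your reduction of both quantities is right (including $(\hSigmap)^2\hSigma=\hSigmap$), and your treatment of the cross term is essentially the paper's: after scalarising the bilinear form, the paper also writes $\tfrac1d\tr\{\hSigma_{\mathrm{maj}}\hSigmap\}$ as a sum over majority samples, invokes exchangeability to pull out the factor $\pi$, and uses the fact that $\hSigma\hSigmap$ is a rank-$(d\wedge n)$ projector --- your leverage-score phrasing is the same argument. (One caveat on scalarisation: the paper's version rotates by a Haar-distributed orthogonal matrix and uses $X\overset{d}{=}XO$, which really needs rotational invariance of the rows rather than just Assumption \ref{asm:isotropic-features}; your appeal to isotropic local laws is arguably the more defensible route under the stated moment condition, but neither of us gets this for free.)

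The gap is $\tau_1$, and you have correctly identified it as the substance of the lemma --- but you have only proposed a programme (two-population self-consistent equations, block Schur complements, an interchange of limits near $\gamma=1$) without executing it, so the specific coefficients $\pi\gamma/(1-\gamma)+\pi^2(1-2\gamma)/(1-\gamma)$ and their $\gamma>1$ counterparts are never derived. The paper avoids the two-population machinery entirely with a much lighter device that you are missing. Expand $\tfrac1d\Ex\,\tr\{\tfrac{X_1^\top X_1}{n}(\hSigmap)^2\tfrac{X_1^\top X_1}{n}\}=n_1F_{1,1}+n_1(n_1-1)F_{1,2}$, where $F_{1,1}$ is the contribution of a single sample paired with itself and $F_{1,2}$ that of two distinct samples; by exchangeability these per-pair quantities do not depend on group membership, so the majority-restricted sum is $\pi\,(nF_{1,1})+\pi^2\,(n(n-1)F_{1,2})$ asymptotically. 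The term $nF_{1,1}$ is computed directly by a leave-one-out Woodbury identity, $\tfrac1n x_1^\top(\hSigma+zI)^{-2}x_1=\tfrac{\frac1n x_1^\top A_z^{-2}x_1}{(1+\frac1n x_1^\top A_z^{-1}x_1)^2}$, whose numerator and denominator converge to derivatives/values of the Marchenko--Pastur Stieltjes transform (the paper's Lemma \ref{lemma:tech-reg-1}). Then --- this is the key shortcut --- the full-sample identity $\tfrac1d\Ex\,\tr\{\hSigma\hSigmap\}=nF_{1,1}+n(n-1)F_{1,2}=\{1/\gamma\}\wedge1$ hands you $n(n-1)F_{1,2}$ for free, with no second resolvent computation and no two-population deterministic equivalent. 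Without either this trick or an actually carried-out version of your fixed-point computation, the stated limits for the quadratic term are not established.
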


\begin{figure}
\begin{center}
\centerline{\includegraphics[width=0.48\textwidth]{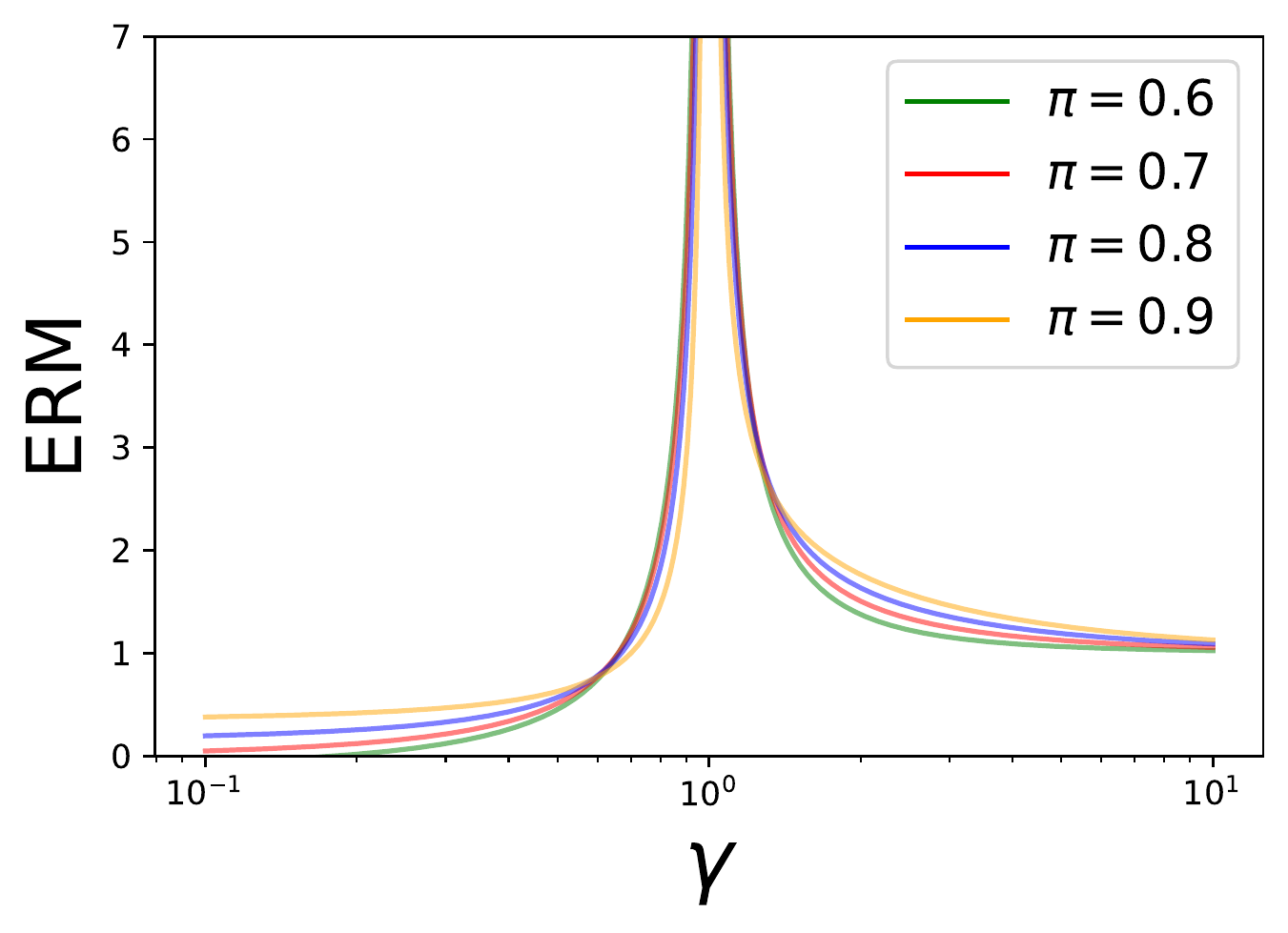}~ 
\includegraphics[width=0.48\textwidth]{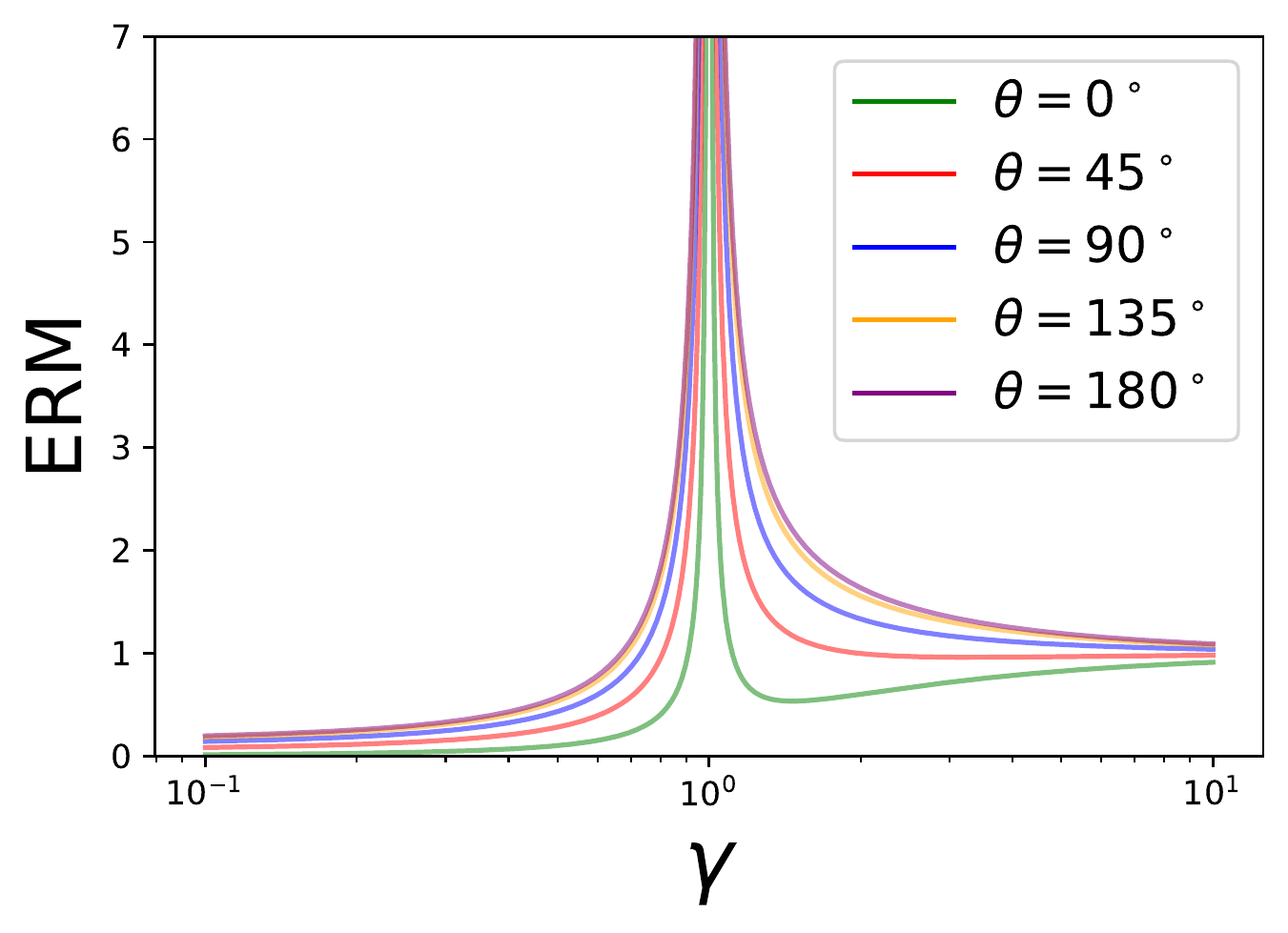}
}
\caption{Minority group prediction error for the ridgeless least square ERM in the isotropic features case. The \emph{left plot} considers the setup with varying $\pi$ when the angle between $\beta_0$ and $\beta_1$ is set at $\theta = 180^\circ$. The right plot sets $\pi = 0.8$ and considers different values of $\theta$. Here the SNR is set at $\|\beta_0 \|_2^2 /\tau^2 = \|\beta_1 \|_2^2 /\tau^2 = 10$. }
\label{fig:ERM-approx-err}
\end{center}
\vskip -0.3in
\end{figure}

Before moving on, we note that (the limit of) the approximation error term is the only term that depends on the majority fraction $\pi$. Though the inductive bias may increase at overparameterized regime ($\gamma > 1$) with growing $\gamma$ when the SNR ($=\|\beta_0\|_2^2/\tau^2$) is large, we notice that the approximation error terms always decrease with growing overparameterization. In fact they tend to zero as $\gamma \to \infty$.

We plot the (the limit of) the minority risk prediction error (MSPE) in Figure \ref{fig:ERM-approx-err}. In both overparameterized and underparameterized regimes, the MSPE increases as $\pi$ increases. This is expected: as the fraction of training samples from the minority group decreases, we expect ERM to train a model that aligns more closely with the regression function of the majority group.

We also notice that in the overparameterized regime ($\gamma > 1$) when the SNR ($\|\beta_2\|_2^2/\tau^2$) is high the MSPE increases with growing $\gamma$ if the two groups are aligned (the angle between $\beta_0$ and $\beta_1$ is $\theta = 0$).
This trend is also observed in \cite{hastie2019Surprises}.




\subsection{Inadequacies of ridgeless least squares}

There is one notable disagreement between the asymptotic risk of the ridgeless least squares estimator and the empirical practice \cite{pham2021effect} in modern ML: the risk of the ridgeless least squares estimator \emph{increases} as the overparameterization ratio $\gamma$ increases, while the accuracy of modern ML models generally improves with overparameterization. This has led ML practitioner to train overparameterized models whose risks exhibit a double-descent phenomenon \cite{belkin2018Reconciling}. Inspecting the asymptotic risk of ridgeless least squares reveals the increase in risk (as  $\gamma$ increases) is due to the inductive bias term (see \ref{lem:ERM-bias}). For high SNR problems ($s_0 \gg \tau^2$), the increase of the inductive bias term dominates the decrease of the variance term (see \ref{lem:ERM-variance}), which leads the overall risk to increase. This is a consequence of the fact that problem dimension (the dimension of the inputs) and the degree of overparameterization are tied ridgeless least squares. In order to elucidate behavior (in the risk) that more closely matches empirical observations, we study the random features models, which allows us to keep the problem dimension fixed while increasing the overparameterization by increasing the number of random features.


\subsection{Majority group subsampling}

We note that the (limiting) minority risk curve of majority group subsampling is the (limiting) minority risk curve of ERM after a change of variables. Indeed, it is not hard to check that discarding training sample from the majority group until the groups are balanced in the training data leads to a reduction in total sample size by a factor of $2(1-\pi)$ (recall $\pi\in[\frac12,1]$). In other words, if the (group imbalanced) training data consists of $n$ samples, then the (group balanced) training data will have $2(1-\pi)$ training samples. In the $n,d\to\infty$, $\frac{d}{n}\to\gamma$ limit, this is equivalent to increasing $\gamma$ by a factor of $\frac{1}{2(1-\pi)}$. The limiting results in ERM can be reused with the following changes: under subsampling (1) $\pi$ changes to $1/2$, (2) $\gamma$ changes to $\gamma/(2 - 2\pi)$, and (3) rest of the parameters for the limit remains same.

In Figure \ref{fig:sub-approx-err}, where we plot the differences between subsampling and ERM minority risk, we observe that subsampling generally improves minority group performance over ERM. This aligns with the empirical findings in \cite{sagawa2020Investigation,idrissi2021simple}. The only instance when subsampling has worse minority risk than ERM is when the angle between $\beta_0$ and $\beta_1$ is zero, \ie\ $\beta_0 = \beta_1$. This perfectly agrees with intuition; when there is no differnce between the distributions of the two groups then subsampling discards some samples for majority group which are valuable in learning the predictor for minority group, resulting an inferior predictor for minority group.

\begin{figure}
\begin{center}
\centerline{\includegraphics[width=0.48\textwidth]{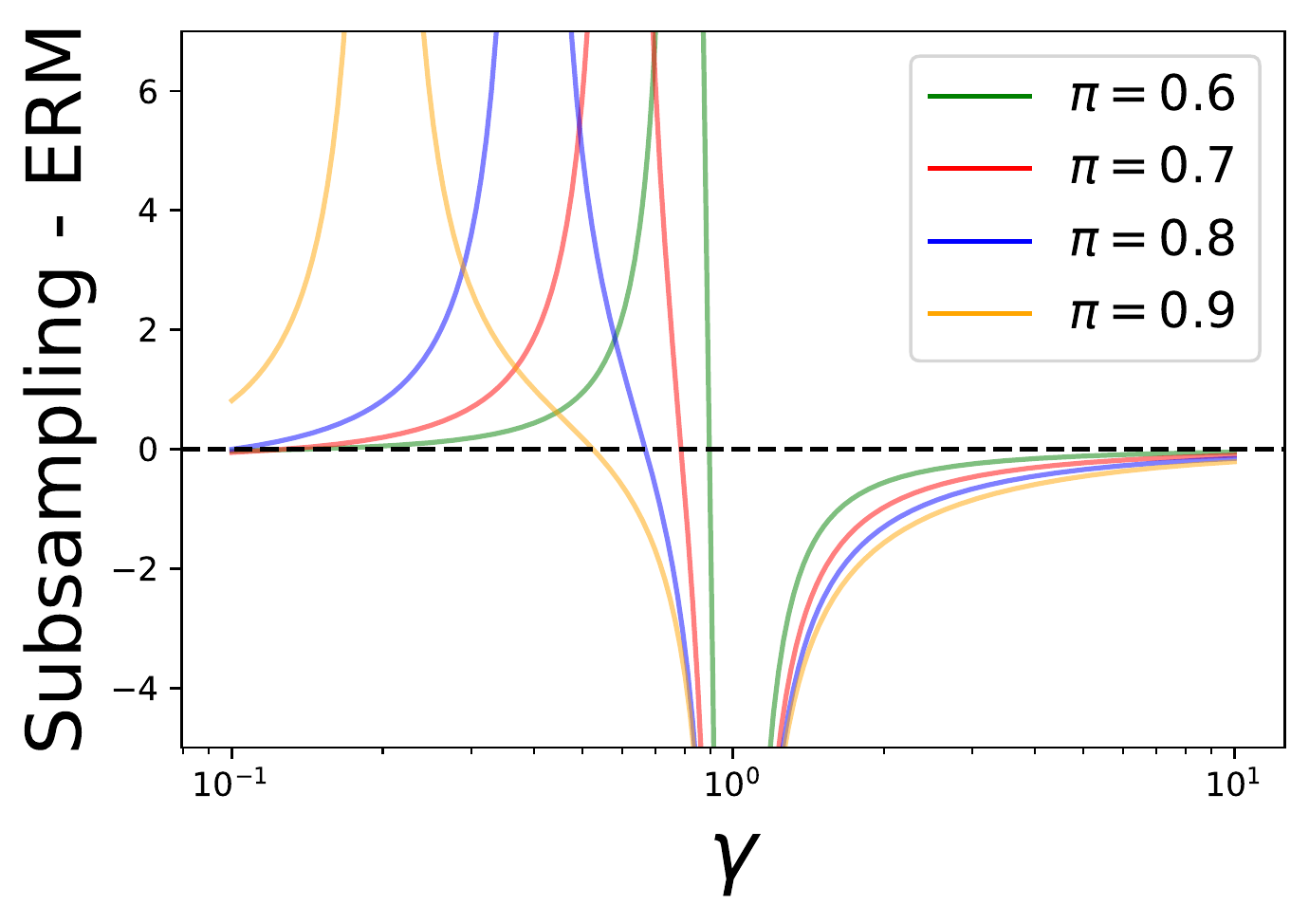}~ 
\includegraphics[width=0.48\textwidth]{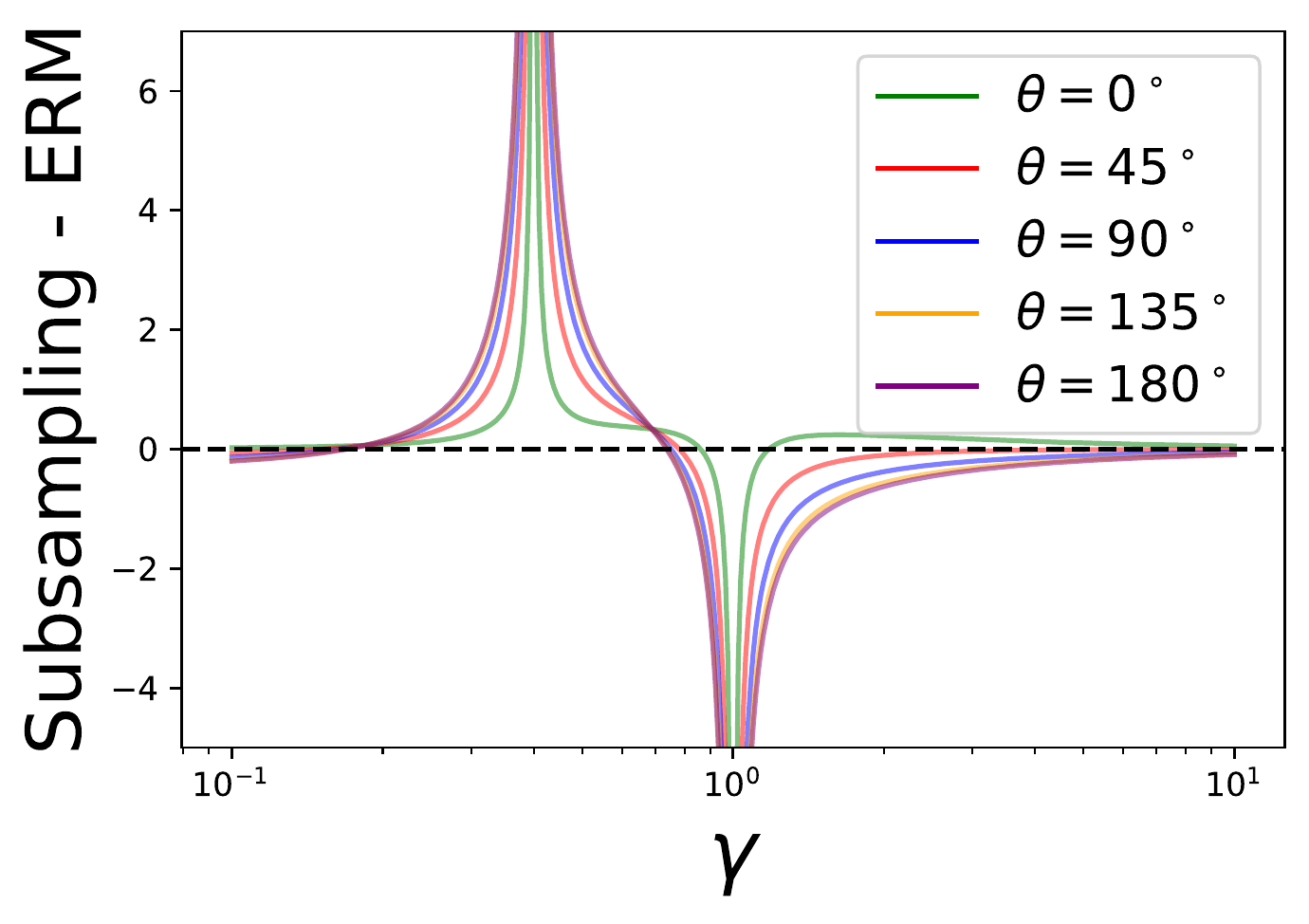}
}
\caption{Minority error differnce between subsampling and ERM for the ridgeless least square in the isotropic features case. The \emph{left plot} considers the setup with varying $\pi$ when the angle between $\beta_0$ and $\beta_1$ is set at $\theta = 180^\circ$. The right plot sets $\pi$ at 0.8 and consider different values of $\theta$. Here the SNR is set at $\|\beta_0 \|_2^2 /\tau^2 = \|\beta_1 \|_2^2 /\tau^2 = 5$. }
\label{fig:sub-approx-err}
\end{center}
\vskip -0.3in
\end{figure}

\subsection{Proofs of Lemmas \ref{lem:ERM-risk-decomposition} and \ref{lem:ERM-approx-err}}

\subsubsection{Proof of Lemma \ref{lem:ERM-risk-decomposition}}
Denoting $\bU = (X^\top X/n)^\dagger X/n$ we note that the estimation error $\hbeta - \beta_0$ is
\begin{equation*}
    \hbeta - \beta_0 = \bU(X\beta_0 + \begin{bmatrix} 0_{n_0\times d} \\ X_1\end{bmatrix}\delta + \eps) - \beta_0 = (\Pi_X - I_d)\beta_0 + \bU\begin{bmatrix} 0_{n_0\times d} \\ X_1\end{bmatrix}\delta + \bU\eps,
\end{equation*}
we have the following decomposition of ERM minority risk
\begin{align*}
    &R_0(\hbeta) \\
    =& \Ex\big[(\hbeta - \beta_0)^\top(\hbeta - \beta_0)\big] \\
    =& \Ex \left[\left\{(\Pi_X - I_d)\beta_0 + \bU\begin{bmatrix} 0_{n_0\times d} \\ X_1\end{bmatrix}\delta + \bU\eps\right\}^\top\left\{(\Pi_X - I_d)\beta_0 + \bU\begin{bmatrix} 0_{n_0\times d} \\ X_1\end{bmatrix}\delta + \bU\eps\right\}\right] \\
    =& \underbrace{\beta_0^\top (\Pi_X - I_d)^2 \beta_0}_{\text{I}} + \underbrace{\Ex\big[\eps^\top (\bU)^\top \bU \eps \big]}_{\text{II}} + \underbrace{\delta^\top \begin{bmatrix} 0_{n_0\times d} \\ X_1\end{bmatrix}^\top (\bU)^\top \bU \begin{bmatrix} 0_{n_0\times d} \\ X_1\end{bmatrix} \delta }_{\text{III}} + \\
    &\underbrace{2 \delta^\top \begin{bmatrix} 0_{n_0\times d} \\ X_1\end{bmatrix}^\top (\bU)^\top (\Pi_X - I_d) \beta_0}_{\text{IV}} + \underbrace{2\Ex\left[\left\{(\Pi_X - I_d)\beta_0 + \bU\begin{bmatrix} 0_{n_0\times d} \\ X_1\end{bmatrix}\delta\right\}^\top \bU\eps\right]}_{\text{V}}.
\end{align*}
The term I is equal to $\beta_0^\top (\Pi_X - I_d) \beta_0$ because $\Pi_X - I_d$ is a projection matrix. By the linearity of expectation and trace operator, the term II is equal to
\begin{equation*}
    \Ex\big[\tr(\eps^\top (\bU)^\top \bU \eps)\big] = \tr\left\{\frac{1}{n}\widehat{\Sigma}^\dagger \Ex[\eps \eps^\top]\right\} = \tr\left\{\frac{1}{n}\widehat{\Sigma}^\dagger \tau^2 I_n\right\} = \frac{\tau^2}{n}\tr(\hSigma^\dagger).
\end{equation*}
By properties of Moore–Penrose pseudoinverse, the term III and IV are equal to
\begin{equation*}
    \delta^\top\Big\{\begin{bmatrix} 0_{n_0\times d} \\ X_1\end{bmatrix}^\top X/n\Big\} (\hSigma^\dagger)^{2}\Big\{X^\top \begin{bmatrix} 0_{n_0\times d} \\ X_1\end{bmatrix}/n\Big\}\delta\quad\text{and}\quad2 \delta^\top\Big\{\begin{bmatrix} 0_{n_0\times d} \\ X_1\end{bmatrix}^\top X/n\Big\} (\hSigma^\dagger)^{2}\hSigma \beta_0
\end{equation*}
respectively. The term V is $0$ since $\Ex[\eps] = 0$. Hence we complete the proof. \hfill $\square$

\subsubsection{Proof of Lemma \ref{lem:ERM-approx-err}}
We first prove for the cross term that 
\begin{equation} \label{eq:param-exp}
    \Ex[\beta_0^\top \hSigma (\hSigmap)^2 \frac 1n X^\top (0, X_1^\top)^\top \delta]
= \frac{\beta_0^\top \delta}{d} \Ex[\tr\{\hSigma (\hSigmap)^2 \frac 1n X^\top (0, X_1^\top)^\top\}]\,.
\end{equation}  To realize the above we notice  that for any orthogonal matrix $O$ it holds 
\[
\Ex[\beta_0^\top  \hSigma (\hSigmap)^2 \frac 1n X^\top (0, X_1^\top)^\top \delta] = \Ex[\beta_0^\top O \hSigma_O (\hSigmap_O)^2 \frac 1n X_O^\top (0, X_{O,1}^\top)^\top O^\top \delta]
\] where $X$ is replaced by $X_O = XO$ and the replacement in $X$ changes the covariance matrix to $\hSigma_O$ and it's Moore-Penrose inverse  to $\hSigmap_O$. Noticing that  $X \stackrel{d}{=}X_O$, $\hSigma\stackrel{d}{=}\hSigma_O$ and $\hSigmap\stackrel{d}{=}\hSigmap_O$ we conclude 
\[
 \Ex[\beta_0^\top O \hSigma_O (\hSigmap_O)^2 \frac 1n X_O^\top (0, X_{O,1}^\top)^\top O^\top \delta] = \Ex[\beta_0^\top O \hSigma (\hSigmap)^2 \frac 1n X^\top (0, X_{1}^\top)^\top O^\top \delta]\,.
\] We now let $O$ to be uniformly distributed over the set of all $d\times d$ orthogonal matrices (distributed according to Haar measure) which results in 
\[
\begin{aligned}
\Ex[\beta_0^\top O \hSigma (\hSigmap)^2 \frac 1n X^\top (0, X_{1}^\top)^\top O^\top \delta] & = \Ex[\tr\{\ \hSigma (\hSigmap)^2 \frac 1n X^\top (0, X_{1}^\top)^\top O^\top \delta \beta_0^\top O\}]\\
& = \Ex[\tr\{\ \hSigma (\hSigmap)^2 \frac 1n X^\top (0, X_{1}^\top)^\top ( \delta^\top  \beta_0/d)\}]\,.
\end{aligned}
\] To calculate the term \[
\frac1 d\Ex[\tr\{\ \hSigma (\hSigmap)^2 \frac 1n X^\top (0, X_{1}^\top)^\top \}] = \frac1 d\Ex[\tr\{\ \hSigma (\hSigmap)^2 (\frac 1n X_1^\top  X_1) \}]
\]  we notice that 
\[
 \frac1 d\Ex[\tr\{\ \hSigma (\hSigmap)^2 (\frac 1n X_1^\top  X_1) \}] = 
\frac1 d\sum_{i = {n_0 + 1}}^{n}\Ex[\tr\{\ \hSigma (\hSigmap)^2 (\frac 1n x_ix_i^\top) \}]
\] and each of the terms $\tr\{\ \hSigma (\hSigmap)^2 (\frac 1n x_ix_i^\top)$ are identically distributed. We appeal to exchangability argument to conclude 
\[
\begin{aligned}
\frac1 d\sum_{i = {n_0 + 1}}^{n}\Ex[\tr\{\ \hSigma (\hSigmap)^2 (\frac 1n x_ix_i^\top) \}] &= \pi \frac1 d\sum_{i = { 1}}^{n}\Ex[\tr\{\ \hSigma (\hSigmap)^2 (\frac 1n x_ix_i^\top) \}]\\
& = \pi \frac1 d\Ex[\tr\{\ \hSigma (\hSigmap)^2 \hSigma \}]
\end{aligned}
\] where we notice that $\hSigma\hSigmap$ is a projection matrix and conclude 
\[
\begin{aligned}
\pi \frac1 d\Ex[\tr\{\ \hSigma (\hSigmap)^2 \hSigma \}] = \pi \frac1 d\Ex[\tr\{\ \hSigma \hSigmap \}]\,.
\end{aligned}
\] If $d < n$ \ie\ $\gamma < 1$ then $\hSigma$ is invertible and and it holds $\hSigma \hSigmap = \bbI_d$. If $d > n$ then there are exactly $n$ many non-zero eigen-values in $\hSigma \hSigmap$
and it holds $\tr\{\ \hSigma \hSigmap \} = n$. Combining the results we obtain
\[
\frac1 d\Ex[\tr\{\ \hSigma \hSigmap \}] = \{1/\gamma\} \wedge 1\,.
\]
Now, to calculate the term 
\[
\Ex[\delta^\top \{(0, X_1^\top ) X/n\} (\hSigmap)^2 \{X^\top (0, X_1^\top )^\top \delta]
\] we first use similar argument to \eqref{eq:param-exp} and obtain 
\[
\Ex[\delta^\top \{(0, X_1^\top ) X/n\} (\hSigmap)^2 \{X^\top (0, X_1^\top )^\top \delta] = \|\delta\|_2^2 \frac1 d\Ex [\tr\{ \frac{X_1^\top X_1}{n} (\hSigmap)^2 \frac{X_1^\top X_1}{n}\}]
\]
We rewrite 
\[
\frac1 d\Ex [\tr\{ \frac{X_1^\top X_1}{n} (\hSigmap)^2 \frac{X_1^\top X_1}{n}\}] = n_1 F_{1, 1} +n_1 (n_1 - 1)F_{1, 2}
\] where we define
\[
\begin{aligned}
F_{1, 1} & \triangleq \frac1 d\Ex [\tr\{ \frac{x_1x_1^\top }{n} (\hSigmap)^2 \frac{x_1x_1^\top}{n}\}]\\
F_{1, 2} & \triangleq \frac1 d\Ex [\tr\{ \frac{x_1x_1^\top }{n} (\hSigmap)^2 \frac{x_2x_2^\top}{n}\}]\,.
\end{aligned}
\] 
To calculate $n F_{1, 1}$ we notice that 
\[
\begin{aligned}
nF_{1, 1} & = \frac n d\Ex [\tr\{ \frac{x_1x_1^\top }{n} (\hSigmap)^2 \frac{x_1x_1^\top}{n}\}]\\
& = \frac1 \gamma \frac{x_1^\top x_1}{n} \frac{x_1^\top(\hSigmap)^2x_1}{n}\,.
\end{aligned}
\] Denoting $\stackrel{L_1}{\to }$ as the $L_1$ convergence we notice that  $\frac{x_1x_1^\top}{n} \stackrel{L_1}{\to} \gamma$, \ie\ $\Ex[(\|x_1\|_2^2/n - \gamma)^2] \to 0$. We now calculate the convergence limit for $\frac{x_1^\top(\hSigmap)^2x_1}{n}$. Noticing that
\[
\frac 1n x_1^\top(\hSigmap)^2x_1 = \lim_{z \to 0+} \frac 1n x_1^\top(\hSigma + z\bbI_d)^{-2}x_1
\] we write $\hSigma + z\bbI_d = (x_1 x_1 ^\top /n ) + A_z$, where $A_z = X_{-1}^\top X_{-1}/n + z\bbI_d$. Using the Woodbery decomposition \[
\{(x_1 x_1 ^\top /n ) + A_z\}^{-1} = A_z^{-1} - \frac{A_z^{-1} (x_1 x_1^\top /n)A_z^{-1}}{1 + x_1^\top A_z^{-1} x_1}
\] we obtain that 
\[
\frac 1n x_1^\top(\hSigma + z\bbI_d)^{-2}x_1 = \frac{\frac1n x_1^\top A_z^{-2} x_1}{(1 +\frac 1n  x_1^\top A_z^{-1} x_1)^2}
\] In Lemma \ref{lemma:tech-reg-1} we notice that 
\[
\begin{aligned}
\Ex[\frac 1n  x_1^\top A_z^{-1} x_1] \to \begin{cases}\frac{\gamma}{1-\gamma} & \gamma < 1 \\ \frac{1}{\gamma - 1} & \gamma > 1\end{cases}, ~ \Ex[\frac 1n  x_1^\top A_z^{-2} x_1] \to  \begin{cases}\frac{\gamma}{(1-\gamma)^3} & \gamma < 1 \\ \frac{\gamma}{(\gamma - 1)^3} & \gamma > 1\end{cases}
\text{ as $d\to\infty$.}\\
\text{var}[\frac 1n  x_1^\top A_z^{-1} x_1],  \text{var}[\frac 1n  x_1^\top A_z^{-2} x_1] \to 0
\end{aligned}
\] which implies 
$nF_{1, 1}$ converges to 
\[
nF_{1, 1} \stackrel{L_1}{\to} \begin{cases}\frac{\gamma}{1-\gamma} & \gamma < 1 \\ \frac{1}{\gamma(\gamma - 1)} & \gamma > 1\end{cases}
\]
Noticing that 
\[
\frac1 d\Ex[\tr\{\ \hSigma \hSigmap \}] = \frac1 d\Ex[\tr\{\ \hSigma (\hSigmap)^2 \hSigma \}] =  nF_{1, 1} + n(n - 1) F_{1, 2}
\] we obtain the convergence of $n (n - 1) F_{1, 2}$ as 

\[
n(n-1)F_{1, 2} \stackrel{L_1}{\to} \begin{cases}\frac{1-2\gamma}{1-\gamma} & \gamma < 1 \\ \frac{\gamma - 2}{\gamma(\gamma - 1)} & \gamma > 1\end{cases}
\] which finally yields
\[
\begin{aligned}
\frac1 d\Ex [\tr\{ \frac{X_1^\top X_1}{n} (\hSigmap)^2 \frac{X_1^\top X_1}{n}\}] &= n_1 F_{1, 1} +n_1 (n_1 - 1)F_{1, 2}\\
& \asymp \pi n F_{1, 1} +\pi^2 n (n - 1)F_{1, 2}\\
& \stackrel{L_1}{\to}\begin{cases} \frac{\pi \gamma}{ 1- \gamma} + \frac{\pi^2 (1-2\gamma)}{1-\gamma} & \gamma < 1 \\ \frac{\pi}{\gamma - 1} + \frac{\pi^2(\gamma - 2)}{\gamma(\gamma - 1)} & \gamma > 1\end{cases}
\end{aligned}
\]

\begin{lemma}
\label{lemma:tech-reg-1} As $z \to 0+$ and $d \to \infty$ we have \[
\Ex[\frac 1n  x_1^\top A_z^{-1} x_1] \to \begin{cases}\frac{\gamma}{1-\gamma} & \gamma < 1 \\ \frac{1}{\gamma - 1} & \gamma > 1\end{cases}\,,
\]
\[
\Ex[\frac 1n  x_1^\top A_z^{-2} x_1] \to \begin{cases}\frac{\gamma}{(1-\gamma)^3} & \gamma < 1 \\ \frac{\gamma}{(\gamma - 1)^3} & \gamma > 1\end{cases}
\] and 
\[
\begin{aligned}
\text{var}[\frac 1n  x_1^\top A_z^{-1} x_1],  \text{var}[\frac 1n  x_1^\top A_z^{-2} x_1] \to 0
\end{aligned}
\]
\end{lemma}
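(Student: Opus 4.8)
The plan is to reduce each quadratic form to a normalized trace of a resolvent of the leave-one-out matrix $A_z=\frac1n X_{-1}^\top X_{-1}+z\bbI_d$ and then read off the limit from the Marchenko--Pastur law. The structural fact I would exploit is that $x_1$ is independent of $A_z$, since $A_z$ is built only from the remaining rows $X_{-1}$, and that $x_1$ has i.i.d.\ mean-zero, unit-variance entries. Conditioning on $A_z$ and using $\Ex[x_1x_1^\top]=\bbI_d$ gives the exact identities $\Ex\big[\frac1n x_1^\top A_z^{-1}x_1\mid A_z\big]=\frac1n\tr(A_z^{-1})$ and $\Ex\big[\frac1n x_1^\top A_z^{-2}x_1\mid A_z\big]=\frac1n\tr(A_z^{-2})$, so the two means reduce to $\frac1n\Ex[\tr(A_z^{-1})]$ and $\frac1n\Ex[\tr(A_z^{-2})]$.

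For the trace limits I would invoke random matrix theory. Writing $\frac1n\tr(A_z^{-1})=\frac dn\cdot\frac1d\tr(A_z^{-1})$, the factor $\frac1d\tr(A_z^{-1})$ is the Stieltjes transform of the empirical spectral distribution of $\frac1n X_{-1}^\top X_{-1}$ evaluated at $-z$. Because deleting a single row does not alter the limiting spectrum, this converges as $d\to\infty$ to $m_\gamma(-z)=\int(\lambda+z)^{-1}\,dF_\gamma(\lambda)$, where $F_\gamma$ is the Marchenko--Pastur law with ratio $\gamma$; the second trace follows by differentiation, $\frac1d\tr(A_z^{-2})=-\frac{d}{dz}\frac1d\tr(A_z^{-1})\to\int(\lambda+z)^{-2}\,dF_\gamma(\lambda)$. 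Multiplying by $\frac dn\to\gamma$ and sending $z\to0+$ then produces candidate limits in terms of the spectral moments $\int\lambda^{-1}\,dF_\gamma$ and $\int\lambda^{-2}\,dF_\gamma$.

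When $\gamma<1$ this is immediate: the support of $F_\gamma$ is bounded away from the origin, so the $z\to0$ limits are the finite moments $\gamma\int\lambda^{-1}dF_\gamma=\frac{\gamma}{1-\gamma}$ and $\gamma\int\lambda^{-2}dF_\gamma=\frac{\gamma}{(1-\gamma)^3}$, matching the stated values. The main obstacle is the overparameterized regime $\gamma>1$, where $\frac1n X_{-1}^\top X_{-1}$ is rank-deficient, $F_\gamma$ has an atom of mass $1-\gamma^{-1}$ at zero, and the resolvent trace is dominated near $z=0$ by the near-null eigenvalues. To obtain the finite values $\frac{1}{\gamma-1}$ and $\frac{\gamma}{(\gamma-1)^3}$ I would pass to the companion object, the Gram matrix $\frac1n X_{-1}X_{-1}^\top\in\reals^{(n-1)\times(n-1)}$, whose spectrum carries no atom at the origin, and compute the limit over its nonzero eigenvalues using $\int\lambda^{-1}dF_{1/\gamma}=\frac{1}{1-\gamma^{-1}}$. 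I expect the delicate point to be controlling the order and coupling of the limits $z\to0$ and $d\to\infty$, and in particular isolating the range-projected part of the quadratic form whose limit feeds the Sherman--Morrison identity used in the proof of Lemma \ref{lem:ERM-approx-err}.

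For the variance statements I would argue in two stages. Conditionally on $A_z$, the Hanson--Wright inequality bounds $\text{var}\big(\frac1n x_1^\top A_z^{-1}x_1\mid A_z\big)$ by a constant times $\frac1{n^2}\|A_z^{-1}\|_F^2=\frac1{n^2}\tr(A_z^{-2})$, which is $O(1/n)$ after normalization, and analogously for the $A_z^{-2}$ form. The residual fluctuation of the trace functionals across $A_z$ vanishes by standard resolvent stability, e.g.\ an Efron--Stein or martingale-difference bound using that $\frac1d\tr(A_z^{-1})$ changes by $O(1/d)$ when one row of $X_{-1}$ is resampled, valid under the $8+\eps$ moment condition of Assumption \ref{asm:isotropic-features}. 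Combining the two stages yields $\text{var}[\frac1n x_1^\top A_z^{-k}x_1]\to0$ for $k=1,2$, completing the argument.
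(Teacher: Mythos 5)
Your proposal follows essentially the same route as the paper's proof: both reduce the expectations to $\frac1n\Ex[\tr(A_z^{-k})]$ via the independence of $x_1$ from $A_z$ and isotropy, evaluate the limits through the Marchenko--Pastur Stieltjes transform (passing to the companion Gram matrix when $\gamma>1$), and control the fluctuations by a variance bound for quadratic forms of order $\tr(B^2)$. One caveat: you invoke Hanson--Wright for the conditional variance, but under Assumption \ref{asm:isotropic-features} the entries only have $8+\eps$ moments and need not be sub-Gaussian, so you should instead derive the bound $\mathrm{var}(x_1^\top Bx_1)\le c\,\tr(B^2)$ by the elementary fourth-moment expansion (as the paper does); with that substitution your argument goes through, and your explicit two-stage treatment of the variance (conditional variance plus fluctuation of the trace via Efron--Stein) is in fact more careful than the paper's, which only accounts for the conditional variance.
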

\begin{proof}[Proof of Lemma \ref{lemma:tech-reg-1}]
    To prove the results about variances we first establish that for any symmetric matrix $B = ((b_{ij}))\in \reals^{d\times d}$ it holds
    \begin{equation} \label{eq:var-bound}
        \text{var}(x_1^\top B x_1) \le c \tr\{B^2\}\,,
    \end{equation}
     for some $c>0$. Writing $x_1 = (u_1, \dots, u_d)^\top$ we see that 
    \[
    \begin{aligned}
    \text{var}(x_1^\top B x_1) & = \textstyle\text{var}\big(\sum_{i, j} u_i u_j b_{ij})\\
    & = \sum_{i, j, k, l} b_{i j} b_{kl} \text{cov}(u_iu_j, u_ku_l)
    \end{aligned}
    \] Noticing that
    \[
    \text{cov}(u_iu_j, u_ku_l) = \Ex[u_iu_ju_ku_l] - \Ex[u_iu_j] \Ex[u_k u_l]
    \] we see that
    \begin{enumerate}
        \item If one of $i, j, k, l$ is distinct from the others then $\text{cov}(u_iu_j, u_ku_l) = 0$. 
        \item If $i = j$ and $k = l$ and $i \neq k$ then 
        \[
        \Ex[u_iu_ju_ku_l] - \Ex[u_iu_j] \Ex[u_k u_l] = \Ex[u_i^2 u_k^2] - \Ex[u_i^2] \Ex[u_k^2 ] = 0\,.
        \]
        \item If $i = j = k = l$ then $\text{cov}(u_iu_j, u_ku_l) = \text{var}(u_1^2)$.
        \item If \{$i = k$ and $j = l$ and $i \neq j$\} or \{$i = l$ and $j = k$ and $i \neq j$\} then we have 
        \[
        \Ex[u_iu_ju_ku_l] - \Ex[u_iu_j] \Ex[u_k u_l] = \text{var}(u_iu_j) = \text{var}(u_1u_2)\,.
        \]
    \end{enumerate} Gathering the terms we notice that 
    \[
    \begin{aligned}
    \text{var}(x_1^\top B x_1) & = \sum_{i} \text{var}(u_i^2) b_{ii}^2 + 2 \sum_{i \neq j} b_{ij}^2 \text{var}(u_iu_j)\\
    & \le c \sum_{i, j} b_{ij}^2 = c \tr\{B^2\}  \,,
    \end{aligned}
    \] where $c = 2 \{\text{var}(u_1^2)\vee\text{var}(u_1u_2) \}$.

    Using \eqref{eq:var-bound} we notice that 
    \[
    \begin{aligned}
    \lim_{z \to 0+}\text{var}(x_1^\top A_z^{-j} x_1/n) & = \frac 1{n^2} \tr\{ (\hSigmap_{-1})^{2j} \}
    & \asymp \frac 1{n^2} \tr\{ (\hSigmap)^{2j} \} \to 0\,.
    \end{aligned}
    \]

    We notice that \[\Ex[\frac 1n  x_1^\top A_z^{-1} x_1] = \frac 1n \tr\{ A_z^{-1} \} \asymp \frac 1n \tr\{\hSigmap\}\] which appears in the variance term in Lemma \ref{lem:ERM-variance}, and we conclude that 
    \[
    \Ex[\frac 1n  x_1^\top A_z^{-1} x_1] \to \begin{cases}\frac{\gamma}{1-\gamma} & \gamma < 1 \\ \frac{1}{\gamma - 1} & \gamma > 1\end{cases}
\text{ as $d\to\infty$.}
    \] To calculate \[\Ex[\frac 1n  x_1^\top A_z^{-2} x_1] = \frac 1n \tr\{ A_z^{-2} \} \asymp \frac 1n \tr\{(\hSigmap)^2\}\] we consider case by case.

    For $\gamma < 1$ we see that 
    \[
    \begin{aligned}
    \frac 1n \tr\{(\hSigmap)^2\} &= \gamma \frac{1}{d} \tr\{(\hSigma)^{-2}\} \\
    & \to \gamma \int \frac 1{s^2} dF_{\gamma}(s)
    \end{aligned}
    \]
    The Stieltjes transformation of $\mu_\gamma$ is \[
\begin{aligned}
s_\gamma(z) =&  \int \frac{1}{x - z}d\mu_\gamma(x)\\
=& \frac{1-\gamma - z - \sqrt{(1-\gamma- z)^2 - 4\gamma z}}{2\gamma z}, \quad z \in \bbC\backslash [1-\sqrt{\gamma}, 1+\sqrt{\gamma}].
\end{aligned}
 \]
 
 For $|z|< 1- \sqrt{\gamma}$
 \[
 \begin{aligned}
 s_\gamma(z) =&  \int \frac{1}{x - z}d\mu_\gamma(x)\\
=& \int \frac{1}{x}\frac{1}{1 - z/x}d\mu_\gamma(x)\\
=& \int \frac{1}{x}\sum_{k=0}^\infty (z/x)^k  d\mu_\gamma(x)
 \end{aligned}
 \]
 
 Hence, we have \[  \lim_{z\to 0-} \frac{d}{dz}\big(s_\gamma (z)\big) = \Ex[1/X^2]. \]
 
 Now, \[
 \begin{aligned}
 &1-\gamma -  z-\sqrt{(1-\gamma- z)^2 - 4\gamma z}\\ &\quad= 1-\gamma -  z-\big( 1+\gamma^2 + z^2 - 2\gamma - 2z + 2\gamma z - 4\gamma z \big)^{1/2} \\
 &\quad= 1-\gamma -  z-\big( (1-\gamma)^2 + z^2  - 2z - 2\gamma z  \big)^{1/2}\\
 &\quad \asymp 1-\gamma -  z-(1-\gamma) \Big[1+ \frac{z^2 -2z-2\gamma z}{2(1-\gamma)^2} - \frac{(z^2 -2z-2\gamma z)^2}{8(1-\gamma)^4}\Big],\quad \text{for }z \to 0- \\
 &\quad \asymp 1-\gamma -  z-(1-\gamma) \Big[1+ \frac{z^2 -2z-2\gamma z}{2(1-\gamma)^2} - \frac{z^2(1+\gamma)^2}{2(1-\gamma)^4}\Big],\quad \text{for }z \to 0- \\
 & \quad= \frac{z^2(1+\gamma)^2}{2(1-\gamma)^3}  - \frac{z^2-4\gamma z}{2(1-\gamma)} \\
 & \quad = \frac{2z^2 \gamma}{(1-\gamma)^3} + \frac{2\gamma z}{1-\gamma}
 \end{aligned}
 \]
 
which implies \[
\begin{aligned}
s_\gamma(z) \asymp \frac{z}{(1-\gamma)^3} + \frac{1}{1-\gamma}
\end{aligned}
\]
 This establish that 
 \[
 \begin{aligned}
 \frac 1n \tr\{(\hSigmap)^2\}  =\gamma \lim_{z\to 0-} \frac{d}{dz}\big(s_\gamma (z)\big) = \frac{\gamma}{(1 - \gamma)^3}\,.
 \end{aligned}
 \]
 
 For $\gamma >1$ we notice that 
 \[
 \begin{aligned}
 \frac 1n \tr\{(\hSigmap)^2\} = \frac{1}{n} \sum_{i = 1}^n \frac 1{s_i^2}
 \end{aligned}
 \] where $s_i$'s are non-zero eigen-values of $X^\top X/n$. This is also equal to 
 \[
 \frac{1}{\gamma p} \sum_{i = 1}^n \frac 1{t_i^2}
 \] where $t_i$ are the eigen-values of the invertible matrix $XX^\top /d$. Hence we have 
 \[
 \begin{aligned}
 \frac 1n \tr\{(\hSigmap)^2\} = \frac{1}{\gamma^2} \frac{1}{n} \tr\{(XX^\top /d)^{-2}\} \to \frac{1}{\gamma^2} \lim_{z\to 0+} \frac{d}{dz}\big(s_{1/\gamma} (z)\big)  = \frac{\gamma}{ (\gamma - 1)^3}\,.
 \end{aligned}
 \] We combine the limits for $\gamma<1$ and $\gamma>1$ to write 
 to write 
 \[
  \Ex[\frac 1n  x_1^\top A_z^{-2} x_1] \to \begin{cases}\frac{\gamma}{(1-\gamma)^3} & \gamma < 1 \\ \frac{\gamma}{(\gamma - 1)^3} & \gamma > 1\end{cases}
\text{ as $d\to\infty$.}
 \]
    
\end{proof}

\section{Proof of Theorem \ref{th:RF-model-summary}}
\label{sec: proof  of RF-model}
Before diving into the main proof we give a rough sketch of the whole proof. Essentially, the proof has three major components. 

\begin{enumerate}
    \item In the first part, we establish the limiting risk where we treat $\beta_0$ and $\delta$ as uncorrelated random variables. (Section \ref{sec: part 1})
    
    \item In the second part of the proof we establish the necessary bias-variance decomposition under orthogonality of the fixed parameters $\beta_0$ and $\delta$. (Section \ref{sec: bias-decomposition orthogonal})
    
    \item In the final part, we use the results of previous two parts to establish the limiting risk result for $\beta_0$ and $\delta$ in general position. (Section \ref{sec: bias decomposition general} and \ref{sec: final limiting risk})
\end{enumerate}

\subsection{Limiting risk for uncorrelated parameters}
\label{sec: part 1}
We start this section with some definitions of some important quantities.

\begin{definition}
\label{def: some-quantities 2}
Let the functions $\nu_1, \nu_2: \bbC_+ \to \bbC_+$ be uniquely defined by the following conditions: (i) $\nu_1, \nu_2$ are analytic on $\bbC_+$. (ii) For $\textrm{Im}(\zeta)>0$, $\nu_1(\zeta)$ and $\nu_2(\zeta)$ satisfy the equations

\begin{align*}
& \nu_1 = \psi_1 \left(-\zeta - \nu_2 - \frac{\xi^2\nu_2}{1 - \xi^2 \nu_1 \nu_2} \right)^{-1} \\
& \nu_2 = \psi_1 \left(-\zeta - \nu_1 - \frac{\xi^2\nu_1}{1 - \xi^2 \nu_1 \nu_2} \right)^{-1}
\end{align*}
(iii) $(\nu_1(\zeta), \nu_2(\zeta))$ is the unique solution of these equations with 
\[
\abs{\nu_1(\zeta)} \leq \psi_1/ \text{Im}(\zeta), \; \abs{\nu_2(\zeta)} \leq \psi_2/ \text{Im}(\zeta) \; \text{for $\text{Im} (\zeta) > C$}, 
\]
with a $C$ sufficiently large constant.

Let 
\[
\chi:= \nu_1(i (\psi_1 \psi_2 \lambda)^{1/2}). \nu_2(i(\psi_1 \psi_2 \lambda)^{1/2}),
\]
and
\[ \textstyle
    \begin{aligned}
      \ccE_{0}(\xi, \psi_1, \psi_2, \lambda) &=  - \chi^5 \xi^6 + 3 \chi^4 \xi^4 + (\psi_1 \psi_2 - \psi_1 - \psi_2 +1) \chi^3 \xi^6 - 2 \chi^3 \xi^4 - 3 \chi^3 \xi^2\\
      & ~~~  + (\psi_1 + \psi_2 - 3 \psi_1 \psi_2 +1)\chi^2 \xi^4 + 2\chi^2 \xi^2 + \chi^2 + 3 \psi_1 \psi_2 \chi \xi^2 - \psi_1 \psi_2,\\
      \ccE_1(\xi, \psi_1, \psi_2, \lambda)  & = \psi_2 \chi^3 \xi^4 - \psi_2 \chi^2 \xi^2 + \psi_1 \psi_2 \chi \xi^2 - \psi_1 \psi_2,\\
      \ccE_2(\xi, \psi_1, \psi_2, \lambda)  & =  \chi^5 \xi^6 - 3 \chi^4 \xi^4 + ( \psi_1 - 1) \chi^3 \xi^6+ 2 \chi^3 \xi^4 + 3 \chi^3 \xi^2 + (-\psi_1 -1)\chi^2 \xi^4 - 2\chi^2 \xi^2 - \chi^2\,.
    \end{aligned} \]
    We then define
    \begin{equation}
        \ccB_{\ridge}(\xi, \psi_1, \psi_2, \lambda) = \frac{\ccE_1(\xi, \psi_1, \psi_2, \lambda)}{\ccE_0(\xi, \psi_1, \psi_2, \lambda)},
        \label{eq: ridge-bias}
    \end{equation}
    \begin{equation}
       \ccV_{\ridge}(\xi, \psi_1, \psi_2, \lambda) = \frac{\ccE_2(\xi, \psi_1, \psi_2, \lambda)}{\ccE_0(\xi, \psi_1, \psi_2, \lambda)}.
        \label{eq: ridge-variance} 
    \end{equation}
\end{definition}
The quantities derived above can be easily derived numerically. But for our interest, we need to focus on the limiting case when $\lambda\to 0$. It can be shown that when $\lambda \to 0$, the expressions of $\ccB_{\ridge}$ and $\ccV_{\ridge}$ reduces to the form of $\ccB^\star$ and $\ccV^\star$ defined in Lemma \ref{lemma:asymp-bias-var} respectively. For details we refer to Section 5.2 in \cite{mei2019gen}.

Now we are ready to present the proof of the main result.
Recall that
\[
\hat{a}(\lambda):= \argmin_{a\in \bbR^N} \frac{1}{n} \sum_{i = 1}^n \left(y_i - \sum_{j = 1}^N a_j \sigma(\theta_j^\top x_i/ \sqrt{d})\right)^2 + \frac{N \lambda}{d}\norm{a}_2^2.      
\]
By standard linear algebra it immediately follows that
\begin{equation}
\hat{a}(\lambda) = \frac{1}{\sqrt{d}} (Z^\top Z  +  \lambda \psi_{1,d} \psi_{2,d} \bbI_N)^{-1} Z^\top y,
\label{eq: a_hat}
\end{equation}
where $Z = \frac{1}{\sqrt{d}}\sigma (X \Theta^\top / \sqrt{d}), \psi_{1,d} = \frac{N}{d}$ and $\psi_{2,d} = \frac{n}{d}$. Also, define the square error loss $R_{\rm RF}(x, X, \Theta, \lambda):= (x^\top \beta_0 - f(x; \hat{a}(\lambda); \Theta))^2$, where $x$ is a new feature point.

For brevity, we denote by $\bGamma$ the tuple $({X},  {\Theta},  {\beta}_0,  {\delta})$. Recall that we are interested in the out-of-sample risk $ \bbE_{\bGamma} \bbE_{ {x}}[R_{\rm RF}( {x},  {X},  {\Theta}, \lambda)]$, where $x$ is a new feature point from minority group. The main recipe of our proof is the following:
\begin{itemize}
    \item We first do the bias-variance decomposition of the expected out-of-sample-risk.
    \item We analyze the bias and variance terms separately using techniques from random matrix theory.
    \item Finally, we obtain asymptotic limit of the expected risk by using the asymptotic limits of bias and variance terms.
\end{itemize}

\subsubsection*{Bias variance decomposition}
The expected risk at a new point $x$ coming from the minority group can be decomposed into following way: 
\begin{equation}
\begin{aligned}
    \bbE_{\bGamma}\bbE_{ {x}}[R_{\rm RF}( {x},  {X},  {\Theta}, \lambda)] &=\bbE_{\bGamma}\bbE_{ {x}}\{ {x}^\top  {\beta}_0 - f( {x}; \hat{a}(\lambda);  {\Theta})\}^2\\
    &= \underbrace{\bbE_{\bGamma}\bbE_{ {x}} \Big\{\big[ {x}^\top  {\beta}_0 - \bbE_\eps f( {x}; \hat a(\lambda);  {\Theta})\big]^2\Big\}}_{\text{Bias term}} +  \underbrace{\bbE_{\bGamma}\bbE_{ {x}} \text{Var}_{\eps} \big\{f( {x}; \hat a(\lambda);  {\Theta})\big\}}_{\text{Variance term}}.
\end{aligned}
\label{eq: bias-variance decomposition}
\end{equation}
The above fact follows from Lemma \ref{lemma:bias-decomposition}, which basically uses the uncorrelatedness of $\beta_0$ and $\delta$.
Now we define the matrix
\begin{equation}
\Psi := ( {Z}^\top  {Z} + \lambda \psi_{1,d} \psi_{2,d}  \bbI_N)^{-1}.
\label{eq: psi}
\end{equation}
In the next couple of sections we will study the asymptotic behavior of the bias and variance terms by analyzing several terms involving $\Psi$. In order to obtain asymptotic limits of those terms, we will be borrowing results of random matrix theory in \cite{mei2019gen}.

\subsubsection*{Decomposition of bias term}
We focus the bias
\begin{equation}
\begin{aligned}
      \cB(\delta, \lambda)& := \bbE_\bGamma \bbE_{ {x}} \Big\{\big[ {x}^\top  {\beta}_0 - \bbE_\eps f( {x}; \hat a(\lambda);  {\Theta})\big]^2\Big\}\\
     & = \bbE_\Gamma\bbE_x\bigg\{\Big[ {x}^\top  {\beta}_0 - \sigma( {\Theta}  {x}/ \sqrt{d})^\top \frac 1{\sqrt{d}}\big(  \Psi  {Z}^\top  {X}  {\beta}_0  +  \Psi  {Z}_1^\top  {X}_1  {\delta}  \big)\Big]^2\bigg\}
    .
\end{aligned}
\end{equation}
It turns out that it is rather difficult to directly analyze the bias term $\cB(\delta,\lambda)$. The main difficulty lies in the fact that the terms involving $\beta_0$ and $\delta$ are coupled in the bias term and poses main difficulty in applying well known random matrix theory results directly. Hence, in order to decouple the terms we look at the following decomposition:
\[
\cB(\delta, \lambda) = \underbrace{\cB(\delta, \lambda) - \cB(0, \lambda)}_{(I)} + \underbrace{\cB(0 , \lambda)}_{(II)}
\]
It is fairly simple to see that term (II) is only a function of $\beta_0$. Next, we will show that term (I) does not contain $\beta_0$. To this end, 
We define the matrices $\widehat{U}, U \in \bbR^{N \times N}$ as follows:  
\begin{equation}
\label{eq: U and U_hat}
\widehat{U}_{ij} =  \sigma( {\theta}_i^\top  {x}/\sqrt{d}) \sigma( {\theta}_j^\top  {x}/\sqrt{d}),\; U_{ij} =  \bbE(\widehat{U}_{ij}) \; \forall (i,j)\in [N]\times [N].
\end{equation}
Next, due to Assumption \ref{assmp:parameter-distribution}, it easily follows that
\begin{equation}
    \begin{aligned}
       & \cB( \delta, \lambda)  -  \cB( 0, \lambda)\\
       & = \bbE_\bGamma\bbE_x\bigg\{\Big[ {x}^\top  {\beta}_0 - \sigma( {\Theta}  {x}/ \sqrt{d})^\top \frac 1{\sqrt{d}}\big(  \Psi  {Z}^\top  {X}  {\beta}_0  +   {\Psi}  {Z}_1^\top  {X}_1  {\delta}  \big)\Big]^2\bigg\}\\
       & ~~~- \bbE_\bGamma\bbE_x\bigg\{\Big[ {x}^\top  {\beta}_0 - \sigma( {\Theta}  {x}/ \sqrt{d})^\top \frac 1{\sqrt{d}}\big(  \Psi  {Z}^\top  {X}  {\beta}_0   \big)\Big]^2\bigg\}\\
       & = \frac{1}{d} \bbE_\bGamma\bbE_{ {x}}\left(  {\delta}^\top  {X}_1^\top  {Z}_1  {\Psi} \widehat{U}  {\Psi}  {Z}_1^\top  {X}_1  {\delta} \right)\\
       & = \frac{1}{d} \bbE_\bGamma\bbE_{ {x}} \left\{\tr \big ( {X}_1^\top  {Z}_1  {\Psi} \widehat{U}  {\Psi}  {Z}_1^\top  {X}_1  {\delta}  {\delta}^\top\big) \right\}\\
       &= \frac{F_\delta^2}{d^2} \bbE \left\{\tr \left( {X}_1^\top  {Z}_1  {\Psi} {U}  {\Psi}  {Z}_1^\top  {X}_1 \right)\right\}.
    \end{aligned}
    \label{eq: diff_of_bias}
\end{equation}
Above display shows that term (I) does not contain $\beta_0$. In the subsequent discussion, we will now focus on obtaining the limits of term (I) and (II) separately. 

Next, by the property of trace, we have the following:
\begin{equation}
    \begin{aligned}
    \tr ( {X}_1^\top  {Z}_1  {\Psi} {U}  {\Psi}  {Z}_1^\top  {X}_1) & = \tr \left\{
    \left(\sum_{i=n_0+1}^{n}  {x}_i  {z}_i^\top \right) \Psi U \Psi \left( 
    \sum_{i = n_0+1}^n  {z}_i  {x}_i^\top\right) \right\}\\
    & = \sum_{i= n_0 +1}^{n_1}\sum_{j= n_0 +1}^{n_1} \tr \left( \Psi U \Psi  {z}_j  {x}_j^\top  {x}_i  {z}_i^\top\right)\\
    & = \sum_{i= n_0 +1}^{n_1}\sum_{j= n_0 +1}^{n_1}  {x}_j^\top  {x}_i\tr \left( \Psi U \Psi  {z}_j   {z}_i^\top\right)\\
    & = \sum_{i = n_0 + 1}^n  {x}_i^\top  {x}_i\tr \left( \Psi U \Psi  {z}_i   {z}_i^\top\right) + \sum_{n_0 +1\leq i <j \leq n}  {x}_j^\top  {x}_i\tr \left( \Psi U \Psi  {z}_j   {z}_i^\top\right).
\end{aligned}
\label{eq: exchangeable_sum}
\end{equation}
Thus, by exchangeability of the terms $\{\tr \left( \Psi U \Psi  {z}_i   {z}_i^\top\right)\}_{i \in [n]}$ and $\{  {x}_j^\top  {x}_i\tr \left( \Psi U \Psi  {z}_j   {z}_i^\top\right)\}_{n_0 +1\leq i <j \leq n}$ we have
\begin{equation}
\frac{1}{d}\bbE \left\{\tr \left( {X}_1^\top  {Z}_1 \Psi  {U} \Psi  {Z}_1^\top  {X}_1 \right)\right\} = n_1 f(1,1) + n_1(n_1 - 1) f(1,2),
\label{eq: diff_of_bias_simple}
\end{equation}
where $f(i,j) =  \bbE \left\{ {x}_j^\top  {x}_i\tr \left( \Psi U \Psi  {z}_j   {z}_i^\top\right)\right\}/d^2$ and $1 \leq i,j\leq 2$. Hence, together with Equation \eqref{eq: diff_of_bias} and Equation \eqref{eq: diff_of_bias_simple}, it follows that 
\begin{equation}
\cB(\delta, \lambda) - \cB(0, \lambda) = F_\delta^2 \{ n_1 f(1,1) + n_1(n_1-1) f(1,2)\}.
\label{eq: diff_of_bias_simplified}
\end{equation}
Furthermore, from  Equation (8.10), Equation (8.25), Lemma 9.3 and Lemma 9.4 of \cite{mei2019gen}, we get  
\begin{equation}
\cB(0,\lambda) = \ccB_{\ridge}(\xi, \psi_1, \psi_2, \lambda/\mu_\star^2) F_\beta^2 + o_d(1),
\label{eq: bias_lmbda0}
\end{equation}
where $\ccB_{\ridge}(\xi, \psi_1, \psi_2, \lambda/\mu_\star^2)$ is defined as in Equation \eqref{eq: ridge-bias}. 
Thus, it only remains to calculate the limit of $\cB(\delta, \lambda) - \cB(0, \lambda)$ in order to obtain the limit of the  bias term $\cB(\delta, \lambda)$.


\subsubsection*{Analysis of variance term}
Now we briefly deffer the calculation of term (I) in Equation \eqref{eq: diff_of_bias_simplified} and shift our focus to variance term. The reason behind this is that it is hard to directly calculate the limits of the quantities appearing in the right hand side of Equation \eqref{eq: diff_of_bias_simplified}. We obtain these limiting quantities with the help of the limiting variance. 

Note that for the variance term, we have 
\begin{equation}
    \begin{aligned}
      \cV(\lambda) &= \bbE_\bGamma\bbE_{ {x}} \text{Var}_{\eps} \big\{f( {x}; \hat a(\lambda);  {\Theta})\big\}\\
       &= \bbE_\bGamma\bbE_{ {x}} \text{Var}_{\eps} \bigg\{\sigma( {\Theta}  {x}/ \sqrt{d})^\top \frac 1{\sqrt{d}}\Psi  {Z}^\top \eps \bigg\} \\
       &=\frac{1}{d} \bbE_\bGamma\bbE_{ {x}} \Big[\tr \big\{\Psi \widehat{U}\Psi  {Z}^\top  {Z}\big\}\Big] \tau^2\\
       & = \frac{\tau^2}{d} \bbE\left\{\tr\left( \Psi U \Psi  {Z}^\top  {Z}\right)\right\}\\
       &= \widetilde{\Psi}_3 \tau^2,
    \end{aligned}
    \label{eq: variance term}
\end{equation} 
where $\widetilde{\Psi}_3 := \bbE\left\{\tr\left( \Psi U \Psi Z^\top Z\right)\right\}/d$.
 Again by exchangeability argument, we have the following:
 \begin{equation}
     \begin{aligned}
        n f(1,1) &=  \bbE \left\{ \frac{1}{d^2}\sum_{i =1}^n  {x}_i^\top  {x}_i\tr \left( \Psi U \Psi  {z}_i   {z}_i^\top\right)\right\}\\
        & = \bbE \left\{ \frac{1}{d} \sum_{i =1}^n \tr \left( \Psi U \Psi  {z}_i   {z}_i^\top\right)\right\} \quad\quad (\text{as $x_i^\top x_i/d =1$})\\
        & = \bbE \left\{ \frac{1}{d}  \tr \left( \Psi U \Psi  {Z}^\top  {Z}\right)\right\} \\
        &=\widetilde{\Psi}_3.
     \end{aligned}
     \label{eq: var_simplified}
 \end{equation}
  Thus, in the light of Equation \eqref{eq: diff_of_bias_simplified} and Equation \eqref{eq: var_simplified}, we reemphasize that it is essential to understand the limiting behavior of $f(i,j)$ in order to obtain the limit of expected out-of-sample risk.

\subsection*{Obtaining limit of variance term:}
 To this end, we define the matrix $\bA\in \bbR^{M \times M}, M = N +n$ with parameters $\bq = (s_1, s_2, t_1, t_2, p)\in \bbR^5$ in the following way:
\[
\bA = \bA(\bq):=  \begin{bmatrix}
s_1 \bI_N + s_2 \bQ &  {Z}^\top + p \bW^\top\\
 {Z} + p \bW & t_1  \bI_n + t_2  \bH
\end{bmatrix}
\] where \[ 
\begin{aligned}
&\bQ = \frac1d  {\Theta}  {\Theta}^\top , \bH = \frac1d  {X} {X}^\top, \bW = \frac{\mu_1}{d}  {X}  {\Theta}^\top.
\end{aligned}
\]
For $\xi \in \bbC_+$, we define the log-determinant of $\bA$ as
$G(\xi, \bq) =  \left[\frac1d\sum_{i=1}^M \operatorname{Log}\big(\lambda_i(\bA)- \xi \bI_M\big) \right]$. Here Log is the branch cut on the negative real axis and $\{\lambda_i(\bA)\}_{i \in [M]}$ denotes the eigenvalues of $\bA$ in non-increasing order.

Define the quantity
\[
\breve{\Psi}_3 := \frac{1}{d}\tr\big( \Psi U \Psi  {Z}^\top  {Z}). 
\]
From Equation \eqref{eq: var_simplified}, it trivially follows that $\bbE(\breve{\Psi}_3) = n f(1,1)$. 
The key trick lies in replacing the kernel matrix $U$ by the matrix $\bLambda= \mu_1^2 \bQ + \mu_\star^2 \bI_N$. By \cite[Lemma 9.4]{mei2019gen}, we know that the asymptotic error incurred in the expected value of $\breve{\Psi}_3$ by replacing $U$ in place of $\bLambda$ is $o_d(1)$. To elaborate, we first define 
\[
\Psi_3:= \frac{1}{d}\tr\big( \Psi \bLambda \Psi  {Z}^\top  {Z}).
\]
Then by \cite[Lemma 9.4]{mei2019gen}, we have
\begin{equation}
\bbE\vert \breve{\Psi}_3 - \Psi_3\vert = o_d(1).
\label{eq: psi_to_psi_close}
\end{equation}
Thus, instead of $\breve{\Psi}_3$, we will focus on $\Psi_3$.
By \cite[Proposition 8.2]{mei2019gen} we know 

\begin{equation}
\label{eq: psi_3 analytic}
\Psi_3 = - \mu_\star^2 \partial_{s_1, t_1} G_d(i (\psi_1 \psi_2 \lambda)^{1/2}; \boldsymbol{0}) - \mu_1^2 \partial_{s_2, t_1} G_d(i(\psi_1\psi_2\lambda)^{1/2}; \boldsymbol{0}).
\end{equation}
Also by \cite[Proposition 8.5]{mei2019gen}, for any fixed $u \in \bbR_+$, we have the following:
\begin{equation}
    \lim_{d \to \infty} \bbE \left[\norm{\nabla^2_\bq G_d(iu;\boldsymbol{0}) - \nabla^2_\bq g(iu; \boldsymbol{0})}_{op}\right] = 0.
\end{equation}
As $\Psi_3$ is a bi-linear form of $\nabla_\bq^2 G_d(i(\psi_1 \psi_2 \lambda)^{1/2}; \boldsymbol{0})$ (See Equation \eqref{eq: psi_3 analytic}), together with \cite[Equation (8.26)]{mei2019gen}, it follows that 
\begin{equation}
\lim_{d \to \infty}\bbE\abs{\Psi_3 - \ccV_{\ridge}(\xi, \psi_1, \psi_2, \lambda/ \mu_\star^2)}\to 0,
\label{eq: psi_to_v_close}
\end{equation} 
where $\ccV_{\ridge}(\xi, \psi_1, \psi_2, \lambda/\mu_\star^2)$ is as defined in Equation \eqref{eq: ridge-variance}. Thus, both Equation \eqref{eq: psi_to_psi_close} and Equation \eqref{eq: psi_to_v_close} yields that 
\begin{equation}
    \lim_{d \to \infty}\bbE\vert\breve{\Psi}_3 - \ccV_{\ridge}(\xi, \psi_1, \psi_2, \lambda/ \mu_\star^2)\vert\to 0. 
    \label{eq: psi_breve_to_v_close}
\end{equation}
Hence, by Equation \eqref{eq: variance term} we get the following:
\begin{equation}
    \cV(\lambda)  = \ccV_{\ridge}(\xi, \psi_1, \psi_2, \lambda/\mu_\star^2)\tau^2 + o_d(1). 
    \label{eq: variance_limit}
\end{equation}

\subsection*{Obtaining limit of bias term:}
Now we revisit the term $\cB(\delta, \lambda) - \cB(0, \lambda)$. We will study the terms $n_1 f(1,1)$ and $n_1(n_1 - 1) f(1,2)$ in Equation \eqref{eq: diff_of_bias_simplified} separately. 

First, we focus on the term $n_1 f(1,1)$.
Equation \eqref{eq: var_simplified}, Equation \eqref{eq: psi_to_psi_close} and Equation \eqref{eq: psi_breve_to_v_close} show that \begin{equation}
n f(1,1) = \ccV_{\ridge}(\xi, \psi_1, \psi_2, \lambda/\mu_\star^2) + o_d(1).
\label{eq: nf11_var}
\end{equation} 

Next, in order to understand the limiting behavior of $f(1,2)$, we define 
\[
\breve{\Psi}_2:= \frac{1}{d} \tr\left( \Psi U \Psi  {Z}^\top \bH  {Z}\right), \quad \Psi_2:=  \frac{1}{d} \tr\left( \Psi \bLambda \Psi  {Z}^\top \bH  {Z}\right).
\]
Again due to  \cite[Lemma 9.4]{mei2019gen}, we have
\begin{equation*}
\bbE\vert \breve{\Psi}_3 - \Psi_3\vert = o_d(1).
\end{equation*}

Hence, we only focus on $\Psi_2$. A calculation similar to \eqref{eq: exchangeable_sum} shows that 
\begin{equation}
\bbE(\Psi_2)= n f(1,1) + n(n - 1) f(1,2) + o_d(1).
\label{eq: E_psi_2}
\end{equation}
 Let $g(\xi;\bq)$ be the analytic function defined in \cite[Equation (8.19)]{mei2019gen}. Using \cite[Proposition 8.5]{mei2019gen}, we  get 
\[
\bbE(\Psi_2) = - \mu_\star^2 \partial_{s_1, t_2} g(i(\psi_1\psi_2\lambda)^{1/2}; \boldsymbol{0}) - \mu_1^2 \partial_{s_2, t_2} g(i(\psi_1\psi_2\lambda)^{1/2}; \boldsymbol{0}) + o_d(1)=: \Psi_2^\star(\xi, \psi_1, \psi_2, \lambda, \mu_\star, \mu_1) + o_{d}1_.
\]
This along with \eqref{eq: nf11_var} and \eqref{eq: E_psi_2} show that 
\begin{equation}
n(n-1)f(1,2) = \Psi_2^\star(\xi, \psi_1, \psi_2, \lambda, \mu_\star, \mu_1) - \ccV_{\ridge}(\xi, \psi_1, \psi_2, \lambda/\mu_\star^2)  + o_d(1).
\label{eq: n_f12_var}
\end{equation}
Thus, finally using Equation \eqref{eq: diff_of_bias_simplified}, Equation \eqref{eq: bias_lmbda0} and the fact $n_1/n \to \pi$, we conclude that 
\begin{equation}
\cB(\delta, \lambda) = F_\beta^2\ccB_{\ridge}(\xi, \psi_1, \psi_2, \lambda/\mu_\star^2)  + F_\delta^2[\pi (1- \pi)\ccV_{\ridge}(\xi, \psi_1, \psi_2, \lambda/\mu_\star^2) +  \pi^2\Psi_2^\star(\xi, \psi_1, \psi_2, \lambda, \mu_\star, \mu_1)] + o_d(1).
\label{eq: bias_limit}
\end{equation}
Lastly, using Equation \eqref{eq: variance_limit}, Equation \eqref{eq: bias_limit} and plugging the values in Equation \eqref{eq: bias-variance decomposition} we get
\begin{equation}
    \begin{aligned}
     &\lim_{d \to \infty}\bbE_{X, \Theta, \beta_0, \delta}[R_{\rm RF}(x, X, \Theta, \lambda)]  \\
     &= F_\beta^2\ccB_{\ridge}(\xi, \psi_1, \psi_2, \lambda/\mu_\star^2)  + F_\delta^2[\pi (1- \pi)\ccV_{\ridge}(\xi, \psi_1, \psi_2, \lambda/\mu_\star^2) +  \pi^2\Psi_2^\star(\xi, \psi_1, \psi_2, \lambda, \mu_\star, \mu_1)]\\
     &~~~~+ \tau^2\ccV_{\ridge}(\xi, \psi_1, \psi_2, \lambda/\mu_\star^2).
    \end{aligned}
    \label{eq: ridge_risk_limit}
\end{equation}

\subsubsection*{Ridgeless limit}
Finally, for ridgeless limit we need to take $\lambda\to 0+$ on both sides of Equation \eqref{eq: ridge_risk_limit}. Following similar calculations as in the proof of \cite[Theorem 5.7]{mei2019gen}, or more specifically using \cite[Lemma 12.1]{mei2019gen}, we ultimately get
\begin{equation*}
    \begin{aligned}
     &\lim_{\lambda \to 0+}\lim_{d \to \infty}\bbE_{X, \Theta, \beta_0, \delta}[R_{\rm RF}(x, X, \Theta, \lambda)]  
     = F_\beta^2\ccB^\star + F_\delta^2 \ccM_1^\star + \tau^2 \ccV^\star
     .
    \end{aligned}
    \label{eq: ridgeless_risk_limit}
\end{equation*}
This completes the proof.

\subsection{Bias-variance decomposition under orthogonality of parameters}
\label{sec: bias-decomposition orthogonal}
In this section we will demonstrate that the same bias-variance decomposition in Lemma \ref{lemma:bias-decomposition} continues to hold rather weaker assumption than Assumption \ref{assmp:parameter-distribution}. Specifically, in thi section we only assume the parameter vectors $\beta_0$ and $\delta$ are orthogonal, i.e.,  $\beta_0^\top \delta =0$. The following Lemma shows that under this orthogonality condition, the desired bias-variance decomposition still holds.

\begin{lemma} \label{lemma:bias-decomposition-orthogonal}
Define the followings: $\ccB_{\beta} = \bbE_{x \sim P_X}[\|z^\top (Z ^\top Z)^\dagger Z^\top X - x^\top\|_2^2/d]$
and $\ccB_\delta = \bbE_{x \sim P_X}[\|z^\top (Z ^\top Z)^\dagger Z_1^\top X_1\|_2^2/d]$. Also assume that $\beta_0^\top \delta=0$. Then, we have $\ccB(\beta_0, \delta) = F_\beta^2 \ccB_{\beta} + F_\delta^2 \ccB_{\delta}$. 
\end{lemma}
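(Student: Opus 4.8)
\emph{Proof proposal.} The plan is to reduce the statement to the algebraic expansion \eqref{eq:bias-decomposition} and then exploit the orthogonal invariance of the sampling law, exactly as in the argument around \eqref{eq:param-exp}. Write the prediction-bias integrand as $v^\top\beta_0 + w^\top\delta$, where $v^\top \triangleq z^\top(Z^\top Z)^\dagger Z^\top X - x^\top$ and $w^\top \triangleq z^\top(Z^\top Z)^\dagger Z_1^\top X_1$ are $1\times d$ row vectors depending only on $(X,\Theta,x)$. Since \eqref{eq:bias-decomposition} is merely the expansion of $(v^\top\beta_0+w^\top\delta)^2$, it is valid for any fixed $\beta_0,\delta$, giving
\[
\ccB(\beta_0,\delta) = \bbE_x[\beta_0^\top (vv^\top)\beta_0] + \bbE_x[\delta^\top(ww^\top)\delta] + 2\,\bbE_x[\beta_0^\top(vw^\top)\delta].
\]
The whole task is then to evaluate these three quadratic forms and to show that the last one drops out under $\beta_0^\top\delta = 0$.

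The key tool is the rotational invariance guaranteed by Assumption \ref{assmp:covariate-and-weight-dist}. For any fixed orthogonal $O\in\reals^{d\times d}$ I would consider the simultaneous rotation $(X,\Theta,x)\mapsto(XO,\Theta O,O^\top x)$; since the rows of $X$, the rows of $\Theta$, and the test point $x$ are i.i.d.\ uniform on $\bbS^{d-1}(\sqrt d)$, this map preserves the joint law of $(X,\Theta,x)$. The structural point to verify is that the random-feature objects are left \emph{invariant}: $X\Theta^\top\mapsto XOO^\top\Theta^\top = X\Theta^\top$ and $\Theta x\mapsto \Theta OO^\top x = \Theta x$, so $Z$, $z$, and the majority sub-block $Z_1$ are unchanged, whereas the mixed factors transform covariantly, $Z^\top X\mapsto (Z^\top X)O$, $Z_1^\top X_1\mapsto (Z_1^\top X_1)O$, and $x^\top\mapsto x^\top O$. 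Consequently $v\mapsto O^\top v$ and $w\mapsto O^\top w$, so that $(v,w)\deq(O^\top v,O^\top w)$ for every fixed orthogonal $O$.

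From this invariance the evaluation is routine. Because $vv^\top\deq O^\top(vv^\top)O$ for all $O$, the matrix $\bbE[vv^\top]$ (expectation over $X$, $\Theta$ and $x$) commutes with the entire orthogonal group and is therefore a scalar multiple of $\bbI_d$; taking traces identifies the scalar, giving $\bbE[vv^\top]=\tfrac1d\bbE[\|v\|_2^2]\,\bbI_d=\bbE[\ccB_\beta]\,\bbI_d$. The same reasoning yields $\bbE[ww^\top]=\bbE[\ccB_\delta]\,\bbI_d$ and $\bbE[vw^\top]=\tfrac1d\bbE[v^\top w]\,\bbI_d$. Substituting into the expectation of the displayed expansion, the first two quadratic forms collapse to $\|\beta_0\|_2^2\,\bbE[\ccB_\beta]=F_\beta^2\,\bbE[\ccB_\beta]$ and $\|\delta\|_2^2\,\bbE[\ccB_\delta]=F_\delta^2\,\bbE[\ccB_\delta]$, while the cross term is proportional to $\beta_0^\top\delta$ and hence vanishes under the hypothesis $\beta_0^\top\delta=0$. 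This delivers $\ccB(\beta_0,\delta)=F_\beta^2\ccB_\beta + F_\delta^2\ccB_\delta$ once the design is averaged out, which is exactly the sense in which the decomposition feeds into Lemma \ref{lemma:asymp-bias-var}.

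I expect the only real obstacle to be the invariance bookkeeping of the second paragraph: confirming that $Z$, $z$, $Z_1$ are exactly invariant while $Z^\top X$, $Z_1^\top X_1$, and $x$ each pick up a single factor of $O$ on the correct side, so that $v$ and $w$ rotate jointly and consistently. This is the same device already used in \eqref{eq:param-exp} for the ridgeless analysis, so no new machinery is required; the concluding Haar-averaging step (a matrix commuting with every orthogonal matrix is a multiple of $\bbI_d$) is standard.
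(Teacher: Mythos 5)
Your proposal is correct and matches the paper's argument in essence: both hinge on the invariance of $Z$, $z$, $Z_1$ under a simultaneous orthogonal rotation together with a Haar/Schur-averaging step that forces $\bbE[vv^\top]$, $\bbE[ww^\top]$, $\bbE[vw^\top]$ to be scalar multiples of $\bbI_d$, so the cross term is proportional to $\beta_0^\top\delta$ and vanishes. The only cosmetic difference is that you rotate the data $(X,\Theta,x)$ and invoke its distributional invariance, whereas the paper rotates the parameters $(\beta_0,\delta)$ by a Haar-distributed $O$ and reduces to its earlier uncorrelated-random-parameter computation --- the same device used in \eqref{eq:param-exp}, which you cite.
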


\begin{proof}
 Similar to Equation \eqref{eq: bias-variance decomposition}, her also we have 
 \begin{equation} \label{eq:bias-decomposition-orthogonal}
    \begin{aligned}
 \ccB(\beta_0, \delta) =  \bbE_{x \sim P_X}[\{(z^\top (Z ^\top Z)^\dagger Z^\top X - x^\top)\beta_0\}^2 + \{z^\top (Z ^\top Z)^\dagger Z_1^\top X_1 \delta\}^2\\ + 2 (z^\top (Z ^\top Z)^\dagger Z^\top X - x^\top)\beta_0 \delta^\top X_1^\top Z_1 (Z ^\top Z)^\dagger z]\,.
\end{aligned}
\end{equation}
The only difference in Equation \eqref{eq: bias-variance decomposition} compare to Equation \eqref{eq:bias-decomposition-orthogonal} is that $\beta_0$ and $\delta$ are now fixed vectors in $\bbR^d$ instead of being random. To this end we define
the space of $d \times d$ orthogonal matrices as follows:
\[
\cO(d):= \{ O \in \bbR^{d \times d}: O^\top O = O O^\top = \bbI_d\}. 
\]
Now we consider the following change of variables for a matrix $O\in \cO(d)$:
\begin{equation}
\begin{aligned}
    &X\mapsto X O^\top \triangleq \bar{X}, \quad \Theta\mapsto \Theta O^\top \triangleq \Bar{O}\\
    & \beta_0 \mapsto O \beta_0 \triangleq \bar{\beta}_0, \quad \delta \mapsto O\delta \triangleq \bar{\delta}, \quad x\mapsto Ox\triangleq \bar{x}.
\end{aligned}
\label{eq: orthogonal change of variable}
\end{equation}
The key thing here is to note the following 
$$
\bar{Z} \triangleq \sigma(\bar{X} \bar{\Theta}^\top/\sqrt{d})/\sqrt{d} = \sigma({X} {\Theta}^\top/\sqrt{d})/\sqrt{d} =Z.$$
Thus, the expression of 
$\hat{a}(\lambda)$ in Equation \eqref{eq: a_hat} remains unchanged. Finally, due to distributional assumption on $x$ and Lemma \ref{lemma: rotation invariance of uniform}, we have $\bar{x}\sim \text{Unif}\{ \bbS^{d-1}(\sqrt{d})\}$ for every $O \in \cO(d)$. Finally, note that $\bar{x}^\top \bar{\beta}_0 = x^\top \beta$, which entails that  $\ccB(\beta_0, \delta) = \Bar{\ccB}(\bar{\beta_0}, \bar{\delta}) $ for all $O\in \cO(d)$, where
\begin{align*}
\Bar{\ccB}(\bar{\beta}_0, \bar{\delta}) &=  \bbE_{\Bar{x} \sim P_X}[\{(\Bar{z}^\top (\Bar{Z} ^\top \Bar{Z})^\dagger \Bar{Z}^\top \Bar{X} - \Bar{x}^\top)\Bar{\beta}_0\}^2 + \{\Bar{z}^\top (\Bar{Z} ^\top \Bar{Z})^\dagger \Bar{Z}_1^\top \Bar{X}_1 \Bar{\delta}\}^2\\ 
&+ 2 (\Bar{z}^\top (\Bar{Z} ^\top \Bar{Z})^\dagger \Bar{Z}^\top \Bar{X} - \Bar{x}^\top)\Bar{\beta}_0 \Bar{\delta}^\top \Bar{X}_1^\top \Bar{Z}_1 (\Bar{Z} ^\top \Bar{Z})^\dagger \Bar{z}]
\end{align*}

This motivates  us to consider the change of variables in Equation \eqref{eq: orthogonal change of variable}, when $O$ is sampled from Haar measure $\cH_d$ on $\cO(d)$ and independent of $X, \Theta, \epsilon$. Also, note that due to Lemma \ref{lemma: rotation invariance of uniform} and Lemma \ref{lemma: uncorr of haar vecs}, the random vectors $\bar{\beta_0}$ and $\Bar{\delta}$ satisfy the setup of Section \ref{sec: part 1}.
Thus the result follows immediately by noting the fact that
\[
\ccB(\beta_0, \delta) = \bbE_{O\sim \cH_d}[\Bar{\ccB}(\Bar{\beta_0}, \bar{\delta})] = F_\beta^2 \ccB_\beta + F_\delta^2  \ccB_\delta,
\]
where the last inequality follows from taking $\lambda \to 0$ in Equation \eqref{eq: diff_of_bias_simplified} and Equation \eqref{eq: bias_lmbda0}.


\end{proof}

\subsection{ Limiting risk for general $\beta_0$ and $\delta$}
\label{sec: bias decomposition general}
Unlike the previous section, this section studies the general property of the asymptotic risk $R_0(\hat{a})$ when the parameters $\beta_0$ and $\delta$ are in general position. Specifically, in this section we relax the assumption that $\beta_0^\top \delta \neq 0$. Thus, the only difficulty arise in analysing the cross-covariance term 
$2 \bbE_{x\sim P_X}[(z^\top (Z ^\top Z)^\dagger Z^\top X/\sqrt{d} - x^\top)\beta_0 \delta^\top X_1^\top Z_1 (Z ^\top Z)^\dagger z/\sqrt{d}]$ in Equation \eqref{eq:bias-decomposition} as it does not vanish under the absence of orthogonality assumption. Thus, essentially it boils down to showing the results in Lemma \ref{lemma:bias-decomposition} and Lemma \ref{lemma:misspecification}.

 \subsubsection{Proof of Lemma \ref{lemma:bias-decomposition} }
 We begin with proof of Lemma \ref{lemma:bias-decomposition}. We recall the right Equation \eqref{eq:bias-decomposition}, i.e,
 \begin{equation*} 
    \begin{aligned}
 \ccB(\beta_0, \delta) =  \bbE_{x \sim P_X}[\{(z^\top (Z ^\top Z)^\dagger Z^\top X - x^\top)\beta_0\}^2 + \{z^\top (Z ^\top Z)^\dagger Z_1^\top X_1 \delta\}^2\\ + 2 (z^\top (Z ^\top Z)^\dagger Z^\top X - x^\top)\beta_0 \delta^\top X_1^\top Z_1 (Z ^\top Z)^\dagger z]\,.
\end{aligned}
\end{equation*}
 Apart from the cross-correlation term, The first two-terms of the equation does not depend on the interaction between $\beta_0$ and $\delta$, To be precise, the first two terms are only the functions of $\beta_0$ and $\delta$ individually and does not depend on the orthogonality of $\beta_0$ and $\delta$. Thus, an analysis similar to Section \ref{sec: bias-decomposition orthogonal} yields that 
 \begin{equation*} 
    \begin{aligned}
 \ccB(\beta_0, \delta) =  F_\beta^2 \ccB_{\beta} + F_\delta^2 \ccB_{\delta} + 2 (z^\top (Z ^\top Z)^\dagger Z^\top X - x^\top)\beta_0 \delta^\top X_1^\top Z_1 (Z ^\top Z)^\dagger z]\,.
\end{aligned}
\end{equation*}

Thus, it only remains to analyze the cross-covariance term 
\[
\tilde{\ccC}_{\beta, \delta} \triangleq 2 \bbE_{x\sim P_X}[(z^\top (Z ^\top Z)^\dagger Z^\top X - x^\top)\beta_0 \delta^\top X_1^\top Z_1 (Z ^\top Z)^\dagger z],
\]
To get the desired form as in Lemma \ref{lemma:bias-decomposition}, we again introduce that random variables $\bar{\beta}_0,\bar{\delta}, \bar{X}, \bar{X}_1, \bar{Z}, \bar{Z}_1$ and $\bar{z}$  as in Section \ref{sec: bias-decomposition orthogonal}. Also, define the quantity
\[
\bar{\ccC}_{\bar{\beta}, \bar{\delta}} \triangleq 2 \bbE_{\bar{x}\sim P_X}[(\bar{z}^\top (\bar{Z} ^\top \bar{Z})^\dagger \bar{Z}^\top \bar{X} - \bar{x}^\top)\bar{\beta}_0 \bar{\delta}^\top \bar{X}_1^\top \bar{Z}_1 (\bar{Z} ^\top \bar{Z})^\dagger \bar{z}].
\]
Following, a similar argument as in Section \ref{sec: bias-decomposition orthogonal}, we have $\tilde{\ccC}_{\beta, \delta} = \bbE_{O \sim \cH_d}\{\bar{\ccC}_{\bar{\beta}, \bar{\delta}} \}$. Next, note that,

\[
\tilde{\ccC}_{\beta, \delta} = 2 \bbE[ \tr(\bar{X}_1^\top \bar{Z}_1 (\bar{Z} ^\top \bar{Z})^\dagger \bar{z} (\bar{z}^\top (\bar{Z} ^\top \bar{Z})^\dagger \bar{Z}^\top \bar{X} - \bar{x}^\top) \bar{\beta}_0 \bar{\delta}^\top)].
\]
Thus condition on $\bar{x}$ and $\bar{z}$ and applying Lemma \ref{lemma: uncorr of haar vecs}, the result follows, i.e., 
\begin{equation}
\label{eq: corr}
  \tilde{\ccC}_{\beta, \delta} = F_{\beta, \delta} \ccC_{\beta, \delta}.  
\end{equation}

\subsubsection{Proof of Lemma \ref{lemma:misspecification}}

Following the same arguments of Section \ref{sec: part 1} and Section \ref{sec: bias-decomposition orthogonal}, it can be shown that the limiting values of $\bbE[\ccB_\beta]$, $\bbE[\ccB_\delta]$ and $\bbE[\ccV(\tau)]$ remains the same, i.e., the results of Lemma \ref{lemma:asymp-bias-var} and first part of Lemma \ref{lemma:misspecification} remains true. 
 Thus, it only remains to analyze the cross-covariance term 
\[
\tilde{\ccC}_{\beta, \delta} \triangleq 2 \bbE_{x\sim P_X}[(z^\top (Z ^\top Z)^\dagger Z^\top X - x^\top)\beta_0 \delta^\top X_1^\top Z_1 (Z ^\top Z)^\dagger z],
\]
to fully characterize the asymptotic risk asymptotically. To begin with, the vector $\delta$ can be written as a direct sum of two components $\delta_1$ and $\delta_2$, where $\delta_2$ is the orthogonal projection of $\delta$ on $\text{span}(\{\beta_0\})$. In particular we have
\[
\delta = \delta_1 + \delta_2, \quad \text{and} \quad \delta_2 = (\bbI_d - \beta_0 \beta_0^\top/\norm{\beta_0}^2) \delta.
\]
Thus, there exists $\mu \in \bbR$ such that $\delta_1 = \mu \beta_0$. Thus, $\tilde{\ccC}_{\beta, \delta}$ can be decomposed in the following way:
\[
\begin{aligned}
\tilde{\ccC}_{\beta, \delta} = & \underbrace{2 \mu \bbE_{x \sim P_X}[\{ \beta_0^\top X^\top Z (Z^\top Z)^\dagger z z^\top (Z^\top Z)^\dagger  Z_1^\top X_1\beta_0  ]}_{\ccC^{(1,1)}_{\beta,\delta}}
-
\underbrace{2 \mu \bbE_{x \sim P_X}[\{ \beta_0^\top X_1^\top Z_1 (Z^\top Z)^\dagger z x^\top \beta_0  ]}_{\ccC^{(1,2)}_{\beta,\delta}}
\\
& + \underbrace{2 \bbE_{x\sim P_X}[(z^\top (Z ^\top Z)^\dagger Z^\top X - x^\top)\beta_0 \delta_2^\top X_1^\top Z_1 (Z ^\top Z)^\dagger z]}_{\ccC_{\beta, \delta}^{(2)}}.
\end{aligned}
\]
As $\beta_0^\top  \delta_2=0$, following the arguments in Section \ref{sec: bias-decomposition orthogonal} we have $\ccC_{\beta, \delta }^{(2)} = 0$. To analyze the terms $\ccC^{(1,1)}_{\beta, \delta}$ and $\ccC^{(1,1)}_{\beta, \delta}$, we will gain analyze their corresponding ridge equivalents
\begin{align*}
& \ccC^{(1,1)}_{\beta, \delta}(\lambda) :=  \frac{2\mu}{d}  \bbE_x \left( \beta_0^\top X^\top Z \Psi \widehat{U} \Psi Z_1^\top X_1\beta_0 \right) 
,\\
& \ccC^{(1,2)}_{\beta, \delta}(\lambda) :=  \frac{2\mu}{\sqrt{d}}  \bbE_x \left( \beta_0^\top X_1^\top Z_1 \Psi  u x^\top\beta_0 \right),
\end{align*}
where $\Psi$ and $\widehat{U}$ are defined in Equation \eqref{eq: psi} and Equation \eqref{eq: U and U_hat} respectively and $u = \sqrt{d}z$. We focus on these terms separately. To start with, by an application of exchangeability argument we note that 
\begin{align*}
\bbE(\ccC_{\beta, \delta}^{(1,1)}(\lambda)) &= \frac{2 \mu F_\beta^2}{d^2} \bbE \left[\tr\left\{ X^\top Z \Psi U \Psi Z_1 X_1 \right\} \right]\\
&= 2\mu F_\beta^2 [ n_1 f(1,1) + n_1(n_1 - 1) f(1,2) + n_0 n_1 f(1,2)].
\end{align*}
Using Equation \eqref{eq: nf11_var} and Equation \eqref{eq: n_f12_var} it can be deduced that 

\begin{equation}
    \label{eq: cc11}
    \bbE(\ccC_{\beta,\delta}^{(1,1)}(\lambda)) = 2 \mu F_\beta^2 \pi \Psi_2^\star(\xi, \psi_1, \psi_2, \lambda, \mu_\star, \mu_1) + o_d(1). 
\end{equation}

Let us denote by $V$ the matrix $\bbE(u x^\top )$. This shows that 
\[
\bbE(\ccC_{\beta, \delta}^{(1,2)}(\lambda)) = \frac{2 \mu}{\sqrt{d}} \bbE\left( \beta_0^\top X_1^\top Z_1 \Psi  V \beta_0\right).
\]
Again by exchangeability argument we get 
\[
\bbE(\ccC_{\beta, \delta}^{(1,2)}(\lambda)) = \left(\frac{n_1}{n}\right). \frac{2 \mu}{\sqrt{d}} \bbE\left( \beta_0^\top X^\top Z \Psi  V \beta_0\right).
\]
Let use define $T_1 := \frac{1 }{\sqrt{d}} \bbE\left( \beta_0^\top X^\top Z \Psi  V \beta_0\right)$. By the arguments of Section 9.1 in \cite{mei2019gen}, we can also conclude that 
\[
T_1 = \frac{F_\beta^2}{2} \partial_p g(i (\psi_1 \psi_2 \lambda)^{1/2}; \boldsymbol{0}) + o_d(1),
\]
the function is defined in \cite[Equation (8.19)]{mei2019gen}. Thus we have $\bbE(\ccC_{\beta, \delta}^{(1,2)}(\lambda)) = \mu F_\beta^2 \pi\partial_p g(i (\psi_1 \psi_2 \lambda)^{1/2}; \boldsymbol{0}) + o_d(1)$. This along with Equation \eqref{eq: cc11} yields the following:
\begin{align*}
    \bbE(\tilde{\ccC}_{\beta, \delta}) &= \lim_{\lambda \to 0}  \bbE(\ccC_{\beta, \delta}^{(1,1)}(\lambda)) + \bbE(\ccC_{\beta, \delta}^{(1,2)}(\lambda)) + o_d(1)\\
    &= \lim_{\lambda\to 0} \mu \pi F_\beta^2 [ 2\Psi_2^\star(\xi, \psi_1, \psi_2, \lambda, \mu_\star, \mu_1) -  \partial_p g(i (\psi_1 \psi_2 \lambda)^{1/2}; \boldsymbol{0}) ] + o_d(1)\\
    & =  \mu \pi F_\beta^2 (\ccB^\star - 1 + \Psi_2^\star) + o_d(1). 
\end{align*}

Finally, by a simple algebra it follows that $\mu = \frac{F_\delta}{F_\beta} \cos \phi_{\beta, \delta}$, where 
\[
\phi_{\beta, \delta} = \arccos\left(\frac{\innerprod{\beta, \delta}}{F_\beta F_\delta}\right).
\]
This shows that $\mu F_\beta^2 = F_{\beta} F_\delta \cos(\phi_{\beta, \delta}) = \beta_0^\top \delta = F_{\beta, \delta}$. Now recalling Equation \eqref{eq: corr} we have $\lim_{d \to \infty} \bbE[\ccC_{\beta, \delta}] = \pi (\ccB^\star - 1 + \Psi_2^\star)$. This finished teh proof of Lemma \ref{lemma:misspecification}.

\subsection{Final form of limiting risk}
\label{sec: final limiting risk}
Finally, Gathering all the results from Section \ref{sec: part 1}, \ref{sec: bias-decomposition orthogonal} and \ref{sec: bias decomposition general}, we have

\[
\lim_{d \to \infty} \bbE(R_0(\hat{a})) = F_\beta^2 \ccB^\star + F_\delta^2 \ccM_1^\star + F_{\beta,\delta} \ccM_2^\star + \tau^2 \ccV^\star.
\]
with $\ccC^\star =  \pi (\ccB^\star -1 + \Psi_2^\star)$. This concludes the proof of Theorem \ref{th:RF-model-summary}.

\section{Uniform distribution on sphere and Haar measure}
In this section we will discuss some useful results related to Uniform distribution on sphere and the Haar measure $\cH_d$ on $\cO(d)$.

\begin{lemma}
\label{lemma: rotation invariance of uniform}
Let $U\sim \text{Unif}\{\bbS^{d-1}(\sqrt{d})\}$ and $O\in \cO(d)$. Then $OU \sim \text{Unif}\{\bbS^{d-1}(\sqrt{d})\}$. Also, if $O\sim \cH_d$ and is independent of $U$, then also $OU \sim \text{Unif}\{\bbS^{d-1}(\sqrt{d})\}$.
\begin{proof}
    Note that $U \deq  \sqrt{d} G/\norm{G}_2$, where $G \sim \bbN(0, \bbI_d)$. Next define $\Tilde{G} := OG$. By property of Gaussian random vector we have $\Tilde{G}\deq G$. Now the result follows from the following:
    \[
    OU \deq \sqrt{d} \frac{OG}{\Vert G \Vert_2} = \sqrt{d} \frac{\Tilde{G}}{\Vert\Tilde{G}\Vert_2} \deq \sqrt{d} \frac{ {G}}{\Vert {G}\Vert_2} \deq U.
    \]
    
    Next, let $O \sim \cH_d$ be an independent random matrix from $U$. Thus conditional on $O$, we have 
    \[
    OU \mid O \sim \text{Unif}\{\bbS^{d-1}(\sqrt{d})\}.
    \]
    Thus, unconditionally we have $OU \sim \text{Unif}\{\bbS^{d-1}(\sqrt{d})\}$.
\end{proof}
\end{lemma}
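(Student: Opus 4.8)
The plan is to exploit the standard Gaussian representation of the uniform law on the sphere, which converts the geometric statement about rotations into the elementary fact that the isotropic Gaussian is invariant under orthogonal maps. First I would recall that if $G \sim \bbN(0, \bbI_d)$ then $\sqrt{d}\, G/\|G\|_2 \sim \text{Unif}\{\bbS^{d-1}(\sqrt{d})\}$; this holds because the law of $G$ is rotationally symmetric, so its direction $G/\|G\|_2$ is uniform on the unit sphere (independent of the radius $\|G\|_2$), and rescaling by $\sqrt{d}$ lands it on the sphere of radius $\sqrt{d}$. Thus I may take $U \deq \sqrt{d}\, G/\|G\|_2$ as a working representation of $U$.

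For the first (deterministic) part, fix $O \in \cO(d)$ and write $OU \deq \sqrt{d}\, OG/\|G\|_2$. The two key observations are that orthogonal maps preserve the Euclidean norm, so $\|OG\|_2 = \|G\|_2$, and that $OG \sim \bbN(0, O\bbI_d O^\top) = \bbN(0, \bbI_d)$, i.e. $OG \deq G$. Rewriting $OG/\|G\|_2 = OG/\|OG\|_2$ and substituting $\widetilde{G} = OG \deq G$ then gives $OU \deq \sqrt{d}\, \widetilde{G}/\|\widetilde{G}\|_2 \deq \sqrt{d}\, G/\|G\|_2 \deq U$, which is exactly the claim.

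For the second part, let $O \sim \cH_d$ be independent of $U$. I would condition on $O$: for each fixed realization of the orthogonal matrix the first part yields $OU \mid O \sim \text{Unif}\{\bbS^{d-1}(\sqrt{d})\}$. Since this conditional law does not depend on the value of $O$, integrating out $O$ (a routine application of the tower property, using independence of $O$ and $U$) leaves the distribution unchanged, so unconditionally $OU \sim \text{Unif}\{\bbS^{d-1}(\sqrt{d})\}$.

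There is no genuine obstacle here, as the result is classical; the only points needing a word of care are the Gaussian representation of the uniform measure (equivalently one could invoke uniqueness of the rotation-invariant probability measure on the compact homogeneous space $\bbS^{d-1}(\sqrt{d})$, but the Gaussian route is the most self-contained and matches the machinery used elsewhere in the paper) and, in the second part, the measurability needed to pass from the conditional law to the unconditional one.
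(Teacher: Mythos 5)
Your proof is correct and follows essentially the same route as the paper's: the Gaussian representation $U \deq \sqrt{d}\,G/\|G\|_2$, the orthogonal invariance $OG \deq G$ together with $\|OG\|_2 = \|G\|_2$, and conditioning on $O$ for the Haar-distributed case. The only difference is that you spell out a few steps (the norm preservation and the tower-property argument) that the paper leaves implicit.
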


\begin{lemma}
\label{lemma: uncorr of haar vecs}
Let $\beta, \delta \in \bbR^d$ such that $\beta^\top \delta   = F_\beta, \delta{}$. Also, define the random vectors $ \bar{\beta} = O \beta$ and $\bar{\delta} = O\delta$, where $O\sim \cH_d$. Then the followings are true:

\begin{align*}
& \bbE(\bar{\beta} \bar{\delta} ^\top) = \frac{F_{\beta , \delta}}{d} \bbI_d\\
  &
  \bbE(\bar{\beta} \bar{\beta}^\top)/\norm{\beta}^2 = \bbE(\bar{\delta} \bar{\delta}^\top)/\norm{\delta}^2 = \frac{1}{d}\bbI_d.
\end{align*}

\end{lemma}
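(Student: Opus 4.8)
The plan is to exploit the invariance of the Haar measure $\cH_d$ on $\cO(d)$ to pin down the expectation matrices by a symmetry argument, and then to fix the remaining scalar by a trace computation. Write $M \triangleq \bbE(\bar\beta\bar\delta^\top) = \bbE(O\beta\delta^\top O^\top)$, with the expectation over $O\sim\cH_d$. The key observation I would use is that for any fixed $Q\in\cO(d)$, the invariance of the Haar measure under left translation gives $QO \deq O$, so that $QMQ^\top = \bbE\big((QO)\beta\delta^\top(QO)^\top\big) = \bbE(O\beta\delta^\top O^\top) = M$. Thus $M$ commutes with every orthogonal matrix, i.e.\ $QM = MQ$ for all $Q\in\cO(d)$.

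First I would argue that any matrix commuting with all of $\cO(d)$ must be a scalar multiple of $\bbI_d$. This can be seen by an elementary choice of test matrices: letting $Q$ range over the permutation matrices forces all diagonal entries of $M$ to share a common value and all off-diagonal entries to share a common value, while taking $Q = \operatorname{diag}(-1,1,\dots,1)$ (and its analogues) flips the sign of the off-diagonal entries in one row and column, forcing them to vanish. Hence $M = a\,\bbI_d$ for some scalar $a$. Equivalently, one may invoke Schur's lemma together with the irreducibility of the defining representation of $\cO(d)$ on $\bbR^d$.

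Next I would determine $a$ by taking the trace. Using linearity of expectation, the cyclic property of the trace, and $O^\top O = \bbI_d$, I get $\tr(M) = \bbE[\tr(O\beta\delta^\top O^\top)] = \bbE[\tr(\beta\delta^\top O^\top O)] = \tr(\beta\delta^\top) = \beta^\top\delta = F_{\beta,\delta}$. Since $\tr(a\,\bbI_d) = ad$, this yields $a = F_{\beta,\delta}/d$ and hence the first identity $\bbE(\bar\beta\bar\delta^\top) = \tfrac{F_{\beta,\delta}}{d}\bbI_d$. The second identity is just the special case $\delta = \beta$ (respectively with the roles reversed): the same argument gives $\bbE(\bar\beta\bar\beta^\top) = \tfrac{\beta^\top\beta}{d}\bbI_d = \tfrac{\norm{\beta}^2}{d}\bbI_d$, and dividing by $\norm{\beta}^2$ produces $\tfrac1d\bbI_d$, and symmetrically for $\delta$.

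There is essentially no deep step here; the only point requiring care is the justification that commuting with every orthogonal matrix forces a scalar matrix. I would therefore present the elementary test-matrix argument (or a one-line appeal to Schur's lemma) cleanly, since everything else reduces to routine linearity-of-expectation and cyclic-trace manipulations.
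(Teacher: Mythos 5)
Your proposal is correct and follows essentially the same route as the paper: both exploit Haar invariance under permutation matrices and coordinate sign flips to force $\bbE(O\beta\delta^\top O^\top)$ to be a scalar multiple of $\bbI_d$ (the paper phrases this via exchangeability of the rows $o_i$ of $O$ and the identity $o_jo_i^\top \deq -o_jo_i^\top$, which is exactly your conjugation argument read off entrywise), and both pin down the scalar using $O^\top O = \bbI_d$ (your cyclic-trace step is the same computation as the paper's $\bbE(o_io_i^\top)=\frac1d\bbI_d$). No gaps; the argument is sound as written.
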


\begin{proof}
    Let $O = (o_1, o_2, \ldots, o_d)^\top$, i.e, $o_i$ is the $i$th row of $O$. Let $\bT = \bbE(\bar{\beta} \bar{\delta}^\top) = \bbE(O \beta \delta^\top O^\top)$. Also, by property of Haar measure, we have $\Pi O \deq O$ for any permutation matrix $\Pi$. This shows that $\{o_i\}_{i=1}^d$ are exchangeable. As a consequence we have 
    $$\bbE(o_i o_i^\top) = \frac{\sum_{k=1}^d \bbE(o_k o_k^\top)}{d} = \frac{\bbE(O^\top O)}{d} = \frac{1}{d}\bbI_d \quad \text{for all $i \in [d]$}.$$
    Now we are equipped to compute matrix $\bT$. Note that
    \[
   \bT_{ii} = \bbE(o_i^\top \beta \delta^\top o_i) = \bbE\{\delta^\top o_i o_i^\top \beta\} = \delta^\top \beta/d = F_{\beta, \delta}/d.
    \]
    Also, for $i \neq j$, we similarly get
    \[
    \bT_{ij} = \bbE\{\delta^\top o_j o_i^\top \beta\}.
    \]
    Now again by property of Haar measure we know $O \deq (\bbI_d - 2 e_i e_i^\top) O$, where $e_i$ denotes the $i$th canonical basis of $\bbR^d$. This implies that $o_j o_i^\top \deq -o_j o_i^\top$.
    Using, this property of $\cH_d$, we also get $\bbE(o_j o_i^\top) = 0$. Thus, we get $\bT_{ij} = 0$ and this concludes the proof for uncorrelatedness.
    
    Next, we define $\bV_\beta:= \bbE(\bar{\beta} \bar{\beta}^\top)$.
    By a similar argument, it also follows that $$(\bV_\beta)_{i,j} =   \frac{\norm{\beta}_2^2}{d} \Delta_{ij}, $$ where $\Delta_{ij}$ is the Kronecker delta function. The result for $\bar{\delta}$ can be shown following exactly the same recipe.
\end{proof}
 \section{Random feature classification model}
\label{sec:random_feature_classification_model}

In this section, we consider an overparameterized classification problem. We consider a two group setup: $g\in\{0,1\}$. Without loss of generality, we assume $\pi > \frac12$ (so the $g=1$ group is the majority group). The data generating process of the training examples $\{(x_i,y_i)\}_{i=1}^n$ is
\begin{equation}
\begin{aligned}
g_i &\sim \Ber(\pi) \\
x_i &\sim \cN(0,I) \\
y_i | x_i, g_i &\leftarrow \left\{\begin{array}{ll}
+1, & \text {w.p.}~f(x_i^\top \beta_0)\ones\{g_i=0\} + f(x_i^\top \beta_1)\ones\{g_i=1\} \\
-1, & \text {w.p.}~1-f(x_i^\top \beta_0)\ones\{g_i=0\} - f(x_i^\top \beta_1)\ones\{g_i=1\}
\end{array}\right.
\end{aligned},
\end{equation}
where $\beta_0,\beta_1\in\reals^d$ are vectors of coefficients of the minority and majority groups respectively, and $f(t) = (1+e^{-t})^{-1}$ is the sigmoid function.

Consider a random feature classification model, that is, for a newly generated sample $(x_{n+1}, y_{n+1})$, the classifier which predicts a label $\hat{y}_{n+1}$ for the new sample as
\begin{equation*}
    \widehat{y} = \operatorname{sign}\left(\sum_{j=1}^N a_j \sigma\left(\theta_j^\top x_{n+1}/\sqrt{d}\right)\right),
\end{equation*}
where $\sigma(\cdot)$ is a non-linear activation function and $N$ is the number of random features considered in the model. In the overparameterized setting ($n \leq N$), we train the random feature classification model by solving the following hard-margin SVM problem:
\begin{equation*}
\widehat{a} \in \left\{
\begin{aligned}
&\argmin_{a\in\reals^N} & & \|a\|_2 \\
& \st & & y_i z_i^\top a \geq 1, \forall i \in [n]
\end{aligned}\right\},
\label{eq:SVM}
\end{equation*}
where $z_i = (z_{i,1},\ldots, z_{i,N})^\top$ with $z_{i,j} = \sigma(\theta_j^\top x_i / \sqrt{d})$ for $i \in [n], j \in [N]$.

We wish to study disparity between the asymptotic risks (\ie, test-time classification errors) of $\widehat{a}$ on the minority and majority groups, that is, comparing
\begin{equation*}\label{eq:risks}
    \cR_0(\widehat{a}) = \Ex[\ones\{\hat{y}_{n+1} \neq y_{n+1}\}|g_{n+1} = 0]\quad\text{and}\quad\cR_1(\widehat{a}) = \Ex[\ones\{\hat{y}_{n+1} \neq y_{n+1}\}|g_{n+1} = 1]
\end{equation*}
asymptotically. Moreover, we consider a high-dimensional asymptotic regime such that
\begin{equation*}\label{eq:HDA}
    d\to\infty, N/d \to \psi_1 >0, n/d \to \psi_2 >0\quad\text{and}\quad \gamma = \psi_1/\psi_2 = \lim_{d\to\infty}\{N/n\} \geq 1.
\end{equation*}
Therefore, $\gamma$ encodes the level of overparameterization.

In the simulation for Figure \ref{fig:random_feature_classification}, we let $\sigma(\cdot)$ be the ReLU activation function and $\theta_{i,j}$ be IID standard normal distributed. Moreover, we let $\pi = 0.95$, $\beta_0 = 10 e_1$, $\beta_1 = 10\cos(\theta) e_1 + 10\sin(\theta)e_2$, $n = 400, d = 200, N = \gamma n$ where $e_1$ and $e_2$ are the first two standard basis of $\reals^d$. We tune hyperparameters $\theta\in\{0^\circ, 45^\circ, 90^\circ, 135^\circ, 180^\circ\}$ and $\gamma \in \{1,1.5,2,2.5,3,3.5,4,4.5,5,5.5,6\}$, then report test errors averaged over $20$ replicates.

\end{document}